\newcommand*{\addFileDependency}[1]{
	\typeout{(#1)}
	\@addtofilelist{#1}
	\IfFileExists{#1}{}{\typeout{No file #1.}}
}
\newcommand*{\myexternaldocument}[1]{%
	\externaldocument{#1}%
	\addFileDependency{#1.tex}%
	\addFileDependency{#1.aux}%
}
\newcounter{algsubstate}
\DeclareMathOperator*{\argmax}{arg\,max}
\DeclareMathOperator*{\argmin}{arg\,min}
\def\REVISION#1{\textcolor[RGB]{0,0,0}{#1}}
\newtheorem{theorem}{Theorem}[section]
\newtheorem*{theorem*}{Theorem}
\newcommand{\appref}[1]{\mbox{Appendix~\ref{#1}}}
\newcommand{\figref}[1]{\mbox{Fig.~\ref{#1}}}
\renewcommand{\eqref}[1]{\mbox{Eq.~(\ref{#1})}}
\newcommand{\tabref}[1]{\mbox{Table~\ref{#1}}}
\newcommand{\tab}{{Table}\@\xspace}
\newcolumntype{L}[1]{>{\raggedright\arraybackslash}p{#1}}
\newcolumntype{C}[1]{>{\centering\arraybackslash}p{#1}}
\newcolumntype{R}[1]{>{\raggedleft\arraybackslash}p{#1}}
\newcommand{\etal}{et al.\@\xspace}
\newcommand{\ie}{{i.e.,}\@\xspace}
\newcommand{\eg}{{e.g.,}\@\xspace}
\def\algbackskip{\hskip-\ALG@thistlm}
\title{Learning to Predict Trustworthiness with \\Steep Slope Loss}
\author{
    Yan~Luo$^\dagger$, \ Yongkang~Wong$^\ddagger$, \ Mohan~S~Kankanhalli$^\ddagger$, \ Qi~Zhao$^\dagger$ \\
    \vspace{-2ex}\\
    $^\dagger$ Department of Computer Science \& Engineering, University of Minnesota\\
    $\ddagger$ School of Computing, National University of Singapore \\
    \vspace{-2ex}\\
    {\small \texttt{luoxx648@umn.edu}, \texttt{yongkang.wong@nus.edu.sg}, \texttt{mohan@comp.nus.edu.sg}, \texttt{qzhao@cs.umn.edu} }
}
\begin{document}

\maketitle


\begin{abstract}

Understanding the trustworthiness of a prediction yielded by a classifier is critical for the safe and effective use of AI models.
Prior efforts have been proven to be reliable on small-scale datasets.
In this work, we study the problem of predicting trustworthiness on real-world large-scale datasets, where the task is more challenging due to high-dimensional features, diverse visual concepts, and a large number of samples.
In such a setting, we observe that the trustworthiness predictors trained with prior-art loss functions, \ie the cross entropy loss, focal loss, and true class probability confidence loss, are prone to view both correct predictions and incorrect predictions to be trustworthy.
The reasons are two-fold.
Firstly, correct predictions are generally dominant over incorrect predictions.
Secondly, due to the data complexity, it is challenging to differentiate the incorrect predictions from the correct ones on real-world large-scale datasets.
To improve the generalizability of trustworthiness predictors, we propose a novel \textit{steep slope loss} to separate the features w.r.t. correct predictions from the ones w.r.t. incorrect predictions by two slide-like curves that oppose each other.
The proposed loss is evaluated with two representative deep learning models, \ie Vision Transformer and ResNet, as trustworthiness predictors.
We conduct comprehensive experiments and analyses on ImageNet, which show that the proposed loss effectively improves the generalizability of trustworthiness predictors.
The code and pre-trained trustworthiness predictors for reproducibility are available at \url{https://github.com/luoyan407/predict_trustworthiness}.

\end{abstract}

\section{Introduction}

Classification is a ubiquitous learning problem that categorizes objects according to input features.
It is widely used in a range of applications, such as robotics \cite{Leidner_IROS_2015}, environment exploration \cite{Cate_SEG_2017}, medical diagnosis \cite{Sanz_ASC_2014}, etc.
In spite of the successful development of deep learning methods in recent decades, high-performance classifiers would still have a chance to make mistakes due to the improvability of models and the complexity of real-world data \cite{Krizhevsky_NIPS_2012,He_CVPR_2016,Tan_ICML_2019,Dosovitskiy_ICLR_2021}.

To assess whether the prediction yielded by a classifier can be trusted or not, there are growing efforts towards learning to predict trustworthiness \cite{Jiang_NIPS_2018,Corbiere_NIPS_2019}.
These methods are evaluated on small-scale datasets, \eg MNIST \cite{Lecun_IEEE_1998}, where the data is relatively simple and existing classifiers have achieved high accuracy ($>99\%$).
As a result, there are a dominant proportion of correct predictions and the trustworthiness predictors are prone to classify incorrect predictions as trustworthy predictions.
The characteristics that the simple data is easy-to-classify aggravate the situation.
To further understand the prowess of predicting trustworthiness, we study this problem on the real-world large-scale datasets, \ie ImageNet \cite{Deng_CVPR_2009}.
This is a challenging theme for classification in terms of boundary complexity, class ambiguity, and feature dimensionality \cite{Basu_Springer_2006}.
As a result, failed predictions are inevitable. 

A general illustration of predicting trustworthiness \cite{Corbiere_NIPS_2019,Hendrycks_ICLR_2017} is shown in \figref{fig:teaser_1}.
The trustworthiness predictor \cite{Hendrycks_ICLR_2017} that is based on the maximum confidence have been proven to be unreliable \cite{Jiang_NIPS_2018,Provost_ICML_1998,Goodfellow_ICLR_2015,Nguyen_CVPR_2015}.
Instead, Corbiere \etal \cite{Corbiere_NIPS_2019} propose the true class probability (TCP) that uses the confidence w.r.t. the ground-truth class to determine whether to trust the classifier's prediction or not.
Nevertheless, the classification confidence is sensitive to the data.
As shown in \figref{fig:teaser_2}, TCP predicts that all the incorrect predictions (0.9\% in predictions) are trustworthy on MNIST \cite{Lecun_IEEE_1998} and predicts that all the incorrect predictions ($\sim$16\% in predictions) are trustworthy on ImageNet \cite{Deng_CVPR_2009}.


To comprehensively understand this problem, we follow the learning scheme used in \cite{Corbiere_NIPS_2019} and use two state-of-the-art backbones, \ie ViT \cite{Dosovitskiy_ICLR_2021} and ResNet \cite{He_CVPR_2016}, as the trustworthiness predictors.
For simplicity, we call the ``trustworthiness predictor" an \emph{oracle}.
We find that the oracles trained with cross entropy loss \cite{Murphy_Book_2012}, focal loss \cite{Lin_ICCV_2017}, and TCP confidence loss \cite{Corbiere_NIPS_2019} on ImageNet are prone to overfit the training samples, \ie the true positive rate (TPR) is close to 100\% while the true negative rate (TNR) is close to 0\%.
To improve the generalizability of oracles, we propose a novel loss function named the steep slope loss.
The proposed steep slope loss consists of two slide-like curves that cross with each other and face in the opposite direction to separate the features w.r.t. trustworthy and untrustworthy predictions.
It is tractable to control the slopes by indicating the heights of slides.
In this way, the proposed loss is able to be flexible and effective to push the features w.r.t. correct and incorrect predictions to the well-classified regions.


\begin{figure}[!t]
	\centering
	\subfloat[]{\includegraphics[width=0.42\textwidth]{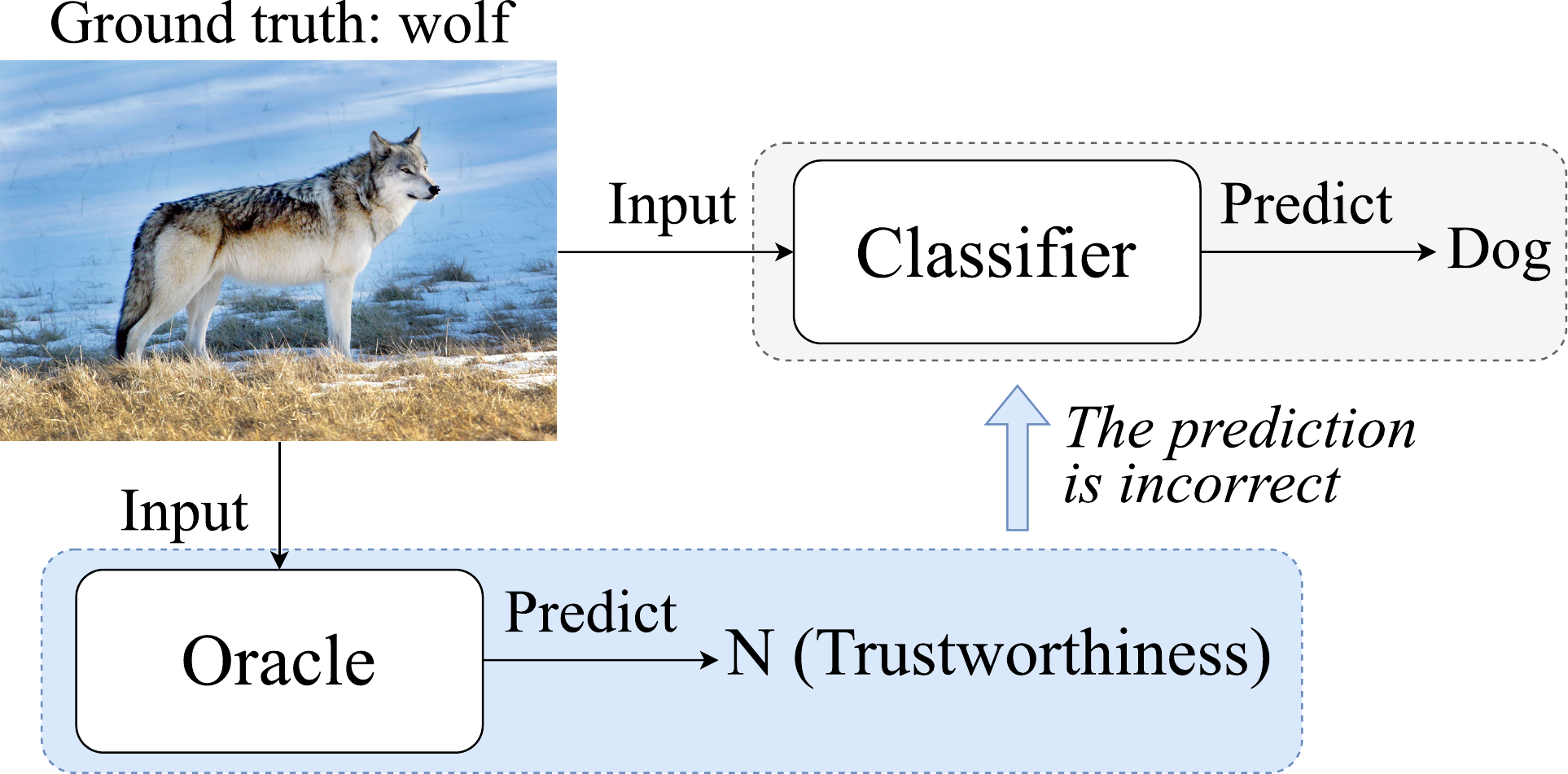}  \label{fig:teaser_1}} \hfill
	\subfloat[]{\includegraphics[width=0.52\textwidth]{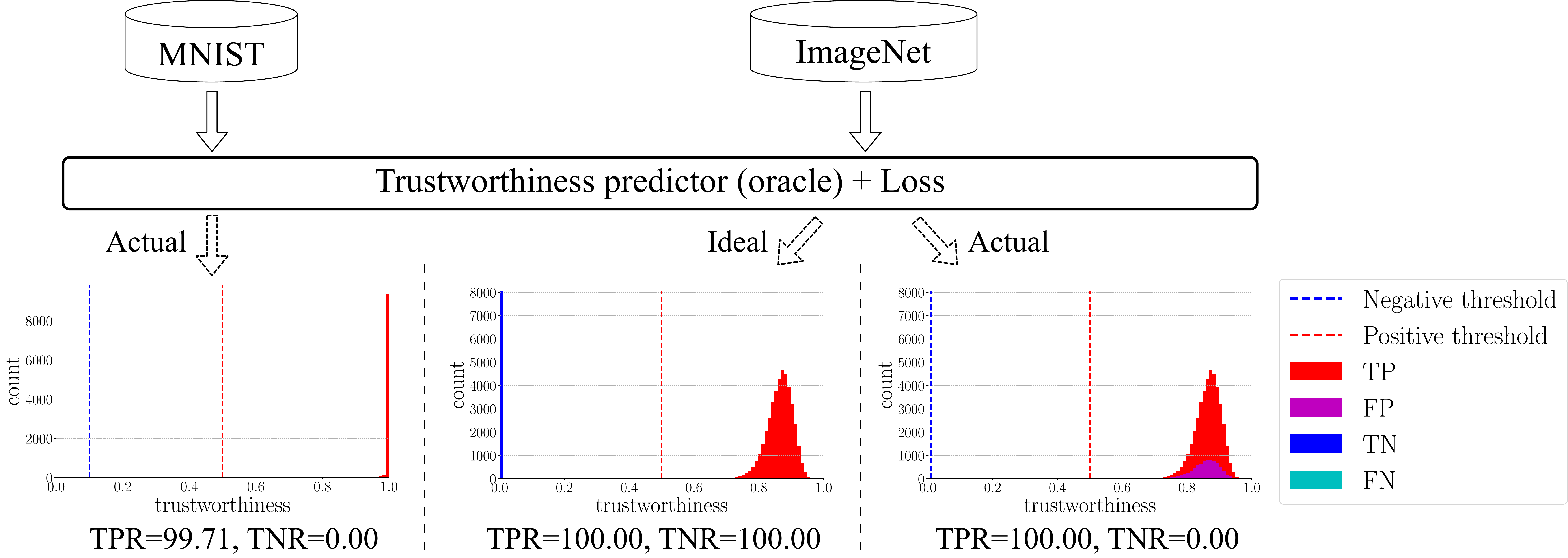} \label{fig:teaser_2} }
	\caption{\label{fig:teaser}
    	Conceptual illustrations of trustworthiness prediction. 
    	(a) shows the process of predicting trustworthiness where the oracle is the trustworthiness predictor. 
    	The illustration in (b) shows that the task is challenging on ImageNet, where TCP's confidence loss \cite{Corbiere_NIPS_2019} is used in this example.
    	The confidence that is greater (lower) than the positive (negative) threshold would be classified as a trustworthy (untrustworthy) prediction.
    	Usually, both the positive threshold and the negative threshold are 0.5, but the negative threshold is $\frac{1}{\text{\# of classes}}$ in the case of TCP. 
    	}
\end{figure}

Predicting trustworthiness is similar to as well as different from conventional classification tasks.
On one hand, predicting trustworthiness can be formulated as a binary classification problem.
On the other hand, task-specific semantics are different between the classification task and predicting trustworthiness.
The classes are referred to visual concepts, such as dog, cat, etc., in the classification task, while the ones in predicting trustworthiness are abstract concepts. 
The trustworthiness could work on top of the classes in the classification task.
In other words, the classes in the classification task are specific and closed-form, while trustworthiness is open-form and is related to the classes in the classification task.

The contribution of this work can be summarized as follows.
\begin{itemize}
    \item We study the problem of predicting trustworthiness with widely-used classifiers on ImageNet. Specifically, we observe that a major challenge of this learning task is that the cross entropy loss, focal loss, and TCP loss are prone to overfit the training samples, where correct predictions are dominant over incorrect predictions.
    \item We propose the steep slope loss function that improves the generalizability of trustworthiness predictors.
    We conduct comprehensive experiments and analyses, such as performance on both small-scale and large-scale datasets, analysis of distributions separability, comparison to the class-balanced loss, etc., which verify the efficacy of the proposed loss.
    \item To further explore the practicality of the proposed loss, we train the oracle on the ImageNet training set and evaluate it on two variants of ImageNet validation set, \ie the stylized validation set and the adversarial validation set.
    The two variants' domains are quite different from the domain of the training set.
    We find that the learned oracle is able to consistently differentiate the trustworthy predictions from the untrustworthy predictions.
\end{itemize}

\section{Preliminaries}

In this section, we first recap how a deep learning model learns in the image classification task. Then, we show how the task of predicting trustworthiness connects to the classification task.

\noindent\textbf{Supervised Learning for Classification}. In classification tasks, given a training sample, \ie image $\bm{x} \in \mathop{\mathbb{R}}^{m}$ and corresponding ground-truth label $y \in \mathcal{Y}=\{1,\ldots,K\}$, we assume that samples are drawn i.i.d. from an underlying distribution.  The goal of the learning task is to learn to find a classifier $f^{(cls)}(\cdot; \theta')$ with training samples for classification. $\theta'$ is the set of parameters of the classifier. Let $f^{(cls)}_{\theta'}(\cdot) = f^{(cls)}(\cdot; \theta')$. The optimization problem is defined as
\begin{align}
	f^{*(cls)}_{\theta'} = \argmin_{f^{(cls)}_{\theta'}} \hat{\mathcal{R}}(f^{(cls)}_{\theta'}, \ell^{(cls)}, D_{tr}),
\label{eqn:risk}
\end{align}
where $f^{*(cls)}_{\theta'}$ is the learned classifier, $\hat{\mathcal{R}}$ is the empirical risk, $\ell^{(cls)}$ is a loss function for classification, and $D_{tr}$ is the set of training samples. 

\noindent\textbf{Supervised Learning for Predicting Trustworthiness}. In contrast to the learning task for classification, which is usually a multi-class single-label classification task \cite{Krizhevsky_NIPS_2012,He_CVPR_2016,Tan_ICML_2019,Dosovitskiy_ICLR_2021}, learning to predict trustworthiness is a binary classification problem, where the two classes are positive (\ie trustworthy) or negative (\ie untrustworthy). Similar to \cite{Corbiere_NIPS_2019}, given a pair $(\bm{x},y)$ and a classifier $f^{(cls)}_{\theta'}$, we define the ground-truth label $o$ for predicting trustworthiness as
\begin{align}
o = 
\begin{dcases}
    1, & \text{if } \argmax f^{(cls)}_{\theta'}(\bm{x}) = y \\
    0, & \text{otherwise}
\end{dcases}
\label{eqn:def_trustworthy}
\end{align}
In other words, the classifier correctly predicts the image's label so the prediction is trustworthy in hindsight, otherwise the prediction is untrustworthy.

The learning task for predicting trustworthiness follows a similar learning framework in the classification task. Let $f_{\theta}(\cdot)$ be an oracle (\ie a trustworthiness predictor). A generic loss function $\ell: \mathop{\mathbb{R}}^{m}\times \mathop{\mathbb{R}} \rightarrow \mathop{\mathbb{R}}_{\ge 0}$, where $\mathop{\mathbb{R}}_{\ge 0}$ is a non-negative space and $m$ is the number of classes. Given training samples $(\bm{x},y)\in D_{tr}$, the optimization problem for predicting trustworthiness is defined as
\begin{align}
	f^{*}_{\theta} = \argmin_{f_{\theta}}  \frac{1}{|D_{tr}|}\sum_{i=1}^{|D_{tr}|} \ell(f_{\theta}(\bm{x}_{i}), o_{i}),
\label{eqn:pt_optimization}
\end{align}
where $|D_{tr}|$ is the cardinality of $D_{tr}$.


Particularly, we consider two widely-used loss functions for classification and the loss function used for training trustworthiness predictors as baselines. They are the cross entropy loss \cite{Murphy_Book_2012}, focal loss \cite{Lin_ICCV_2017}, and TCP confidence loss \cite{Corbiere_NIPS_2019}.
Let $p(o=1|\theta,\bm{x})=1/(1+\exp(-z))$ be the trustworthiness confidence, where $z\in \mathop{\mathbb{R}}$ is the descriminative feature produced by the oracle, \ie $z=f_{\theta}(\bm{x})$.
The three loss functions can be written as
\begin{align}
	\ell_{CE} (f_{\theta}(\bm{x}), o) &= -o\cdot \log p(o=1|\theta,\bm{x})-(1-o)\cdot \log ( 1-p(o=1|\theta,\bm{x}) ), \label{eqn:loss_ce} \\
	\begin{split}
    \ell_{Focal} (f_{\theta}(\bm{x}), o) &=  -o\cdot ( 1-p(o=1|\theta,\bm{x}) )^{\gamma} \log p(o=1|\theta,\bm{x}) - \\
    & \quad (1-o)\cdot (p(o=1|\theta,\bm{x}))^{\gamma} \log ( 1-p(o=1|\theta,\bm{x}) ), \label{eqn:loss_focal}
   \end{split} \\
	\ell_{TCP} (f_{\theta}(\bm{x}), y) &=  (f_{\theta}(\bm{x}) - p(\hat{y}=y|\theta',\bm{x}))^{2}. \label{eqn:loss_tcp}
\end{align}
In the focal loss, $\gamma$ is a hyperparameter. In the TCP confidence loss, $\hat{y}$ is the predicted label and $p(\hat{y}=y|\theta',\bm{x})$ is the classification probability w.r.t. the ground-truth class.

Consequently, the learned oracle would yield $z$ to generate the trustworthiness confidence. In the cases of $\ell_{CE}$ and $\ell_{Focal}$, the oracle considers a prediction is trustworthy if the corresponding trustworthiness confidence is greater than the positive threshold 0.5, \ie $p(o=1|\theta,\bm{x})>0.5$. The predictions whose trustworthiness confidences are equal to or lower than the negative threshold 0.5 are viewed to be untrustworthy. In the case of $\ell_{TCP}$, the positive threshold is also 0.5, but the negative threshold correlates to the number of classes in the classification task. It is defined as $1/K$ in \cite{Corbiere_NIPS_2019}.

\begin{figure}[!t]
	\centering
	\subfloat[\label{fig:workflow_a}]{\includegraphics[width=0.60\textwidth]{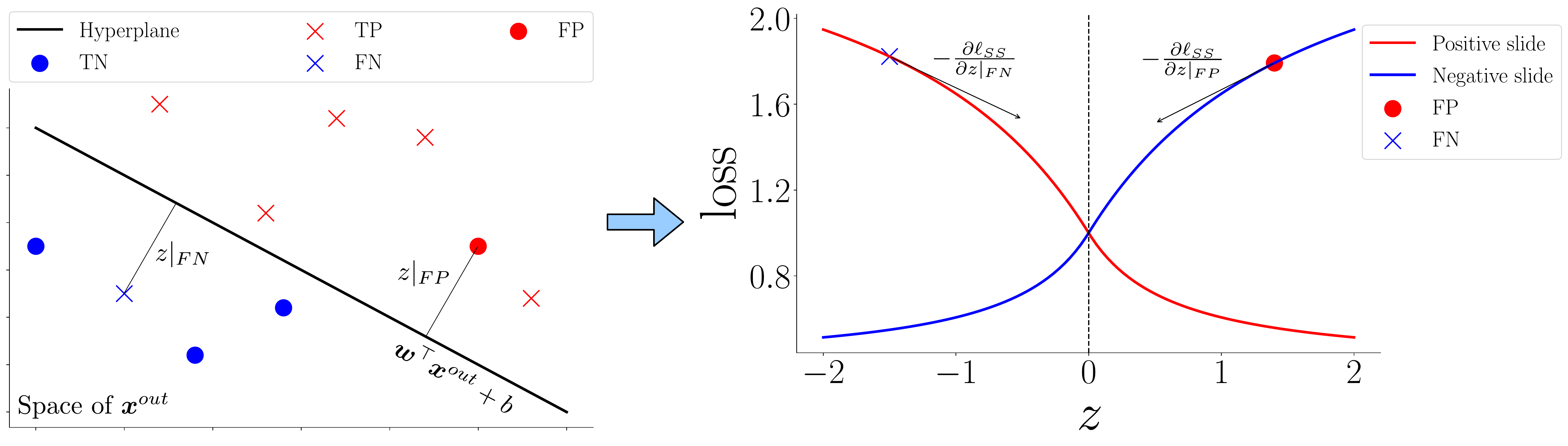}} \hfill
	\subfloat[\label{fig:workflow_b}]{\includegraphics[width=0.38\textwidth]{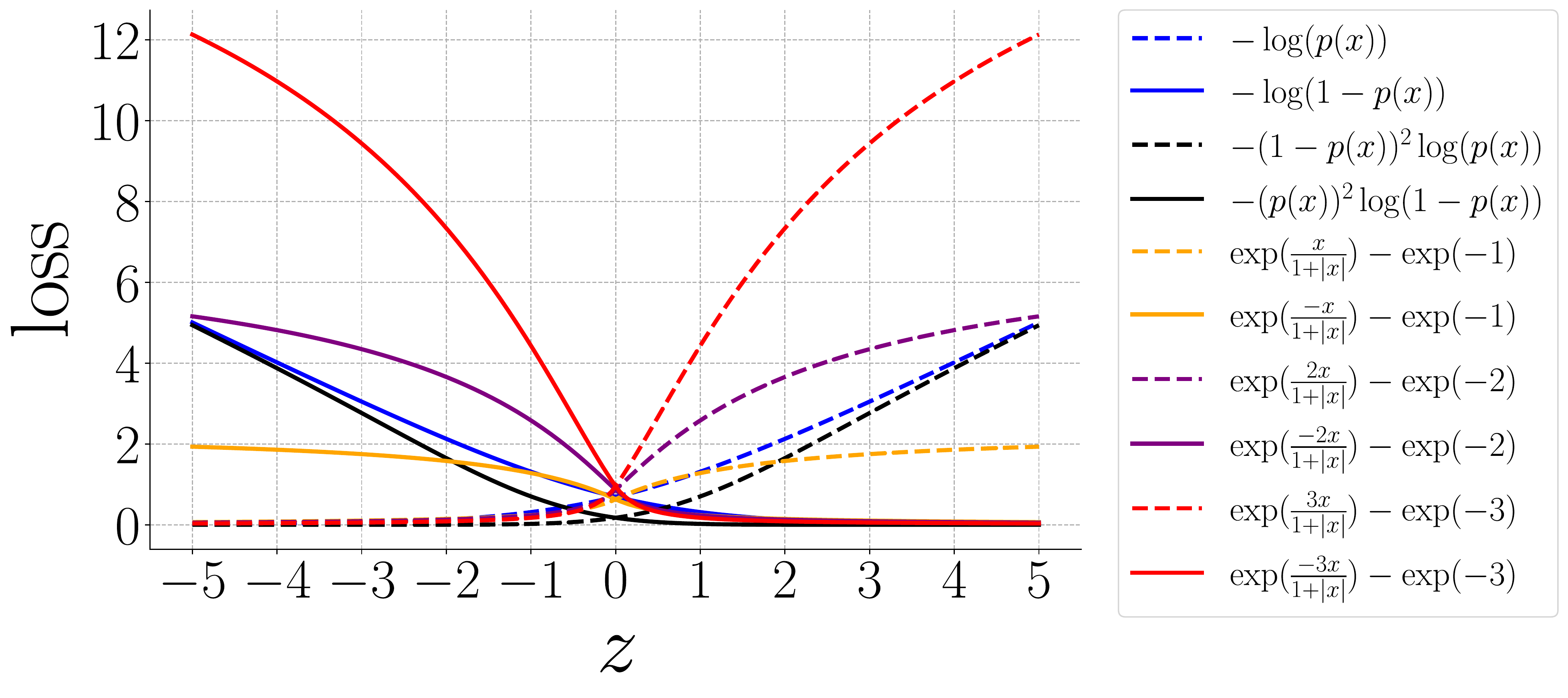}}
	\caption{\label{fig:workflow}
    	Conceptual workflow of the proposed steep slope loss (a) and graph comparison between the proposed loss and the conventional losses (b). In (b), the cross entropy loss and focal loss are plotted in blue and black, respectively. The TCP confidence loss is a square error and varies with the classification confidence. Therefore, it is not plotted here.
    	}
\end{figure}

\section{Methodology}

In this section, we first introduce the overall learning framework for predicting trustworthiness. Then, we narrow down to the proposed steep slope loss function. At last, we provide the generalization bound that is related to the proposed steep slope loss function.

\subsection{Overall Design}

Corbi\`{e}re~\etal~\cite{Corbiere_NIPS_2019} provide a good learning scheme for predicting trustworthiness.
Briefly, it first trains a classifier with the training samples.
Then, the classifier is frozen and the confidence network (\ie trustworthiness predictor) is trained (or fine-tuned) with the training samples.
In general, we follows this learning scheme.

This work focuses on the trustworthiness on the predictions yielded by the  publicly available pre-trained classifiers, \ie ViT \cite{Dosovitskiy_ICLR_2021} and ResNet \cite{He_CVPR_2016}.
We use the pre-trained backbones as the backbones of the oracles for general purposes.
In this sense, the set of the oracle's parameters can be split into two parts, one is related to the backbone and the other one is related to the head, \ie $\theta = \{\theta_{backbone}, \theta_{head}\}$. $\theta_{backbone}$ is used to generated the intermediate feature $\bm{x}^{out}$ and $\theta_{head}=\{\bm{w},b\}$ are usually the weights of a linear function to generate the discriminative feature $z=\bm{w}^{\top}\bm{x}^{out}+b$. With the classifier, the oracle, a pre-defined loss, and the training samples, we can optimize problem (\ref{eqn:pt_optimization}) to find the optimal parameters for the oracle.

\subsection{Steep Slope Loss}
\label{sec:ss}
The conceptual workflow of the proposed steep slope loss is shown in \figref{fig:workflow_a}.
The core idea is that we exploit the graph characteristics of the exponential function and the softsign function to establish two slides such that the features $z$ w.r.t. the positive class ride down the positive slide to the right bottom and the features $z$ w.r.t. the negative class ride down the negative slide to the left bottom.
$z$ is defined as the signed distance to the hyperplane (\ie oracle head). Given an image $\bm{x}$, $z=f_{\theta}(\bm{x})$ can be broken down into
\begin{align}
    z = \frac{\bm{w}^{\top} \bm{x}^{out}+b}{\|\bm{w}\|}, \quad \bm{x}^{out}=f_{\theta_{backbone}}(\bm{x}).
\end{align}
The signed distance to the hyperplane has a geometric interpretation: its sign indicates in which half-space $\bm{x}^{out}$ is and its absolute value indicates how far $\bm{x}^{out}$ is away from the hyperplane.


It is desired that the signed distance of $\bm{x}^{out}$ with ground-truth label $o=1$ ($o=0$) tends towards $+\infty$ ($-\infty$) as much as possible.
To this end, we define the steep slope (SS) loss function as follows
{\small
\begin{align}
    \ell_{SS} (f_{\theta}(\bm{x}), o) = o\cdot \underbrace{\left( \exp \left( \frac{\alpha^{+}z}{1+|z|} \right) - \exp(-\alpha^{+}) \right)}_{\text{Positive slide}}
    + (1-o)\cdot \underbrace{\left( \exp \left( \frac{-\alpha^{-}z}{1+|z|} \right) -  \exp(-\alpha^{-})\right)}_{\text{Negative slide}} \label{eqn:loss_ss}
\end{align}}
where $\alpha^{+},\alpha^{-}\in \mathop{\mathbb{R}}^{+}$ control the slope of the positive slide and the negative slide, respectively. If $z$ w.r.t. the positive class is on the left-hand side of $z=0$, minimizing the loss would push the point on the hill down to the bottom, \ie the long tail region indicating the well-classified region. Similarly, $z$ w.r.t. the negative class would undergo a similar process. $\exp(-\alpha^{+})$ and $\exp(-\alpha^{-})$ are vertical shifts for the positive and negative slides such that $\ell_{ss}$ has a minimum value 0. Note that the proposed steep slope loss is in the range $[0, \text{maximum}\{\exp(\alpha^{+})-\exp(-\alpha^{+}), \exp(\alpha^{-})-\exp(-\alpha^{-})\}]$, whereas the cross entropy loss and the focal loss are in the range $[0, +\infty)$.
The proposed steep slope loss can work with the output of the linear function as well. 
This is because the signed distance and the output of the linear function have a proportional relationship with each other, \ie $\frac{\bm{w}^{\top} \bm{x}^{out}+b}{\|\bm{w}\|} \propto \bm{w}^{\top}\bm{x}^{out}+b$.

Essentially, as shown in \figref{fig:workflow_b}, the cross entropy loss, focal loss, and steep slope loss work in a similar manner to encourage $z$ w.r.t. the positive class to move the right-hand side of the positive threshold 0 and encourage $z$ w.r.t. the negative class to move the left-hand side of the negative threshold 0, which is analogous to the sliding motion. The proposed steep slope loss is more tractable to control the steepness of slopes than the cross entropy loss and focal loss. This leads to an effective learning to yield discriminative feature for predicting trustworthiness.

\subsection{Generalization Bound}
With the proposed steep slope loss, we are interested in the generalization bound of trustworthiness predictors.
For simplicity, we simplify a trustworthiness predictor as $f\in \mathcal{F}$, where $\mathcal{F}$ is a finite hypothesis set.
The risk of predicting trustworthiness is defined as {\small $\mathcal{R}(f)=\mathbb{E}_{(\bm{x},y)\sim P} [\ell_{SS}(f(\bm{x}), o)]$}, where $P$ is the underlying joint distribution of $(\bm{x}, o)$. As $P$ is inaccessible, a common practice is to use empirical risk minimization (ERM) to approximate the risk \cite{Vapnik_TNN_1999}, \ie {\small $\hat{\mathcal{R}}_{D}(f)=\frac{1}{|D|}\sum_{i=1}^{|D|} \ell_{SS}(f(\bm{x}_{i}), o_{i})$}.

The following theorem provides an insight into the correlation between the generalization bound and the loss function in the learning task for predicting trustworthiness.
\begin{theorem}
	Denote $\text{maximum}\{\exp(\alpha^{+})-\exp(-\alpha^{+}), \exp(\alpha^{-})-\exp(-\alpha^{-})\}$ as $\ell_{SS}^{max}$. $\ell_{SS}\in [0, \ell_{SS}^{max}]$. Assume $\mathcal{F}$ is a finite hypothesis set, for any $\delta>0$, with probability at least $1-\delta$, the following inequality holds for all $f\in \mathcal{F}$:
	\begin{align*}
	|\mathcal{R}(f) - \hat{\mathcal{R}}_{D}(f) | \le  \ell^{max}_{SS}\sqrt{\frac{\log|\mathcal{F}|+\log\frac{2}{\delta}}{2|D|}}
	\end{align*}
	\label{thrm:gb}
\end{theorem}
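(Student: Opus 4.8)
The plan is to recognize this as the textbook finite-hypothesis-class generalization bound and to prove it by combining Hoeffding's inequality with a union bound over $\mathcal{F}$. The essential structural fact that makes the argument go through is already supplied by the theorem's hypothesis: the steep slope loss is bounded, $\ell_{SS}\in[0,\ell_{SS}^{max}]$. This boundedness is precisely the ingredient that the cross entropy and focal losses lack (they have range $[0,+\infty)$), so it is worth emphasizing that this step is where the specific construction of $\ell_{SS}$ earns its keep. First I would fix a single hypothesis $f\in\mathcal{F}$ and define, for each training pair, the random variable $X_i=\ell_{SS}(f(\bm{x}_i),o_i)$. Since the samples $(\bm{x}_i,o_i)$ are drawn i.i.d.\ from $P$, the $X_i$ are i.i.d., each taking values in $[0,\ell_{SS}^{max}]$, with common expectation $\mathbb{E}[X_i]=\mathcal{R}(f)$ and sample mean $\frac{1}{|D|}\sum_i X_i=\hat{\mathcal{R}}_D(f)$.

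Next I would apply Hoeffding's two-sided inequality to the bounded i.i.d.\ sum. For variables confined to an interval of width $\ell_{SS}^{max}$, this yields, for any $\epsilon>0$,
\begin{align*}
\Pr\!\left(|\mathcal{R}(f)-\hat{\mathcal{R}}_D(f)|\ge \epsilon\right)\le 2\exp\!\left(-\frac{2|D|\epsilon^2}{(\ell_{SS}^{max})^2}\right).
\end{align*}
This controls the deviation for one fixed $f$, but the theorem asserts a bound that holds \emph{simultaneously} for all $f\in\mathcal{F}$. To upgrade the pointwise statement to a uniform one, I would invoke the union bound over the finite set $\mathcal{F}$, which inflates the failure probability by at most a factor of $|\mathcal{F}|$:
\begin{align*}
\Pr\!\left(\exists\, f\in\mathcal{F}:\ |\mathcal{R}(f)-\hat{\mathcal{R}}_D(f)|\ge \epsilon\right)\le 2|\mathcal{F}|\exp\!\left(-\frac{2|D|\epsilon^2}{(\ell_{SS}^{max})^2}\right).
\end{align*}

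Finally I would invert this tail bound to read off the confidence level. Setting the right-hand side equal to $\delta$ and solving $2|\mathcal{F}|\exp(-2|D|\epsilon^2/(\ell_{SS}^{max})^2)=\delta$ for $\epsilon$ gives, after taking logarithms and using $\log(2|\mathcal{F}|/\delta)=\log|\mathcal{F}|+\log(2/\delta)$,
\begin{align*}
\epsilon=\ell_{SS}^{max}\sqrt{\frac{\log|\mathcal{F}|+\log\frac{2}{\delta}}{2|D|}},
\end{align*}
which is exactly the claimed bound; the complementary event, of probability at least $1-\delta$, is the desired uniform inequality. Honestly, there is no genuine obstacle here: every step is a standard invocation of a named inequality, and the only thing one must be careful about is the bookkeeping in the inversion (keeping the factor of $2$ inside the logarithm and the width $\ell_{SS}^{max}$ outside the square root). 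The one conceptual point worth stating explicitly, rather than a difficulty, is that the argument relies on $\mathcal{F}$ being finite so that the union bound contributes only the $\log|\mathcal{F}|$ term; the interesting content of the result is therefore that a smaller loss range $\ell_{SS}^{max}$ directly tightens the generalization gap, motivating the bounded design of the steep slope loss.
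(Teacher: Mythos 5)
Your proposal is correct and follows essentially the same route as the paper's own proof in Appendix~A: Hoeffding's inequality applied to the bounded loss $\ell_{SS}\in[0,\ell_{SS}^{max}]$, a union bound over the finite hypothesis set $\mathcal{F}$, and inversion of the tail bound by setting $2|\mathcal{F}|\exp(-2|D|\xi^{2}/(\ell_{SS}^{max})^{2})=\delta$. Your write-up is in fact slightly more explicit than the paper's (you spell out the i.i.d.\ random variables and the two-sided form of Hoeffding), but the argument is the same.
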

The proof sketch is similar to the generalization bound provided in \cite{Mohri_MIT_2018} and the detailed proof is provided in the appendix \ref{sec:gb}.

A desired characteristic of the proposed steep slope loss is that it is in a certain range determined by $\alpha^{+}$ and $\alpha^{-}$, as discussed in Section \ref{sec:ss}. This leads to the generalization bound shown in Theorem \ref{thrm:gb}. The theorem implies that given a generic classifier, as the number of training samples increases, the empirical risk would be close to the true risk with a certain probability.
\REVISION{On the other hand, the cross entropy loss, focal loss, and TCP loss are not capped in a certain range. They do not fit under Hoeffding's inequality to derive the generalization bounds.}

\section{Related Work}

\noindent\textbf{Loss Function}. Loss function is the key to search for optimal parameters and has been extensively studied in a range of learning tasks \cite{Corbiere_NIPS_2019,Lin_ICCV_2017,Cox_JRSS_1972,Liu_ICML_2016}.
Specifically, the cross entropy loss may be the most widely-used loss for classification \cite{Cox_JRSS_1972}, however, it is not optimal for all cases. 
Lin \etal \cite{Lin_ICCV_2017} propose the focal loss to down-weight the loss w.r.t. well-classified examples and focus on minimizing the loss w.r.t. misclassified examples by reshaping the cross entropy loss function.
Liu \etal \cite{Liu_ICML_2016} introduce the large margin softmax loss that is based on cosine similarity between the weight vectors and the feature vectors.
Nevertheless, it requires more hyperparameters and entangles with the linear function in the last layer of the network.
Similar to the idea of focal loss, the proposed steep slope loss is flexible to reshape the function graphs to improve class imbalance problem.
TCP's confidence loss is used to train the confidence network for predicting trustworthiness \cite{Corbiere_NIPS_2019}.
We adopt TCP as a baseline. 

\noindent\textbf{Trustworthiness Prediction}.
Understanding trustworthiness of a classifier has been studied in the past decade \cite{Jiang_NIPS_2018,Corbiere_NIPS_2019,Hendrycks_ICLR_2017,Gal_ICML_2016,Moon_ICML_2020}.
Hendrycks and Gimpel \cite{Hendrycks_ICLR_2017} aim to detect the misclassified examples according to maximum class probability. 
This method depends on the ranking of confidence scores and is proved to be unreliable \cite{Jiang_NIPS_2018}.
Monte Carlo dropout (MCDropout) intends to understand trustworthiness through the lens of uncertainty estimation \cite{Gal_ICML_2016}.
However, it is difficult to distinguish the trustworthy prediction from the incorrect but overconfident predictions \cite{Corbiere_NIPS_2019}.
Jiang \etal \cite{Jiang_NIPS_2018} propose a confidence measure named trust score, which is based on the distance between the classifier and a modified nearest-neighbor classifier on the test examples. The shortcomings of the method are the poor scalability and computationally expensive.
Recently, instead of implementing a standalone trustworthiness predictor, Moon \etal \cite{Moon_ICML_2020} use the classifier's weights to compute the correctness ranking. The correctness ranking is taken into account in the loss function such that the classification accuracy is improved.
This method relies on pairs of training samples for computing the ranking. Moreover, it is unclear if the classifier improved by the correctness ranking outperforms the pre-trained classifier when applying the method on large-scale datasets, \eg ImageNet.
In contrast, the learning scheme to train the trustworthiness predictor (\ie confidence network) in \cite{Corbiere_NIPS_2019} is standard and does not affect the classifiers. Its efficacy has been verified on small-scale datasets.
In this work, we follows the learning scheme used in \cite{Corbiere_NIPS_2019} and focus on predicting trustworthiness on complex dataset.




\noindent\textbf{Imbalanced Classification}.
In classification task, the ratio of correct predictions to incorrect predictions is expected to be large due to the advance in deep learning methods.
This is aligned with the nature of imbalanced classification \cite{Huang_CVPR_2016,Wang_CVPR_2019,Cao_NIPS_2019,Cui_CVPR_2019,Kim_CVPR_2020}, which generally employ re-sampling and re-weighting strategies to solve the problem.
However, predicting trustworthiness is different from imbalanced classification as it is not aware of the visual concepts but the correctness of the predictions, whereas the classification task relies on the visual concepts.
Specifically, the re-sampling based methods \cite{Huang_CVPR_2016,Wang_CVPR_2019,Kim_CVPR_2020} may not be suitable for the problem of predicting trustworthiness.
It is difficult to determine if a sample is under-represented (over-represented) and should be over-sampled (under-sampled).
The re-weighting based methods \cite{Wang_CVPR_2019,Cui_CVPR_2019} can be applied to any generic loss functions, but they hinge on some hypothesis related to imbalanced data characteristics.
For instance, the class-balanced loss \cite{Cui_CVPR_2019} is based on the effective number w.r.t. a class, which presumes number of samples is known.
However, the number of samples w.r.t. a class is not always assessible, \eg in the online learning scheme \cite{Crammer_NIPS_2004,Bechavod_NIPS_2020}.
Instead of assuming some hypothesis, we follow the standard learning scheme for predicting trustworthiness \cite{Jiang_NIPS_2018,Corbiere_NIPS_2019}.
\section{Experiment \& Analysis}
\label{sec:exp}
In this section, we first introduce the experimental set-up. Then, we report the performances of baselines and the proposed steep slope loss on ImageNet, followed by comprehensive analyses. 

\noindent\textbf{Experimental Set-Up}.
We use ViT B/16 \cite{Dosovitskiy_ICLR_2021} and ResNet-50 \cite{He_CVPR_2016} as the classifiers, and the respective backbones are used as the oracles' backbones. We denote the combination of oracles and classifiers as \textlangle O, C\textrangle. There are four combinations in total, \ie \textlangle ViT, ViT\textrangle, \textlangle ViT, RSN\textrangle, \textlangle RSN, ViT\textrangle, and \textlangle RSN, RSN\textrangle, where RSN stands for ResNet.
In this work, we adopt three baselines, \ie the cross entropy loss \cite{Cox_JRSS_1972}, focal loss \cite{Lin_ICCV_2017}, and TCP confidence loss \cite{Corbiere_NIPS_2019}, for comparison purposes.

The experiment is conducted on ImageNet \cite{Deng_CVPR_2009}, which consists of 1.2 million labeled training images and 50000 labeled validation images. It has 1000 visual concepts. Similar to the learning scheme in \cite{Corbiere_NIPS_2019}, the oracle is trained with training samples and evaluated on the validation set. During the training process of the oracle, the classifier works in the evaluation mode so training the oracle would not affect the parameters of the classifier. Moreover, we conduct the analyses about how well the learned oracle generalizes to the images in the unseen domains. To this end, we apply the widely-used style transfer method \cite{Geirhos_ICLR_2019} and the functional adversarial attack method \cite{Laidlaw_NeurIPS_2019} to generate two variants of the validation set, \ie stylized validation set and adversarial validation set. \REVISION{Also, we adopt ImageNet-C \cite{Hendrycks_ICLR_2018} for evaluation, which is used for evaluating robustness to common corruptions.}


The oracle's backbone is initialized by the pre-trained classifier's backbone and trained by fine-tuning using training samples and the trained classifier.
Training the oracles with all the loss functions uses the same hyperparameters, such as learning rate, weight decay, momentum, batch size, etc.
The details for the training process and the implementation are provided in \appref{sec:implementation}.

For the focal loss, we follow \cite{Lin_ICCV_2017} to use $\gamma=2$,  which leads to the best performance for object detection.
For the proposed loss, we use $\alpha^{+}=1$ and $\alpha^{-}=3$ for the oracle that is based on ViT's backbone, while we use $\alpha^{+}=2$ and $\alpha^{-}=5$ for the oracle that is based on ResNet's backbone.

Following \cite{Corbiere_NIPS_2019}, we use FPR-95\%-TPR, AUPR-Error, AUPR-Success, and AUC as the metrics.
FPR-95\%-TPR is the false positive rate (FPR) when true positive rate (TPR) is equal to 95\%. 
AUPR is the area under the precision-recall curve. 
Specifically, AUPR-Success considers the correct prediction as the positive class, whereas AUPR-Error considers the incorrect prediction as the positive class.
AUC is the area under the receiver operating characteristic curve, which is the plot of TPR versus FPR.
Moreover, we use TPR and true negative rate (TNR) as additional metrics because they assess overfitting issue, \eg TPR=100\% and TNR=0\% imply that the trustworthiness predictor is prone to view all the incorrect predictions to be trustworthy. 

\begin{table}[!t]
	\centering
	\caption{\label{tbl:all_perf_w_std}
	    Performance on the ImageNet validation set. The mean and the standard deviation of scores are computed over three runs. The oracles are trained with the ImageNet training samples. The classifier is used in the evaluation mode. Acc is the classification accuracy and is helpful to understand the proportion of correct predictions. \textit{SS} stands for the proposed steep slope loss.
	}
	\adjustbox{width=1\columnwidth}{
	\begin{tabular}{C{10ex} L{10ex} C{8ex} C{10ex} C{10ex} C{10ex} C{10ex} C{10ex} C{10ex}}
		\toprule
		\textbf{\textlangle O, C\textrangle} & \textbf{Loss} & \textbf{Acc$\uparrow$} & \textbf{FPR-95\%-TPR$\downarrow$} & \textbf{AUPR-Error$\uparrow$} & \textbf{AUPR-Success$\uparrow$} & \textbf{AUC$\uparrow$} & \textbf{TPR$\uparrow$} & \textbf{TNR$\uparrow$} \\
		\cmidrule(lr){1-1} \cmidrule(lr){2-2} \cmidrule(lr){3-3} \cmidrule(lr){4-4} \cmidrule(lr){5-5} \cmidrule(lr){6-6} \cmidrule(lr){7-7} \cmidrule(lr){8-8} \cmidrule(lr){9-9} 
		\multirow{4}{*}{\textlangle ViT, ViT \textrangle} & CE & 83.90 & 93.01$\pm$0.17 & \textbf{15.80}$\pm$0.56 & 84.25$\pm$0.57 & 51.62$\pm$0.86 & \textbf{99.99}$\pm$0.01 & 0.02$\pm$0.02 \\
		 & Focal \cite{Lin_ICCV_2017} & 83.90 & 93.37$\pm$0.52 & 15.31$\pm$0.44 & 84.76$\pm$0.50 & 52.38$\pm$0.77 & 99.15$\pm$0.14 & 1.35$\pm$0.22 \\
		 & TCP \cite{Corbiere_NIPS_2019} & 83.90 & 88.38$\pm$0.23 & 12.96$\pm$0.10 & 87.63$\pm$0.15 & 60.14$\pm$0.47 & 99.73$\pm$0.02 & 0.00$\pm$0.00 \\
		 & SS & 83.90 & \textbf{80.48}$\pm$0.66 & 10.26$\pm$0.03 & \textbf{93.01}$\pm$0.10 & \textbf{73.68}$\pm$0.27 & 87.52$\pm$0.95 & \textbf{38.27}$\pm$2.48 \\
		\midrule
		\multirow{4}{*}{\textlangle ViT, RSN\textrangle} & CE & 68.72 & 93.43$\pm$0.28 & 30.90$\pm$0.35 & 69.13$\pm$0.36 & 51.24$\pm$0.63 & \textbf{99.90}$\pm$0.04 & 0.20$\pm$0.00 \\
		 & Focal \cite{Lin_ICCV_2017} & 68.72 & 93.94$\pm$0.51 & \textbf{30.97}$\pm$0.36 & 69.07$\pm$0.35 & 51.26$\pm$0.62 & 93.66$\pm$0.29 & 7.71$\pm$0.53 \\
		 & TCP \cite{Corbiere_NIPS_2019} & 68.72 & 83.55$\pm$0.70 & 23.56$\pm$0.47 & 79.04$\pm$0.91 & 66.23$\pm$1.02 & 94.25$\pm$0.96 & 0.00$\pm$0.00 \\
		 & SS & 68.72 & \textbf{77.89}$\pm$0.39 & 20.91$\pm$0.05 & \textbf{85.39}$\pm$0.16 & \textbf{74.31}$\pm$0.21 & 68.32$\pm$0.41 & \textbf{67.53}$\pm$0.62 \\
        \midrule
		\multirow{4}{*}{\textlangle RSN, ViT\textrangle} & CE & 83.90 & 93.29$\pm$0.53 & 14.74$\pm$0.17 & 85.40$\pm$0.20 & 53.43$\pm$0.28 & \textbf{100.00}$\pm$0.00 & 0.00$\pm$0.00 \\
		 & Focal \cite{Lin_ICCV_2017} & 83.90 & 94.60$\pm$0.53 & \textbf{14.98}$\pm$0.21 & 85.13$\pm$0.24 & 52.37$\pm$0.51 & \textbf{100.00}$\pm$0.00 & 0.00$\pm$0.00 \\
		 & TCP \cite{Corbiere_NIPS_2019} & 83.90 &91.93$\pm$0.49 & 14.12$\pm$0.12 & 86.12$\pm$0.15 & 55.55$\pm$0.46 & \textbf{100.00}$\pm$0.00 & 0.00$\pm$0.00 \\
         & SS & 83.90 & \textbf{88.70}$\pm$0.08 & 11.69$\pm$0.04 & \textbf{90.01}$\pm$0.10 & \textbf{64.34}$\pm$0.16 & 96.20$\pm$0.73 & \textbf{9.00}$\pm$1.32 \\
        \midrule
        \multirow{4}{*}{\textlangle RSN, RSN\textrangle} & CE & 68.72 & 94.84$\pm$0.27 & 29.41$\pm$0.18 & 70.79$\pm$0.19 & 52.36$\pm$0.41 & \textbf{100.00}$\pm$0.00 & 0.00$\pm$0.00 \\
		 & Focal \cite{Lin_ICCV_2017} & 68.72 & 95.16$\pm$0.19 & \textbf{29.92}$\pm$0.38 & 70.23$\pm$0.44 & 51.43$\pm$0.50 & 99.86$\pm$0.05 & 0.08$\pm$0.03 \\
		 & TCP \cite{Corbiere_NIPS_2019} & 68.72 & 88.81$\pm$0.24 & 24.46$\pm$0.12 & 77.79$\pm$0.29 & 62.73$\pm$0.14 & 99.23$\pm$0.14 & 0.00$\pm$0.00 \\
         & SS & 68.72 & \textbf{86.21}$\pm$0.44 & 22.53$\pm$0.03 & \textbf{81.88}$\pm$0.10 & \textbf{67.92}$\pm$0.11 & 79.20$\pm$2.50 & \textbf{42.09}$\pm$3.77 \\
		\bottomrule	
	\end{tabular}}
\end{table}

\noindent\textbf{Performance on Large-Scale Dataset}. 
The result on ImageNet are reported in \tabref{tbl:all_perf_w_std}. We have two key observations. Firstly, training with the cross entropy loss, focal loss, and TCP confidence loss lead to overfitting the imbalanced training samples, \ie the dominance of trustworthy predictions. Specifically, TPR is higher than 99\% while TNR is less than 1\% in all cases. Secondly, the performance of predicting trustworthiness is contingent on both the oracle and the classifier. When a high-performance model (\ie ViT) is used as the oracle and a relatively low-performance model (\ie ResNet) is used as the classifier, cross entropy loss and focal loss achieve higher TNRs than the loss functions with the other combinations. In contrast, the two losses with \textlangle ResNet, ViT\textrangle~ lead to the lowest TNRs (\ie 0\%). 

Despite the combinations of oracles and classifiers, the proposed steep slope loss can achieve significantly higher TNRs than using the other loss functions, while it achieves desirable performance on FPR-95\%-TPR, AUPR-Success, and AUC. This verifies that the proposed loss is effective to improve the generalizability for predicting trustworthiness. Note that the scores of AUPR-Error and TPR yielded by the proposed loss are lower than that of the other loss functions. Recall that AUPR-Error aims to inspect how easy to detect failures and depends on the negated trustworthiness confidences w.r.t. incorrect predictions \cite{Corbiere_NIPS_2019}. The AUPR-Error correlates to TPR and TNR. When TPR is close to 100\% and TNR is close to 0\%, it indicates the oracle is prone to view all the predictions to be trustworthy. In other words, almost all the trustworthiness confidences are on the right-hand side of $p(o=1|\theta,\bm{x})=0.5$. Consequently, when taking the incorrect prediction as the positive class, the negated confidences are smaller than -0.5. On the other hand, the oracle trained with the proposed loss intends to yield the ones w.r.t. incorrect predictions that are smaller than 0.5. In general, the negated confidences w.r.t. incorrect predictions are greater than the ones yielded by the other loss functions. In summary, a high TPR score and a low TNR score leads to a high AUPR-Error.

To intuitively understand the effects of all the loss functions, we plot the histograms of trustworthiness confidences w.r.t. true positive (TP), false positive (FP), true negative (TN), and false negative (FN) in \figref{fig:histogram_part}. The result confirms that the oracles trained with the baseline loss functions are prone to predict overconfident trustworthiness for incorrect predictions, while the oracles trained with the proposed loss can properly predict trustworthiness for incorrect predictions.


\begin{figure}[!t]
	\centering
	\subfloat[\textlangle ViT, ViT\textrangle + CE]{\includegraphics[width=0.24\textwidth]{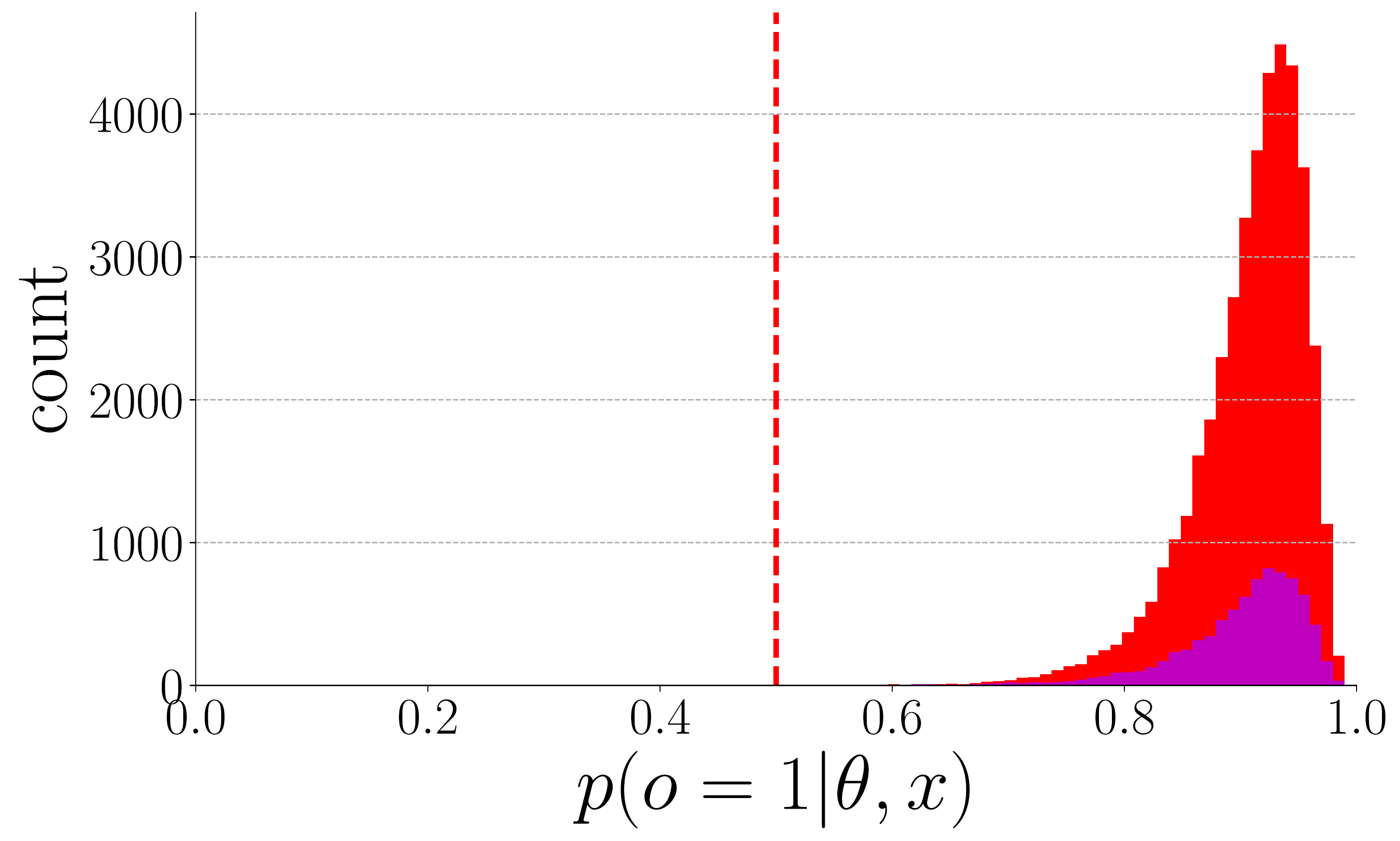}    } \hfill
	\subfloat[\textlangle ViT, ViT\textrangle + Focal]{\includegraphics[width=0.24\textwidth]{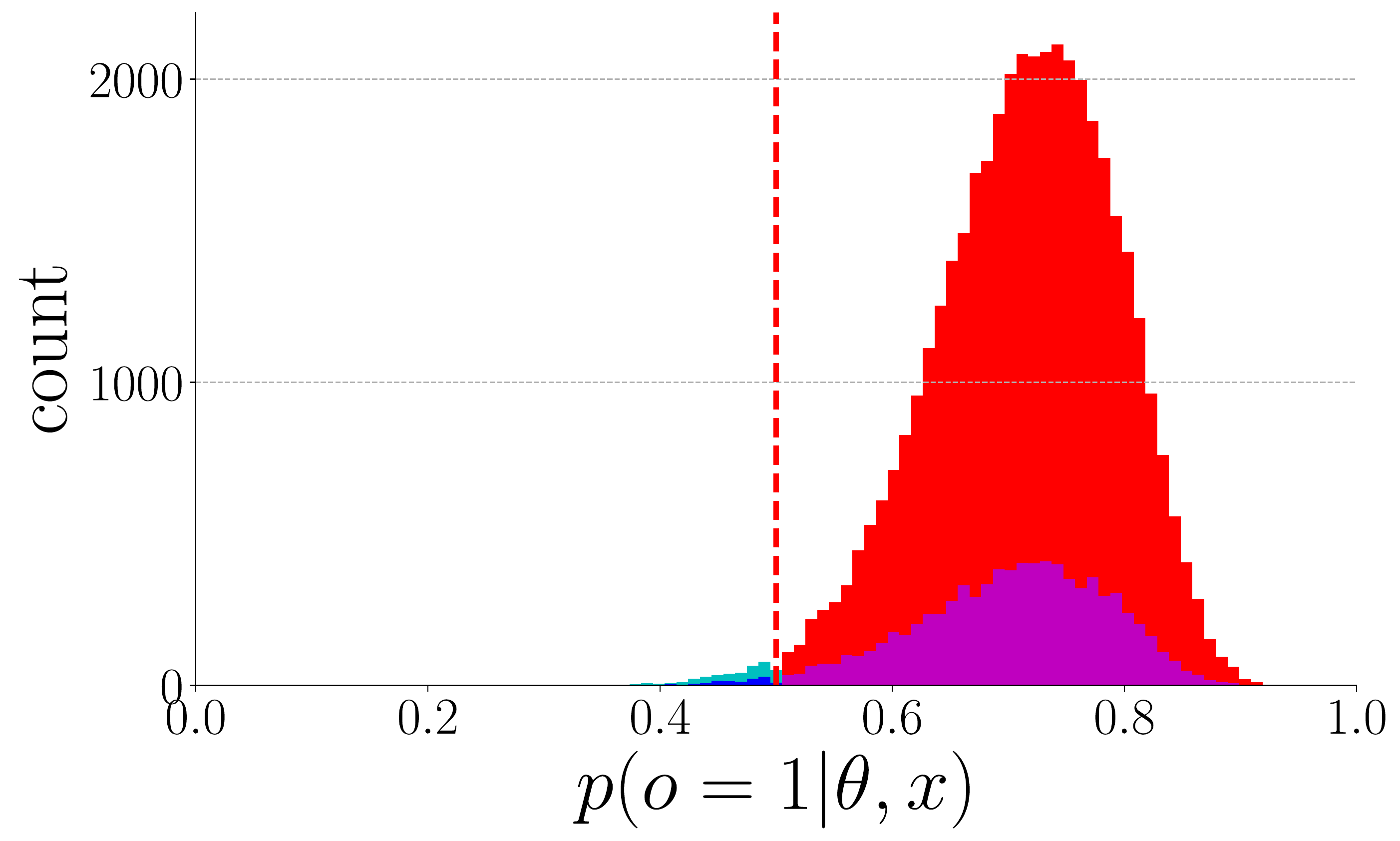}    } \hfill
	\subfloat[\textlangle ViT, ViT\textrangle + TCP]{\includegraphics[width=0.24\textwidth]{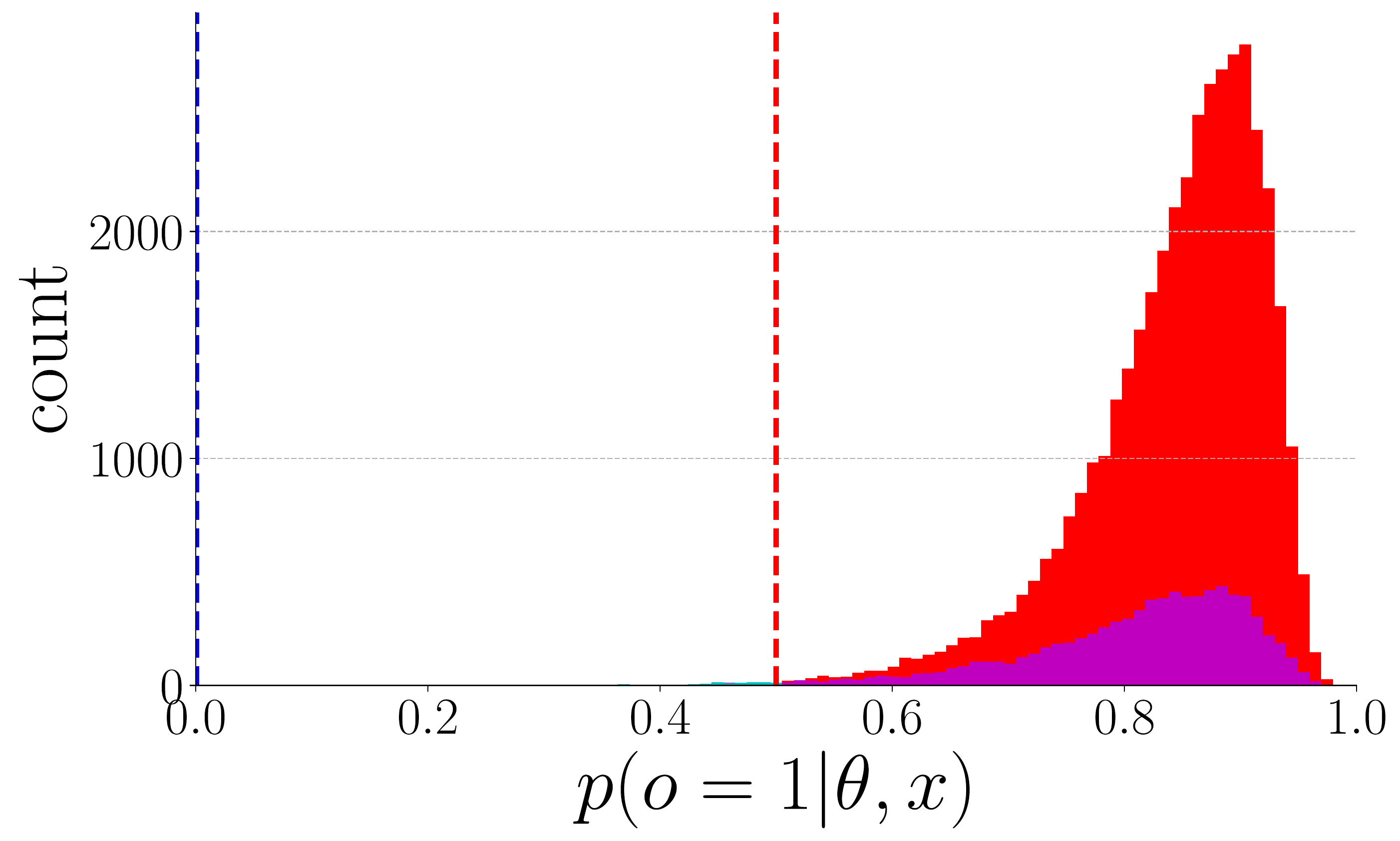}    } \hfill
	\subfloat[\textlangle ViT, ViT\textrangle +  SS]{\includegraphics[width=0.24\textwidth]{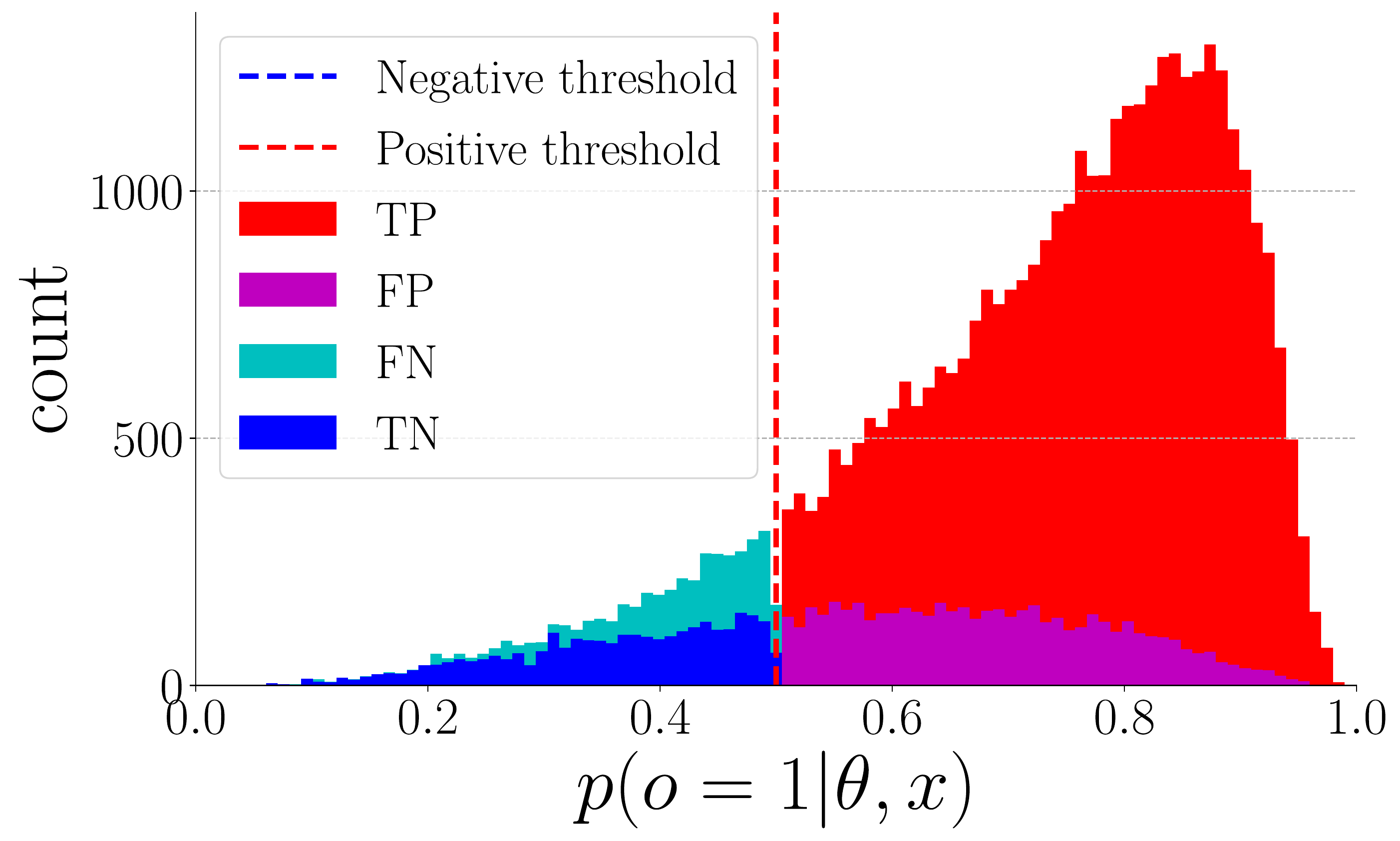}    } \\
	\subfloat[\textlangle ViT, RSN\textrangle + CE]{\includegraphics[width=0.24\textwidth]{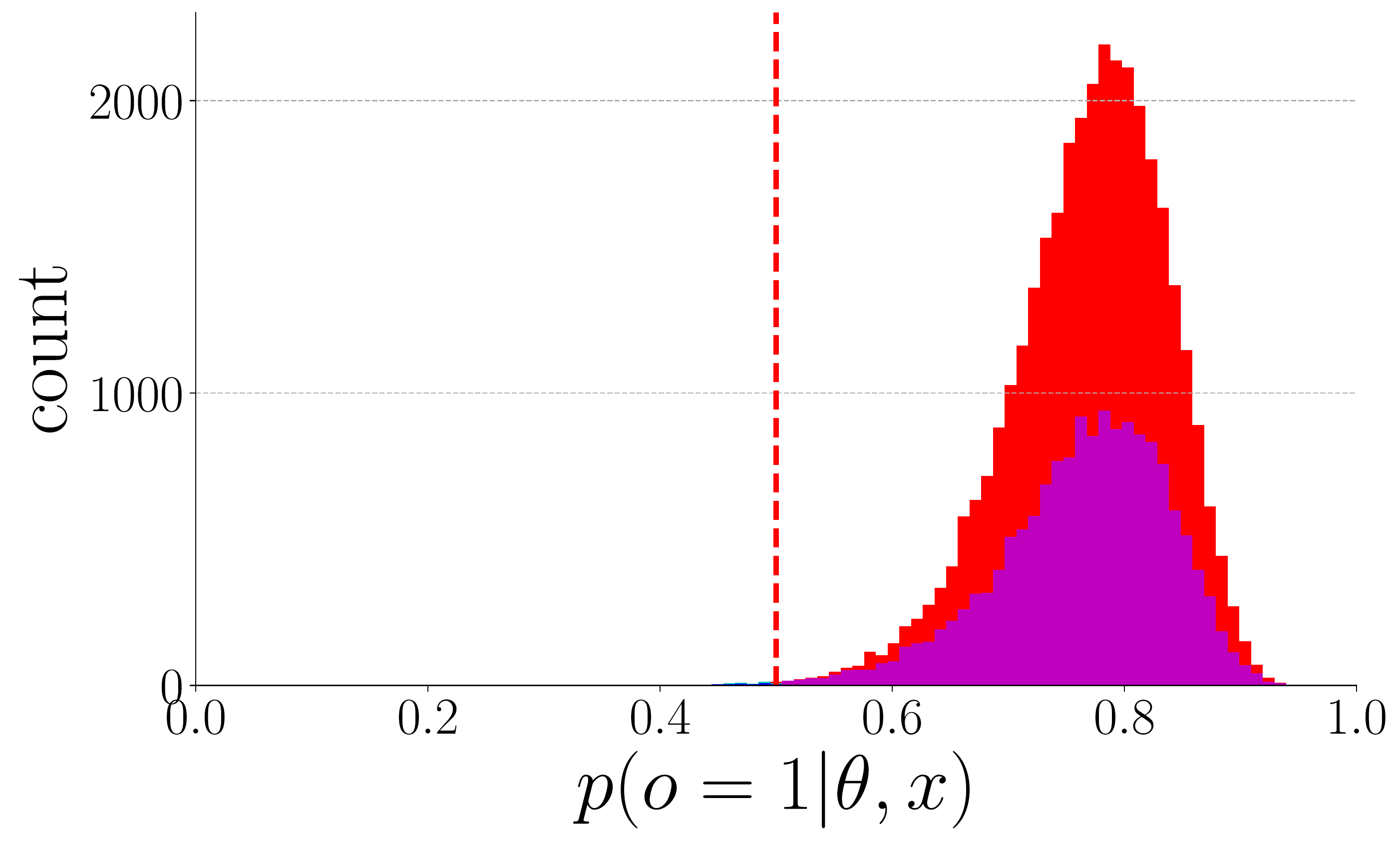}    } \hfill
	\subfloat[\textlangle ViT, RSN\textrangle + Focal]{\includegraphics[width=0.24\textwidth]{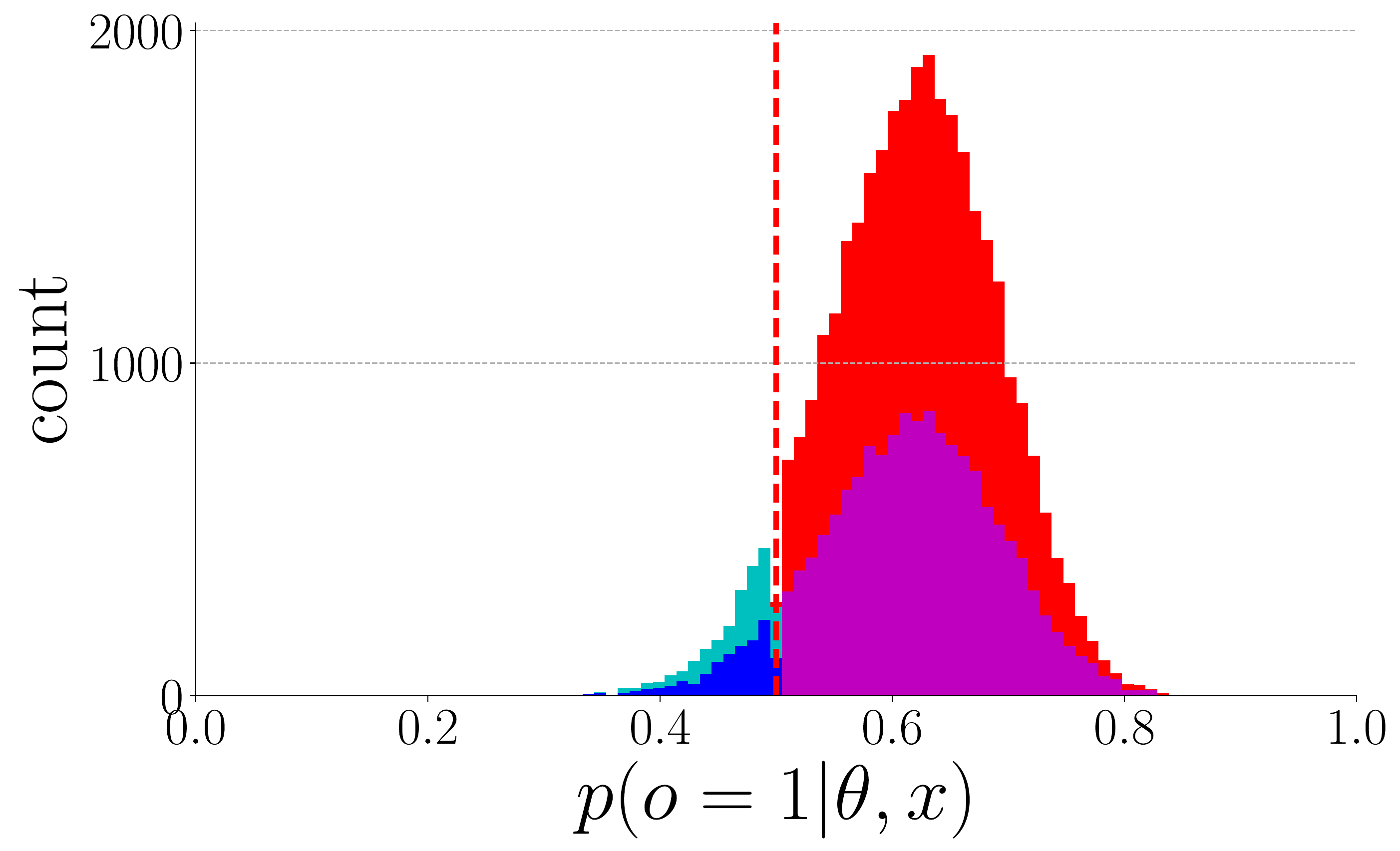}    } \hfill
	\subfloat[\textlangle ViT, RSN\textrangle + TCP]{\includegraphics[width=0.24\textwidth]{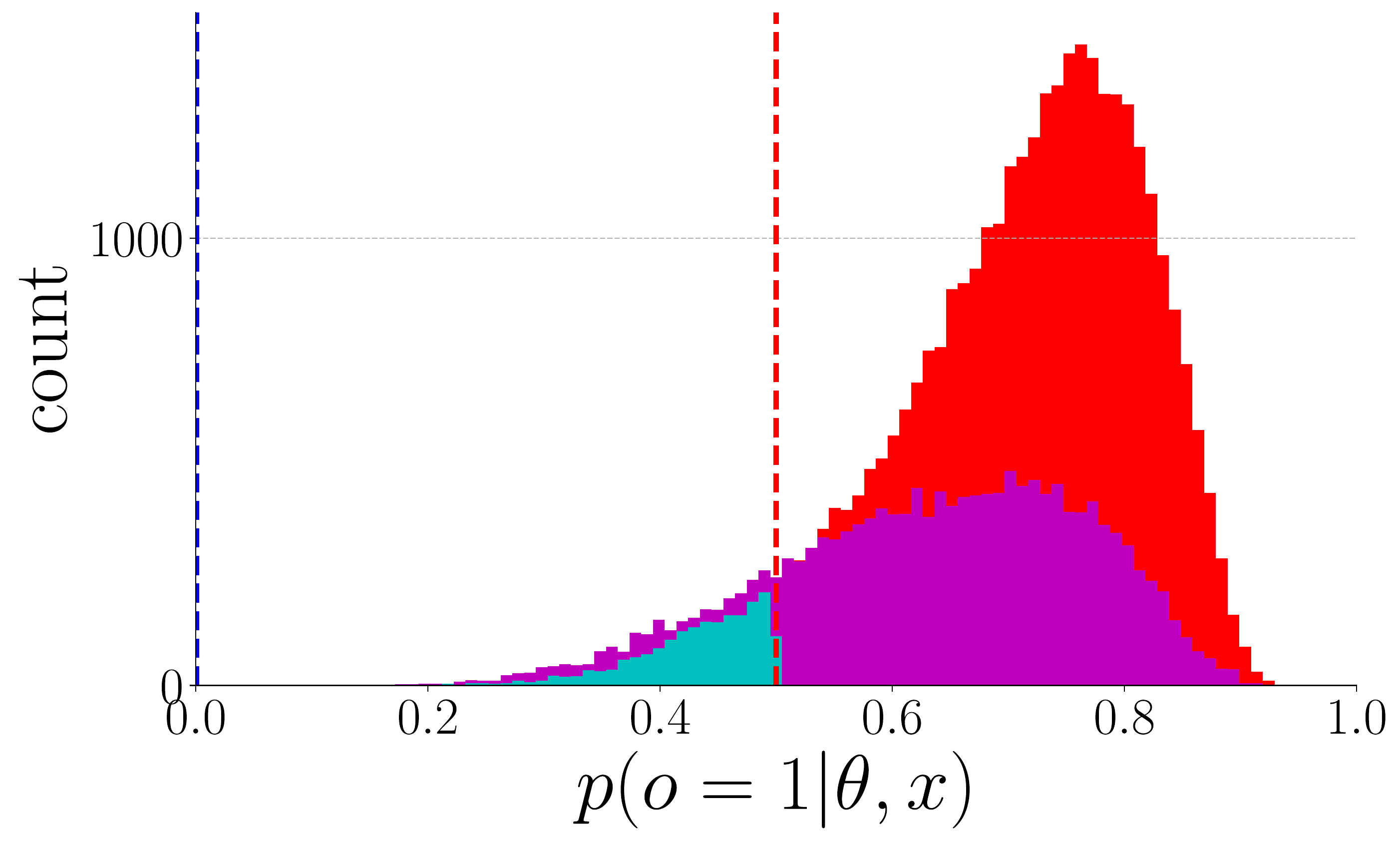}    } \hfill
	\subfloat[\textlangle ViT, RSN\textrangle + SS]{\includegraphics[width=0.24\textwidth]{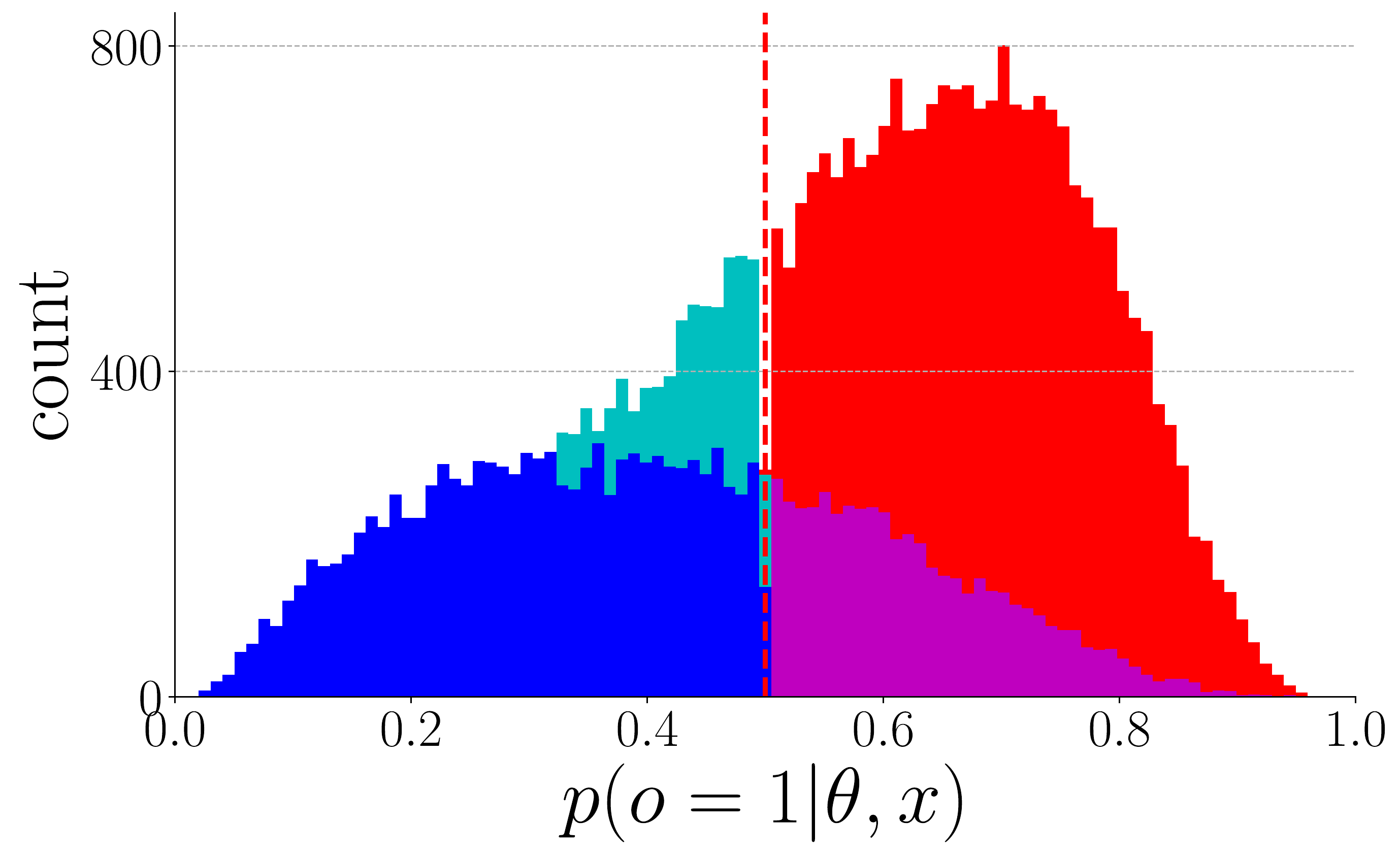}    } \\
	\caption{\label{fig:histogram_part}
    	Histograms of trustworthiness confidences w.r.t. all the loss functions on the ImageNet validation set.
    	The oracles that are used to generate the confidences are the ones used in \tabref{tbl:all_perf_w_std}. The histograms generated with \textlangle RSN, ViT\textrangle and \textlangle RSN, RSN\textrangle are provided in \appref{sec:histogram}.
    	}
    \vspace{-1ex}
\end{figure}

\begin{table}[!t]
	\centering
	\caption{\label{tbl:perf_mnist}
	    Performance on MNIST and CIFAR-10.
	}
	\adjustbox{width=1\columnwidth}{
	\begin{tabular}{C{12ex} L{15ex} C{8ex} C{10ex} C{8ex} C{8ex} C{8ex} C{8ex} C{8ex}}
		\toprule
		\textbf{Dataset} & \textbf{Loss} & \textbf{Acc$\uparrow$} & \textbf{FPR-95\%-TPR$\downarrow$} & \textbf{AUPR-Error$\uparrow$} & \textbf{AUPR-Success$\uparrow$} & \textbf{AUC$\uparrow$} & \textbf{TPR$\uparrow$} & \textbf{TNR$\uparrow$} \\
		\cmidrule(lr){1-1} \cmidrule(lr){2-2} \cmidrule(lr){3-3} \cmidrule(lr){4-4} \cmidrule(lr){5-5} \cmidrule(lr){6-6} \cmidrule(lr){7-7} \cmidrule(lr){8-8} \cmidrule(lr){9-9}
		\multirow{6}{*}{MNIST} & MCP \cite{Hendrycks_ICLR_2017} & 99.10 & 5.56 & 35.05 & \textbf{99.99} & 98.63 & 99.89 & \textbf{8.89} \\
		& MCDropout \cite{Gal_ICML_2016} & 99.10 & 5.26 & 38.50 & \textbf{99.99} & 98.65 & - & - \\
		& TrustScore \cite{Jiang_NIPS_2018} & 99.10 & 10.00 & 35.88 & 99.98 & 98.20 & - & - \\
		& TCP \cite{Corbiere_NIPS_2019} & 99.10 & 3.33 & \textbf{45.89} & \textbf{99.99} & 98.82 & 99.71 & 0.00 \\
		& TCP$\dagger$ & 99.10 & 3.33 & 45.88 & \textbf{99.99} & 98.82 & 99.72 & 0.00 \\
		& SS & 99.10 & \textbf{2.22} & 40.86 & \textbf{99.99} & \textbf{98.83} & \textbf{100.00} & 0.00 \\
		\midrule
		\multirow{6}{*}{CIFAR-10} & MCP \cite{Hendrycks_ICLR_2017} & 92.19 & 47.50 & 45.36 & 99.19 & 91.53 & 99.64 & 6.66 \\
		& MCDropout \cite{Gal_ICML_2016} & 92.19 & 49.02 & 46.40 & \textbf{99.27} & 92.08 & - & - \\
		& TrustScore \cite{Jiang_NIPS_2018} & 92.19 & 55.70 & 38.10 & 98.76 & 88.47 & - & - \\
		& TCP \cite{Corbiere_NIPS_2019} & 92.19 & 44.94 & 49.94 & 99.24 & 92.12 & \textbf{99.77} & 0.00 \\
		& TCP$\dagger$ & 92.19 & 45.07 & 49.89 & 99.24 & 92.12 & 97.88 & 0.00 \\
		& SS & 92.19 & \textbf{44.69 }& \textbf{50.28}  & 99.26 & \textbf{92.22} & 98.46 & \textbf{28.04} \\
		\bottomrule	
	\end{tabular}}
\end{table}


\noindent\textbf{Separability between Distributions of Correct Predictions and Incorrect Predictions}.
As observed in \figref{fig:histogram_part}, the confidences w.r.t. correct and incorrect predictions follow Gaussian-like distributions.
Hence, we can compute the separability between the distributions of correct and incorrect predictions from a probabilistic perspective.
Given the distribution of correct predictions {\small $\mathcal{N}_{1}(\mu_{1}, \sigma^{2}_{1})$} and the distribution of correct predictions {\small $\mathcal{N}_{2}(\mu_{2}, \sigma^{2}_{2})$}, we use the average Kullback–Leibler (KL) divergence {\small $\bar{d}_{KL}(\mathcal{N}_{1}, \mathcal{N}_{2})$} \cite{Kullback_AMS_1951} and Bhattacharyya distance {\small $d_{B}(\mathcal{N}_{1}, \mathcal{N}_{2})$} \cite{Bhattacharyya_JSTOR_1946} to measure the separability. 
More details and the quantitative results are reported in \appref{sec:separability}. 
In short, the proposed loss leads to larger separability than the baseline loss functions. 
This implies that the proposed loss is more effective to differentiate incorrect predictions from correct predictions.

\noindent\textbf{Performance on Small-Scale Datasets}.
We also provide comparative experimental results on small-scale datasets, \ie MNIST \cite{Lecun_IEEE_1998} and CIFAR-10 \cite{Krizhevsky_TR_2009}.
\REVISION{The results are reported in \tabref{tbl:perf_mnist}.}
The proposed loss outperforms TCP$\dagger$ on metric FPR-95\%-TPR on both MNIST and CIFAR-10, and additionally achieved good performance on metrics AUPR-Error and TNR on CIFAR-10.
This shows the proposed loss is able to adapt to relatively simple data.
\REVISION{More details can be found in \appref{sec:mnist}.}

\noindent\textbf{Generalization to Unseen Domains}.
In practice, the oracle may run into the data in the domains that are different from the ones of training samples.
Thus, it is interesting to find out how well the learned oracles generalize to the unseen domain data.
Using the oracles trained with the ImageNet training set (\ie the ones used in \tabref{tbl:all_perf_w_std}), we evaluate it on the stylized ImageNet validation set \cite{Geirhos_ICLR_2019}, adversarial ImageNet validation set \cite{Laidlaw_NeurIPS_2019}, and corrupted ImageNet validation set \cite{Hendrycks_ICLR_2018}.
\textlangle ViT, ViT\textrangle~ is used in the experiment.

The results on the stylized ImageNet, adversarial ImageNet, and ImageNet-C are reported in \tabref{tbl:perf_vit_vit}, \REVISION{More results on ImageNet-C are reported in \tabref{tbl:perf_imagenetc}}.
As all unseen domains are different from the one of the training set, the classification accuracies are much lower than the ones in \tabref{tbl:all_perf_w_std}. 
The adversarial validation set is also more challenging than the stylized validation set \REVISION{and the corrupted validation set}.
As a result, the difficulty affects the scores across all metrics.
The oracles trained with the baseline loss functions are still prone to recognize the incorrect prediction to be trustworthy.
The proposed loss consistently improves the performance on FPR-95\%-TPR, AUPR-Sucess, AUC, and TNR.
Note that the adversarial perturbations are computed on the fly \cite{Laidlaw_NeurIPS_2019}. Instead of truncating the sensitive pixel values and saving into the images files, we follow the experimental settings in \cite{Laidlaw_NeurIPS_2019} to evaluate the oracles on the fly.
Hence, the classification accuracies w.r.t. various loss function are slightly different but are stably around 6.14\%.


\begin{table}[!t]
	\centering
	\vspace{-1ex}
	\caption{\label{tbl:perf_vit_vit}
	    Performance on the stylized ImageNet validation set, the adversarial ImageNet validation set, and one (Defocus blur) of validation sets in ImageNet-C. Defocus blus is at at the highest level of severity.
	    \textlangle ViT, ViT\textrangle~ is used in the experiment and the domains of the two validation sets are different from the one of the training set that is used for training the oracle. The corresponding histograms are available in \appref{sec:histogram}. More results on ImageNet-C can be found in \tabref{tbl:perf_imagenetc}.
	}
	\adjustbox{width=1\columnwidth}{
	\begin{tabular}{C{15ex} L{10ex} C{8ex} C{10ex} C{8ex} C{8ex} C{8ex} C{8ex} C{8ex}}
		\toprule
		\textbf{Dataset} & \textbf{Loss} & \textbf{Acc$\uparrow$} & \textbf{FPR-95\%-TPR$\downarrow$} & \textbf{AUPR-Error$\uparrow$} & \textbf{AUPR-Success$\uparrow$} & \textbf{AUC$\uparrow$} & \textbf{TPR$\uparrow$} & \textbf{TNR$\uparrow$} \\
		\cmidrule(lr){1-1} \cmidrule(lr){2-2} \cmidrule(lr){3-3} \cmidrule(lr){4-4} \cmidrule(lr){5-5} \cmidrule(lr){6-6} \cmidrule(lr){7-7} \cmidrule(lr){8-8} \cmidrule(lr){9-9}
		\multirow{4}{*}{Stylized \cite{Geirhos_ICLR_2019}} & CE & 15.94 & 95.52 & 84.18 & 15.86 & 49.07 & \textbf{99.99} & 0.02 \\
		& Focal \cite{Lin_ICCV_2017} & 15.94 & 95.96 & \textbf{85.90} & 14.30 & 46.01 & 99.71 & 0.25 \\
		& TCP \cite{Corbiere_NIPS_2019} & 15.94 & 93.42 & 80.17 & 21.25 & 57.29 & 99.27 & 0.00 \\
		& SS & 15.94 & \textbf{89.38} & 75.08 & \textbf{34.39} & \textbf{67.68} & 44.42 & \textbf{81.22} \\
        \midrule
        \multirow{4}{*}{Adversarial \cite{Laidlaw_NeurIPS_2019}} & CE & 6.14 & 94.35 & \textbf{93.70} & 6.32 & 51.28 & \textbf{99.97} & 0.06 \\
        & Focal \cite{Lin_ICCV_2017} & 6.15 & 93.67 & 93.48 & 6.56 & 52.39 & 99.06 & 1.43 \\
        & TCP \cite{Corbiere_NIPS_2019} & 6.11 & 93.94 & 92.77 & 7.55 & 55.81 & 99.71 & 0.00 \\
        & SS  & 6.16 & \textbf{90.07} & 90.09 & \textbf{13.07} & \textbf{65.36} & 87.10 & \textbf{24.33} \\ \midrule
        \multirow{4}{*}{Defocus blur \cite{Hendrycks_ICLR_2018}} & CE & 31.83 & 94.46 & \textbf{68.56} & 31.47 & 50.13 & \textbf{99.15} & 1.07 \\
		& Focal \cite{Lin_ICCV_2017} & 31.83 & 94.98  & 66.87 & 33.24 & 51.28 & 96.70 & 3.26 \\
		& TCP \cite{Corbiere_NIPS_2019} & 31.83 & 93.50 & 64.67 & 36.05 & 54.27 & 96.71 & 4.35 \\
		& SS & 31.83 & \textbf{90.18} & 57.95 & \textbf{48.80} & \textbf{64.34} & 77.79 & \textbf{37.29} \\
		\bottomrule	
	\end{tabular}}
\end{table}

\begin{figure}[!b]
	\centering
	\subfloat[]{\includegraphics[width=0.32\textwidth]{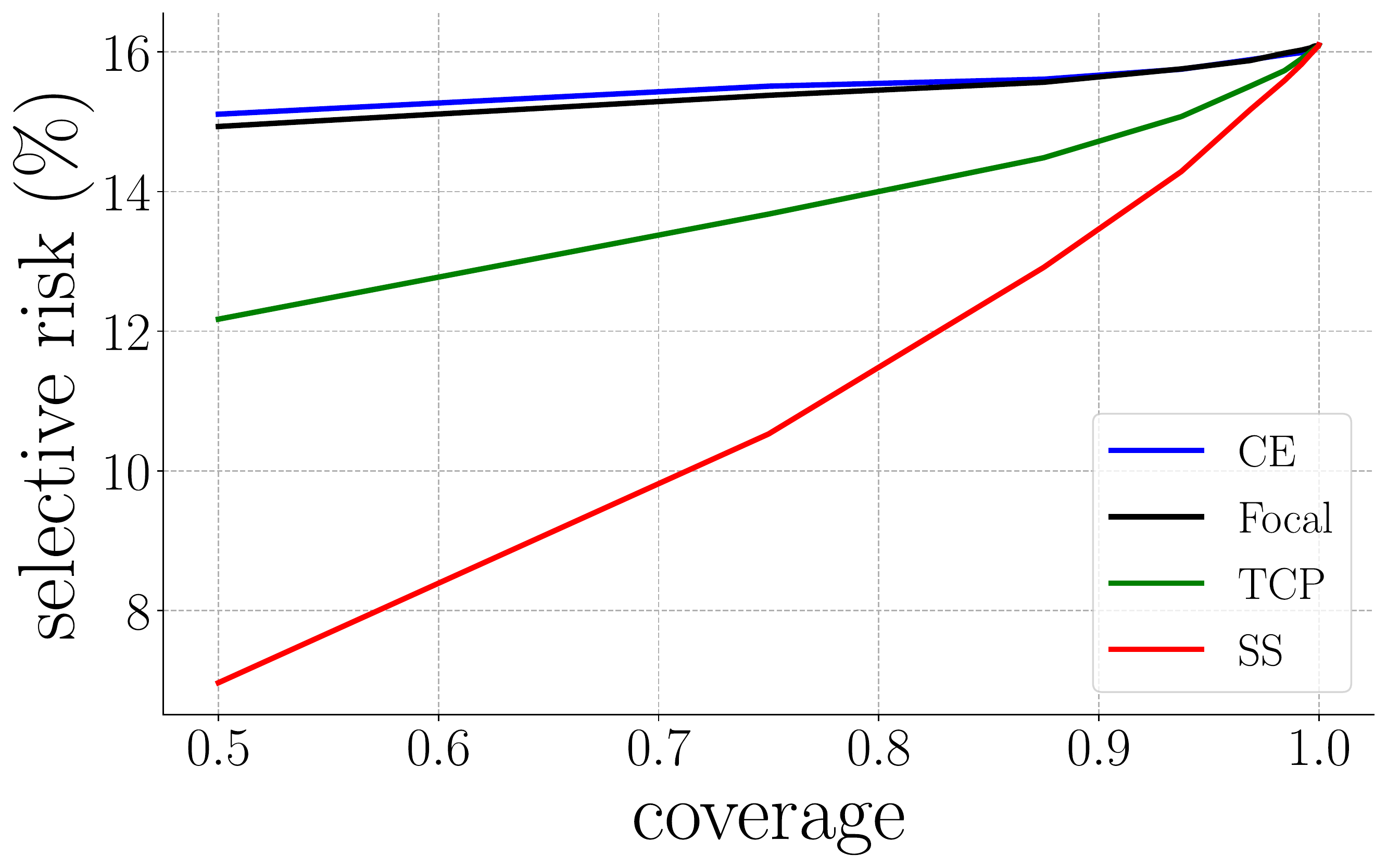} \label{fig:risk_vit}} \hfill
	\subfloat[]{\includegraphics[width=0.30\textwidth]{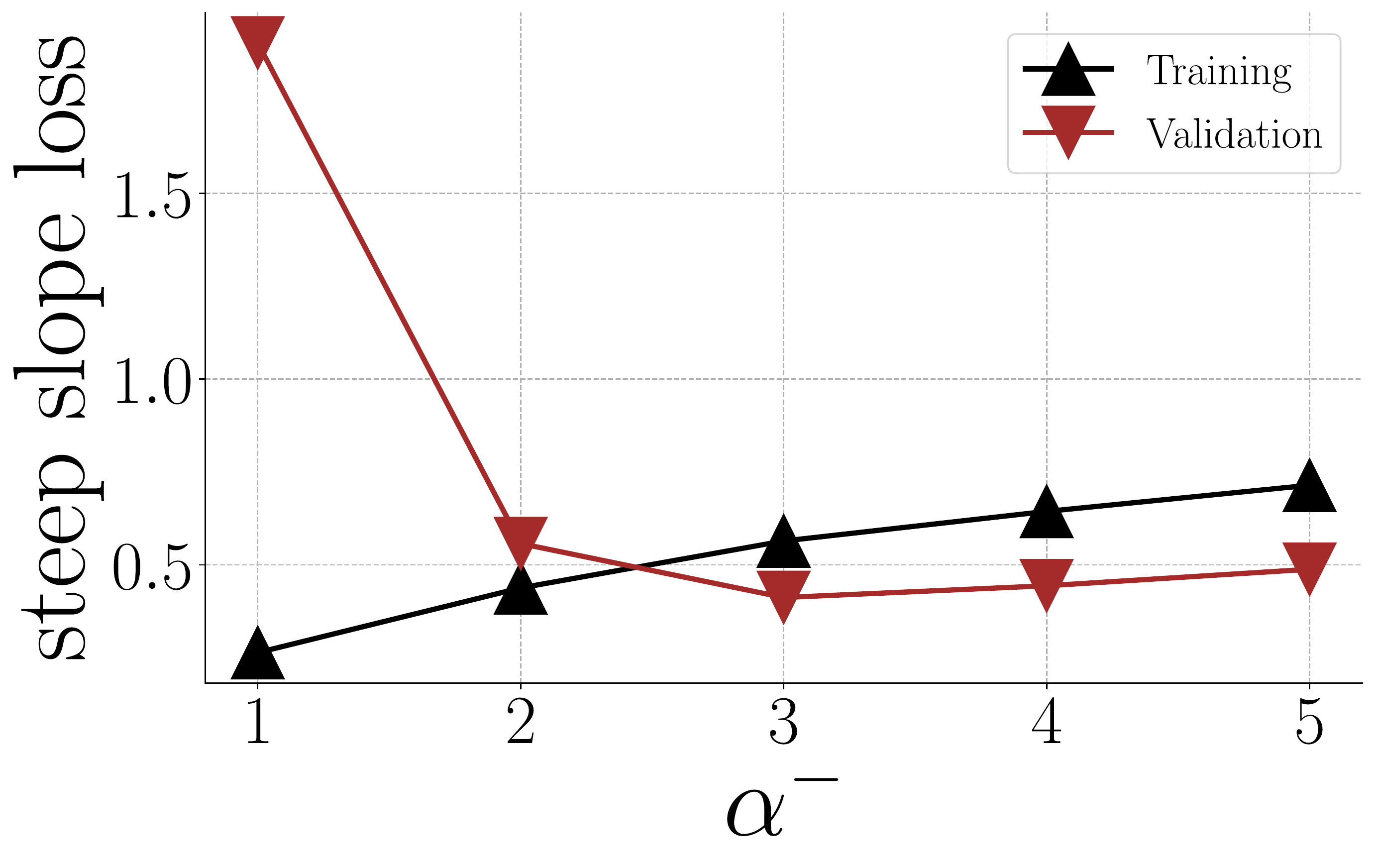} \label{fig:abl_loss}} \hfill
	\subfloat[]{\includegraphics[width=0.32\textwidth]{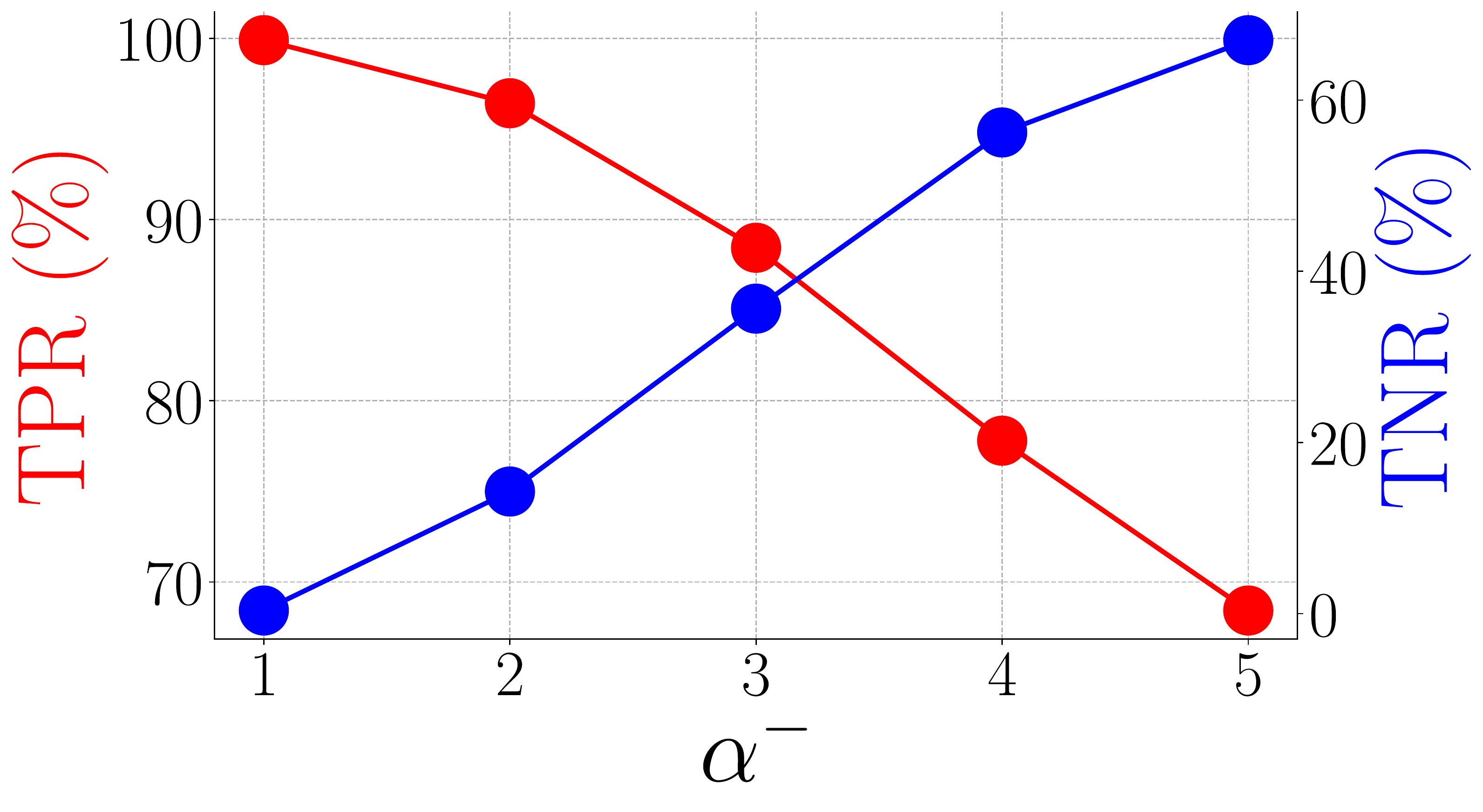} \label{fig:abl_tpr_tnr}} 
	\caption{\label{fig:anal_abl}
    	Analyses based on \textlangle ViT, ViT\textrangle. (a) are the curves of risk vs. coverage. Selective risk represents the percentage of errors in the remaining validation set for a given coverage. (b) are the curves of loss vs. $\alpha^{-}$. (c) are TPR and TNR against various $\alpha^{-}$.
    	}
\end{figure}

\noindent\textbf{Selective Risk Analysis}.
Risk-coverage curve is an important technique for analyzing trustworthiness through the lens of the rejection mechanism in the classification task \cite{Corbiere_NIPS_2019,Geifman_NIPS_2017}. 
In the context of predicting trustworthiness, selective risk is the empirical loss that takes into account the decisions, \ie to trust or not to trust the prediction. 
Correspondingly, coverage is the probability mass of the non-rejected region. As can see in \figref{fig:risk_vit}, the proposed loss leads to significantly lower risks, compared to the other loss functions.
We present the risk-coverage curves w.r.t. all the combinations of oracles and classifiers in \appref{sec:risk}.
They consistently exhibit similar pattern.

\noindent\textbf{Ablation Study}.
In contrast to the compared loss functions, the proposed loss has more hyperparameters to be determined, \ie $\alpha^{+}$ and $\alpha^{-}$.
As the proportion of correct predictions is usually larger than that of incorrect predictions, we would prioritize $\alpha^{-}$ over $\alpha^{+}$ such that the oracle is able to recognize a certain amount of incorrect predictions.
In other words, we first search for $\alpha^{-}$ by freezing $\alpha^{+}$, and then freeze $\alpha^{-}$ and search for $\alpha^{+}$.
\figref{fig:abl_loss} and \ref{fig:abl_tpr_tnr} show how the loss, TPR, and TNR vary with various $\alpha^{-}$. In this analysis, the combination \textlangle ViT, ViT\textrangle~ is used and $\alpha^{+}=1$.
We can see that $\alpha^{-}=3$ achieves the optimal trade-off between TPR and TNR.
We follow a similar search strategy to determine $\alpha^{+}=2$ and $\alpha^{-}=5$ for training the oracle with ResNet backbone.

\noindent\textbf{Effects of Using $z=\bm{w}^{\top}\bm{x}^{out}+b$}.
Using the signed distance as $z$, \ie $z = \frac{\bm{w}^{\top} \bm{x}^{out}+b}{\|\bm{w}\|}$, has a geometric interpretation as shown in \figref{fig:workflow_a}.
However, the main-stream models \cite{He_CVPR_2016,Tan_ICML_2019,Dosovitskiy_ICLR_2021} use $z=\bm{w}^{\top}\bm{x}^{out}+b$. 
Therefore, we provide the corresponding results in appendix \ref{sec:appd_z}, which are generated by the proposed loss taking the output of the linear function as input.
In comparison with the results of using $z = \frac{\bm{w}^{\top} \bm{x}^{out}+b}{\|\bm{w}\|}$, using $z=\bm{w}^{\top}\bm{x}^{out}+b$ yields comparable scores of FPR-95\%-TPR, AUPR-Error, AUPR-Success, and AUC.
Also, TPR and TNR are moderately different between $z = \frac{\bm{w}^{\top} \bm{x}^{out}+b}{\|\bm{w}\|}$ and $z=\bm{w}^{\top}\bm{x}^{out}+b$, when $\alpha^{+}$ and $\alpha^{-}$ are fixed.
This implies that TPR and TNR are sensitive to $\|\bm{w}\|$. 
%
\REVISION{
This is because the normalization by $\|w\|$ would make $z$ more dispersed in value than the variant without normalization. 
In other words, the normalization leads to long-tailed distributions while no normalization leads to short-tailed distributions. 
Given the same threshold, TNR (TPR) is determined by the location of the distribution of negative (positive) examples and the extent of short/long tails. 
Our analysis on the histograms generated with and without $\|w\|$ normalization verifies this point.
}

\noindent\textbf{Steep Slope Loss vs. Class-Balanced Loss}.
We compare the proposed loss to the class-balanced loss \cite{Cui_CVPR_2019}, which is based on a re-weighting strategy.
The results are reported in \appref{sec:cbloss}.
Overall, the proposed loss outperforms the class-balanced loss, which implies that the imbalance characteristics of predicting trustworthiness is different from that of imbalanced data classification.

\section{Conclusion}

In this work, we study the problem of predicting trustworthiness on a large-scale dataset.
We observe that the oracle, \ie trustworthiness predictor, trained with the cross entropy loss, focal loss, and TCP confidence loss lean towards viewing incorrect predictions to be trustworthy due to overfitting.
To improve the generalizability of the oracles, we propose the steep slope loss that encourages the features w.r.t. correct predictions and incorrect predictions to be separated from each other.
We evaluate the proposed loss on ImageNet through the lens of the trustworthiness metrics, selective classification metric, and separability of distributions, respectively. Experimental results show that the proposed loss is effective in improving the generalizability of trustworthiness predictors.

\section{Societal Impact}

\REVISION{
Learning high-accuracy models is a long-standing goal in machine learning. Nevertheless, due to the complexity of real-world data, there is still a gap between state-of-the-art classification models and a perfect one. Hence, there is a critical need to understand the trustworthiness of classification models, \ie differentiate correct predictions and incorrect predictions, in order to safely and effectively apply the models in real-world tasks. Models with the proposed loss achieve considerably better performance with various trustworthiness metrics. They also show generalizability with both in-distribution and out-of-distribution images at a large scale. In addition, while existing trustworthiness models focus on a high TPR and tend to view all incorrect predictions to be trustworthy (\ie TNR close to 0), false positives may lead to high consequent cost in some real-world scenarios such as critical applications in medicine (\eg a false positive leading to unnecessary and invasive tests or treatment such as biopsies or surgery, or harmful side effects in medicine) and security (\eg counter terrorism). It is thus of great importance for a trustworthiness model to flexibly trade off between TPR and TNR. To this end, the proposed loss allows the underlying distributions of positive and negative examples to be more separable, enabling a more effective trade-off between them. 
}

\begin{ack}
This research was funded in part by the NSF under Grants 1908711, 1849107, and in part supported by the National Research Foundation, Singapore under its Strategic Capability Research Centres Funding Initiative. Any opinions, findings and conclusions or recommendations expressed in this material are those of the author(s) and do not reflect the views of National Research Foundation, Singapore.
\end{ack}





\newpage
\appendix

\setcounter{figure}{0}
\renewcommand{\thefigure}{A\arabic{figure}}
\setcounter{table}{0}
\renewcommand{\thetable}{A\arabic{table}}


\section{Generalization Bound}
\label{sec:gb}
\begin{theorem*}
	Denote $\text{maximum}\{\exp(\alpha^{+})-\exp(-\alpha^{+}), \exp(\alpha^{-})-\exp(-\alpha^{-})\}$ as $\ell_{SS}^{max}$. $\ell_{SS}\in [0, \ell_{SS}^{max}]$. Assume $\mathcal{F}$ is a finite hypothesis set, for any $\delta>0$, with probability at least $1-\delta$, the following inequality holds for all $f\in \mathcal{F}$:
	\begin{align*}
	|\mathcal{R}(f) - \hat{\mathcal{R}}_{D}(f) | \le  \ell^{max}_{SS}\sqrt{\frac{\log|\mathcal{F}|+\log\frac{2}{\delta}}{2|D|}}
	\end{align*}
\end{theorem*}
\begin{proof}[Proof]
	The proof sketch is similar to the generalization bound provided in \cite{Mohri_MIT_2018}.
	By the union bound, given an error $\xi$, we have
	\begin{align*}
	p[\sup_{f\in \mathcal{F}}|R(f)-\hat{R}(f)| > \xi] \le \sum_{f\in \mathcal{F}}^{} p[|R(f)-\hat{R}(f)|> \xi].
	\end{align*}
	By Hoeffding's bound, we have
	\begin{align*}
	\sum_{f\in \mathcal{F}}^{} p[|\mathcal{R}(f)-\hat{\mathcal{R}}(f)|> \xi] \le 2|\mathcal{F}|\exp \left(-\frac{2|D|\xi^2}{(\ell_{SS}^{max})^{2}} \right).
	\end{align*}
	Due to the probability definition, $2|\mathcal{F}|\exp (-\frac{2|D|\xi^2}{(\ell_{SS}^{max})^{2}}) = \delta$. Considering $\xi$ is a function of other variables, we can rearrange it as 
	$\xi=\ell_{SS}^{max}\sqrt{\frac{\log|\mathcal{F}|+\log\frac{2}{\delta}}{2|D|}}$.
	Since we know $p[|\mathcal{R}(f)-\hat{\mathcal{R}}(f)| > \xi]$ is with probability at most $\delta$, it can be inferred that $p[|\mathcal{R}(f)-\hat{\mathcal{R}}(f)| <= \xi]$ is at least $1-\delta$.
\end{proof}

\section{Experimental Set-Up}
\label{sec:implementation}
The ViT (\ie ViT Base/16) used in this work is implemented in the ASYML project\footnote{\url{https://github.com/asyml/vision-transformer-pytorch}}, which is based on PyTorch.
The pre-trained weights are the same as the original pre-trained weights\footnote{\url{https://github.com/google-research/vision_transformer}}.
On the other hand, the pre-trained ResNet (\ie ResNet-50) is provided in PyTorch\footnote{\url{https://pytorch.org/vision/stable/models.html}}.
For the analyses, we use the official implementation\footnote{\url{https://github.com/valeoai/ConfidNet}} of the TCP confidence loss \cite{Corbiere_NIPS_2019}
and the PyTorch implementation\footnote{\url{https://github.com/vandit15/Class-balanced-loss-pytorch}} of the class-balanced loss \cite{Cui_CVPR_2019}.

We use the training scheme implemented by ASYML and tune the hyperparameters such that the oracles are trained with the cross entropy loss and focal loss to produce the best performance among multiple trials. Then, we fix the set of hyperparameters for the TCP confidence loss and the proposed loss.
Each combination of classifiers and oracles undergoes the same training scheme.
Specifically, the stochastic gradient descent (SGD) optimization method is used with initial learning rate 1e-5, weight decay 0, and momentum 0.05 for optimizing the learning problem.
The 1-cycle learning rate policy applies at each learning step.
The batch size is fixed to 40 as ViT would make the full use of four 12 GB GPUs with 40 images.
To stimulate a challenging and practically useful environment, we train the oracle in only one epoch, rather than multiple epochs.
All the three baseline loss functions and the proposed loss use the same hyperparameters and undergo the same experimental protocol for training the oracle.
The code is implemented in Python 3.8.5 with PyTorch 1.7.1 \cite{Paszke_NIPS_2019} and is tested under Ubuntu 18.04 with four NVIDIA GTX 1080 Ti graphics cards in a standalone machine.

We run the experiments three times with random seeds and report the means and the standard deviations of scores in \tabref{tbl:all_perf_w_std}. For the other experiments or analyses, we run one time.

\subsection{Implementation Details for Small-Scale Datasets}
\label{sec:mnist}
The resulting results are reported in \tabref{tbl:perf_mnist}. The experiment is based on the official implementation\footnote{\url{https://github.com/valeoai/ConfidNet}} of \cite{Corbiere_NIPS_2019}. The implementation provides the pre-trained models on MNIST and CIFAR-10.
We fine-tune the pre-trained models with the proposed steep slope loss. For comparison purposes, we also fine-tune the pre-trained with the TCP confidence loss (\ie \textit{TCP$\dagger$}), where the experimental settings of the fine-tuning process are the same as the ones of the fine-tuning process with the proposed steep slope loss. The proposed loss use $\alpha^{+}=10$ and $\alpha^{-}=6$ on MNIST, and $\alpha^{+}=1$ and $\alpha^{-}=1$ on CIFAR-10.

\section{License of Assets}
\label{sec:license}

MNIST \cite{Lecun_IEEE_1998} is made available under the terms of the Creative Commons Attribution-Share Alike 3.0 license, while ImageNet \cite{Deng_CVPR_2009} is licensed under the BSD 3-Clause ``New'' or ``Revised'' License.

PyTorch \cite{Paszke_NIPS_2019} is available under a BSD-style license.
The official ViT \cite{Dosovitskiy_ICLR_2021} implementation is licensed under the Apache-2.0 License, while the implementation of ViT is licensed under the Apache-2.0 License.
The code of TCP \cite{Corbiere_NIPS_2019} is licensed under the Apache License.
The PyTorch version of class balanced loss \cite{Cui_CVPR_2019} is licensed under the MIT License.

We make our code and pre-trained oracles publicly available via \url{https://github.com/luoyan407/predict_trustworthiness} with the MIT License.

\section{Experimental Result}
\label{sec:histogram}

As \REVISION{shown in \tabref{tbl:all_perf_w_std} and} discussed in the experiment section, the proposed loss consistently improves the performance on metrics FPR-95\%-TPR, AUPR-Success, AUC, and TNR while the corresponding variances are comparable to the other loss functions.

We plot all the histograms in \figref{fig:histogram} and \figref{fig:distribution_unseen} that correspond to \tabref{tbl:all_perf_w_std} and \tabref{tbl:perf_vit_vit}, respectively.
Ideally, we hope that all the confidences w.r.t. the positive class are on the right-hand side of the positive threshold while the ones w.r.t. the negative class are on the left-hand side of the negative threshold.
From \figref{fig:histogram} and \figref{fig:distribution_unseen}, we can see that the proposed loss works in this direction, \ie the attempt pushing all the confidences w.r.t. the positive (negative) class to the right-hand (left-hand) side of the positive (negative) threshold.

\begin{figure}[!t]
	\centering
	\subfloat[\textlangle ViT, ViT\textrangle + CE]{\includegraphics[width=0.24\textwidth]{fig/hist/ce_vit_vit_val}    } \hfill
	\subfloat[\textlangle ViT, ViT\textrangle + Focal]{\includegraphics[width=0.24\textwidth]{fig/hist/focal_vit_vit_val}    } \hfill
	\subfloat[\textlangle ViT, ViT\textrangle + TCP]{\includegraphics[width=0.24\textwidth]{fig/hist/tcp_vit_vit_val}    } \hfill
	\subfloat[\textlangle ViT, ViT\textrangle +  SS]{\includegraphics[width=0.24\textwidth]{fig/hist/ss_vit_vit_val}    } \\
	\subfloat[\textlangle ViT, RSN\textrangle + CE]{\includegraphics[width=0.24\textwidth]{fig/hist/ce_vit_rsn_val}    } \hfill
	\subfloat[\textlangle ViT, RSN\textrangle + Focal]{\includegraphics[width=0.24\textwidth]{fig/hist/focal_vit_rsn_val}    } \hfill
	\subfloat[\textlangle ViT, RSN\textrangle + TCP]{\includegraphics[width=0.24\textwidth]{fig/hist/tcp_vit_rsn_val}    } \hfill
	\subfloat[\textlangle ViT, RSN\textrangle + SS]{\includegraphics[width=0.24\textwidth]{fig/hist/ss_vit_rsn_val}    } \\
	\subfloat[\textlangle RSN, ViT\textrangle + CE]{\includegraphics[width=0.24\textwidth]{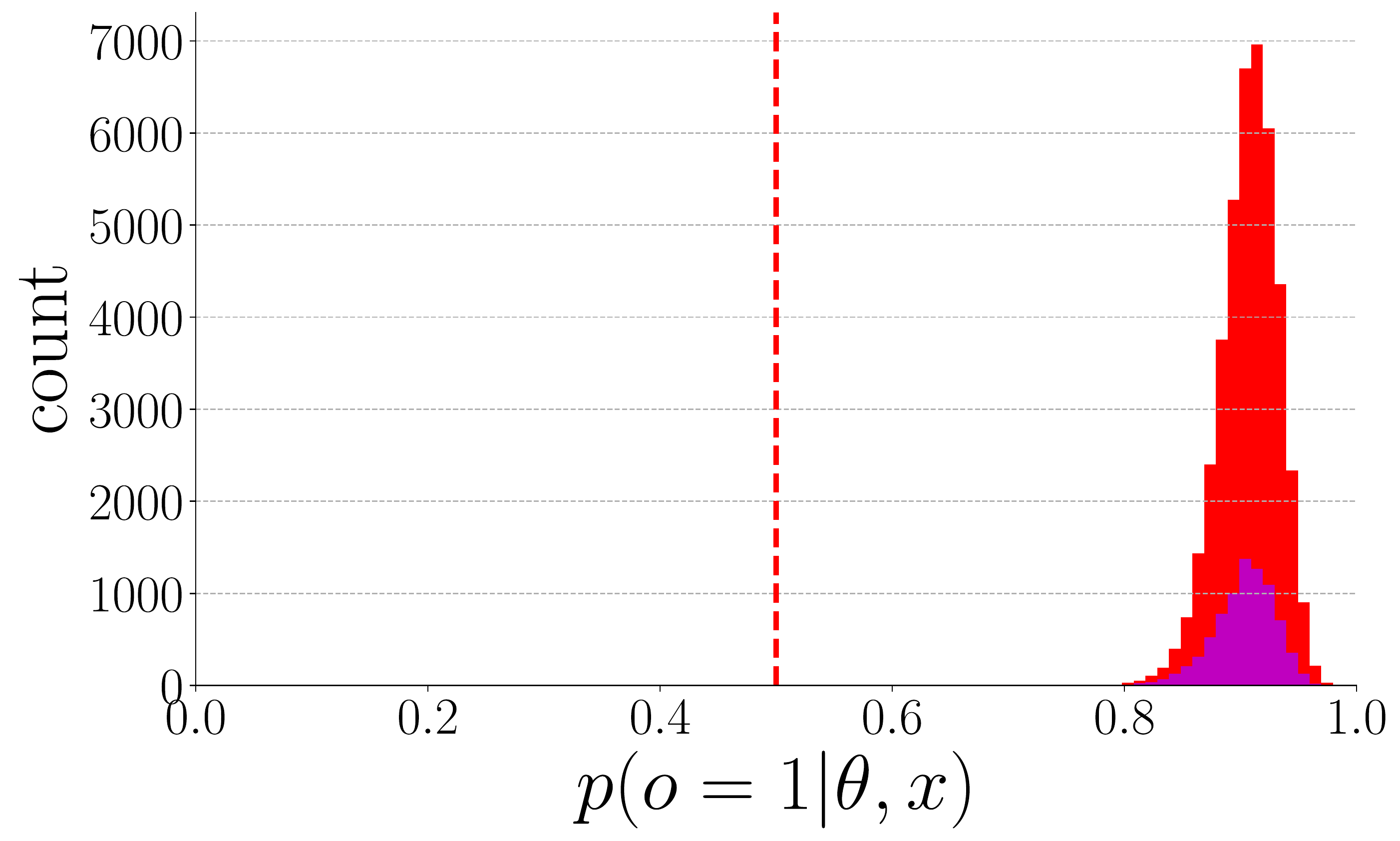}    } \hfill
	\subfloat[\textlangle RSN, ViT\textrangle + Focal]{\includegraphics[width=0.24\textwidth]{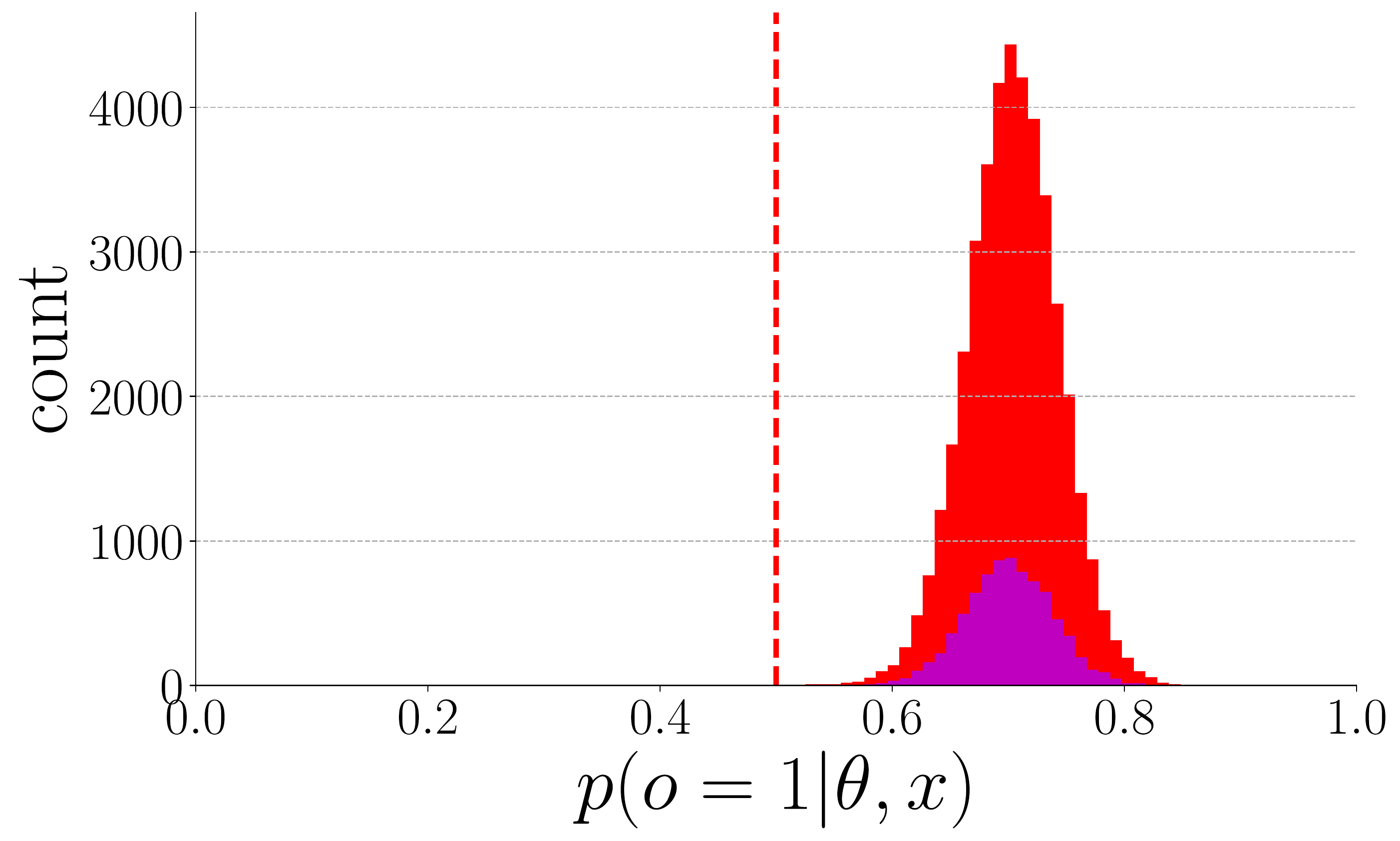}    } \hfill
	\subfloat[\textlangle RSN, ViT\textrangle + TCP]{\includegraphics[width=0.24\textwidth]{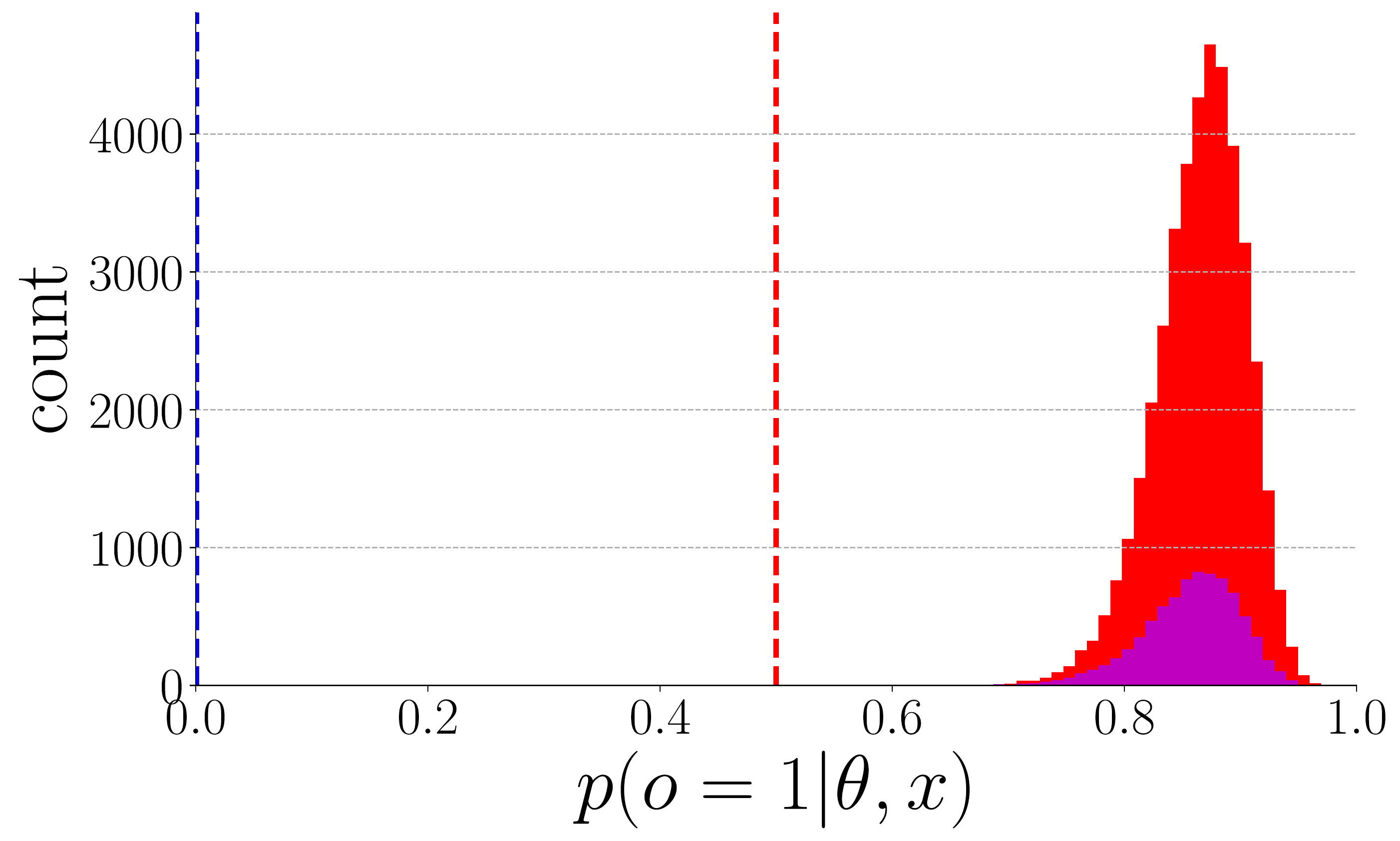}    } \hfill
	\subfloat[\textlangle RSN, ViT\textrangle + SS]{\includegraphics[width=0.24\textwidth]{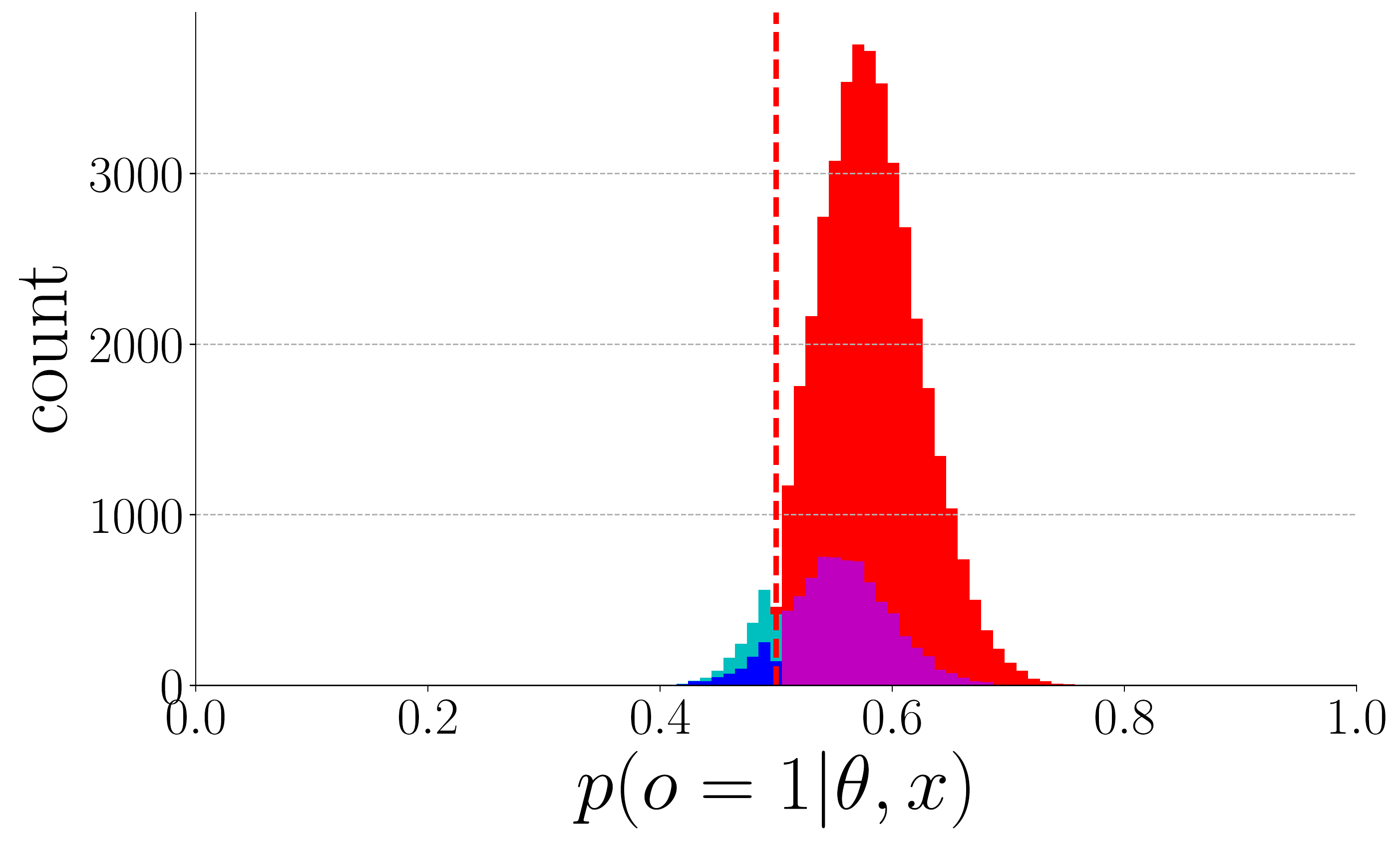}    } \\
	\subfloat[\textlangle RSN, RSN\textrangle + CE]{\includegraphics[width=0.24\textwidth]{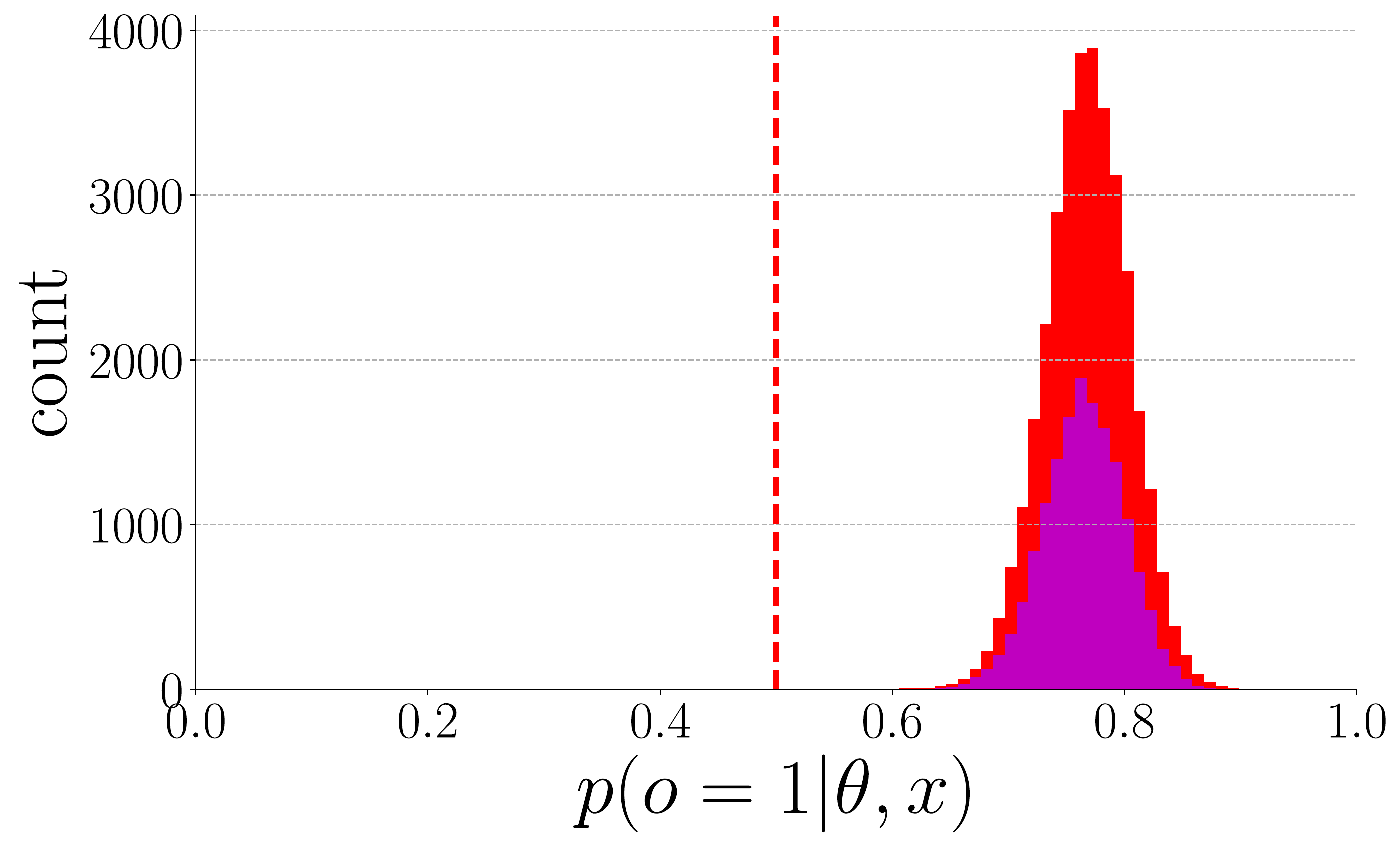}    } \hfill
	\subfloat[\textlangle RSN, RSN\textrangle + Focal]{\includegraphics[width=0.24\textwidth]{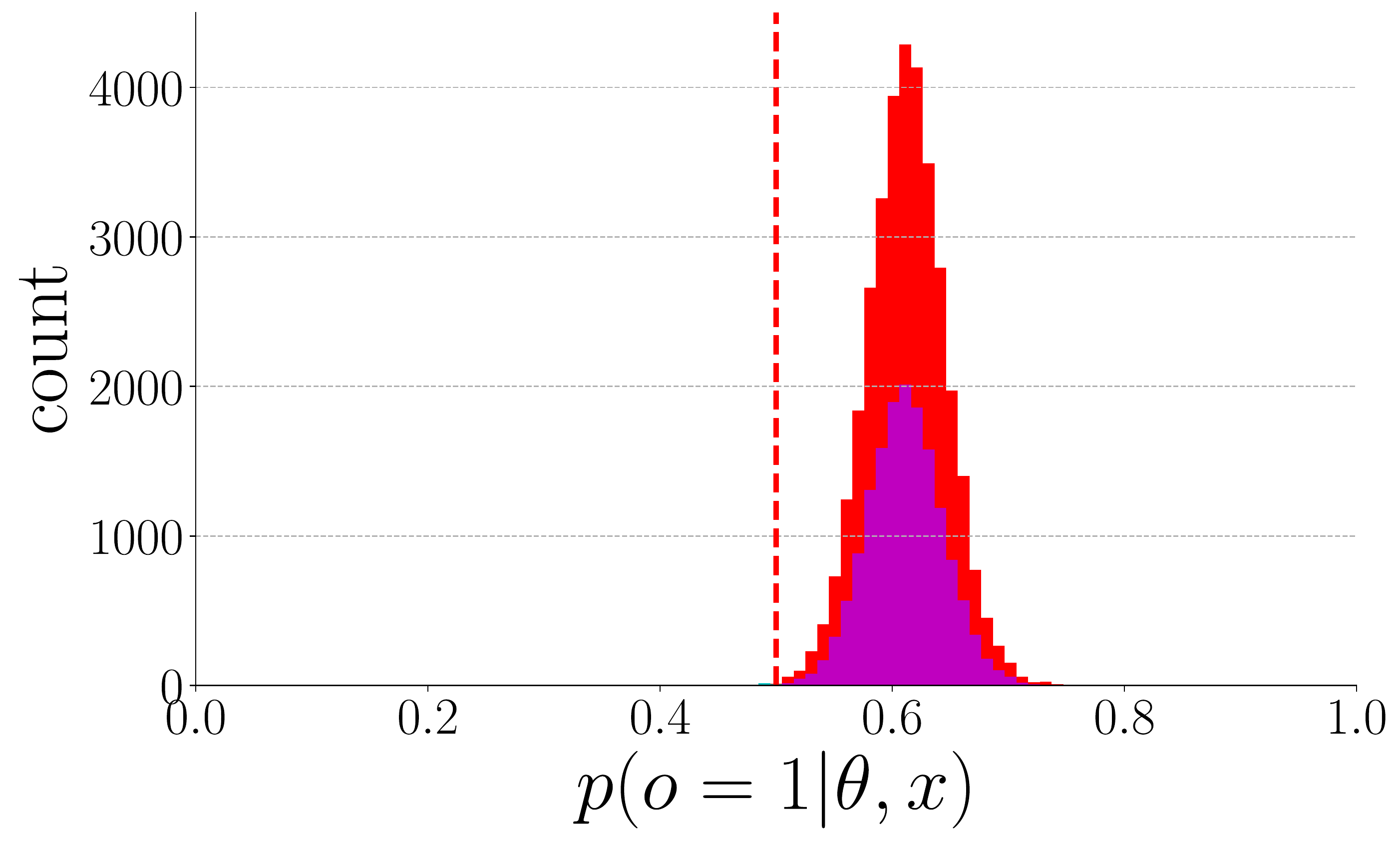}    } \hfill
	\subfloat[\textlangle RSN, RSN\textrangle + TCP]{\includegraphics[width=0.24\textwidth]{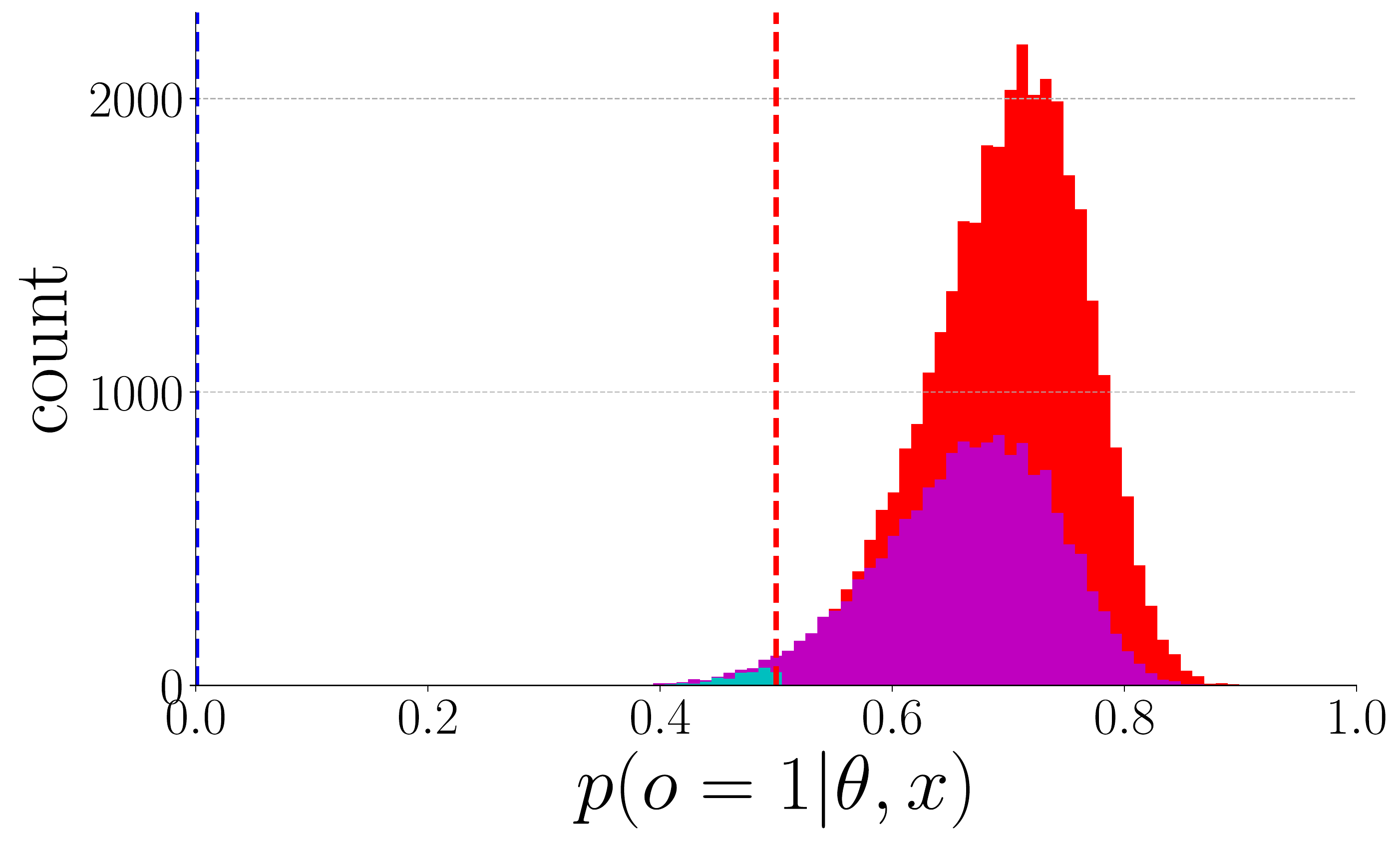}    } \hfill
	\subfloat[\textlangle RSN, RSN\textrangle + SS]{\includegraphics[width=0.24\textwidth]{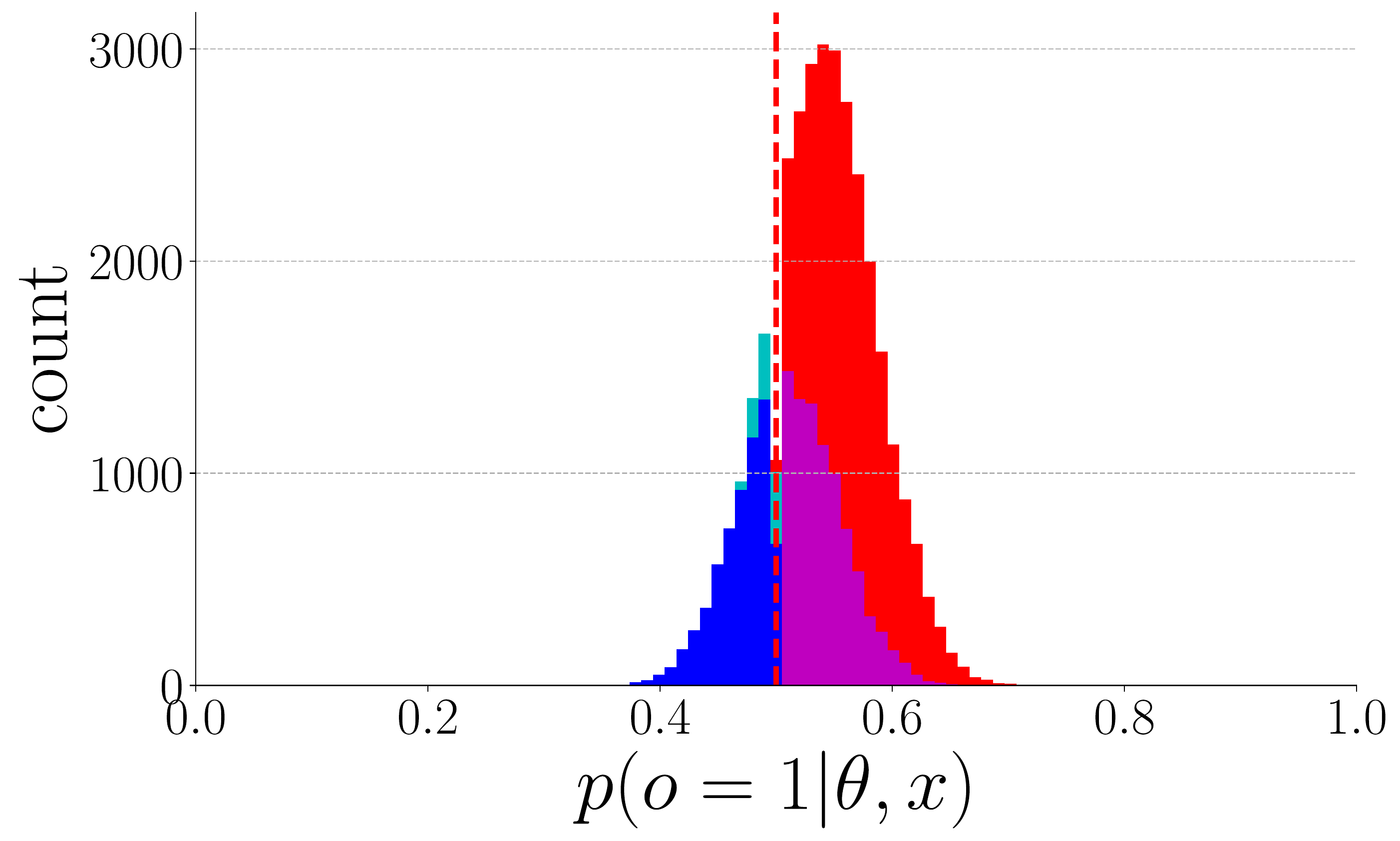}    } \\
	\caption{\label{fig:histogram}
    	Histograms of trustworthiness confidences w.r.t. all the loss functions on the ImageNet validation set.
    	The oracles that are used to generate the confidences are the ones used in \tabref{tbl:all_perf_w_std}.
    	}
\end{figure}
\begin{figure}[!t]
	\centering
	\subfloat[CE on stylized val]{\includegraphics[width=0.24\textwidth]{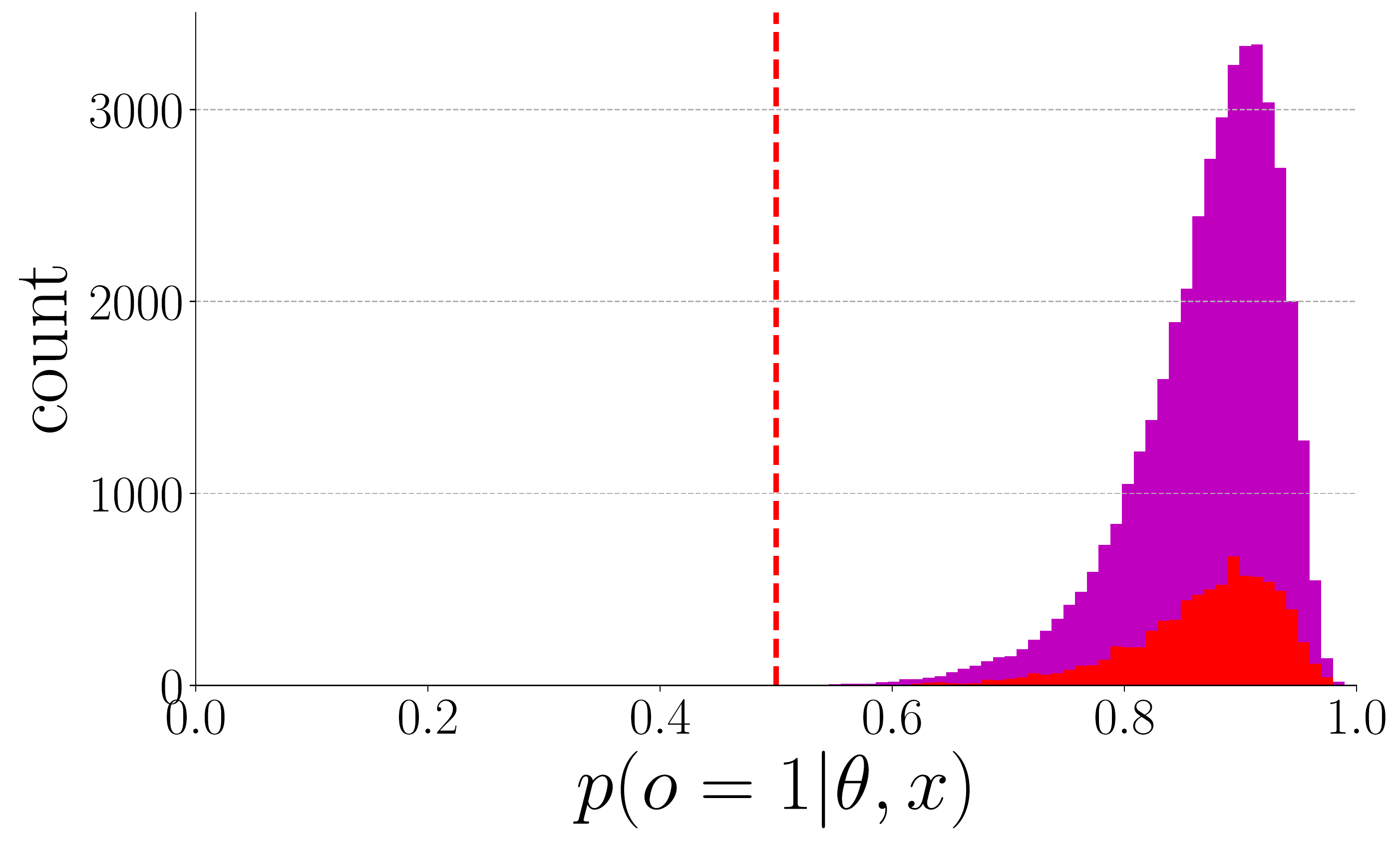}    } \hfill
	\subfloat[Focal on stylized val]{\includegraphics[width=0.24\textwidth]{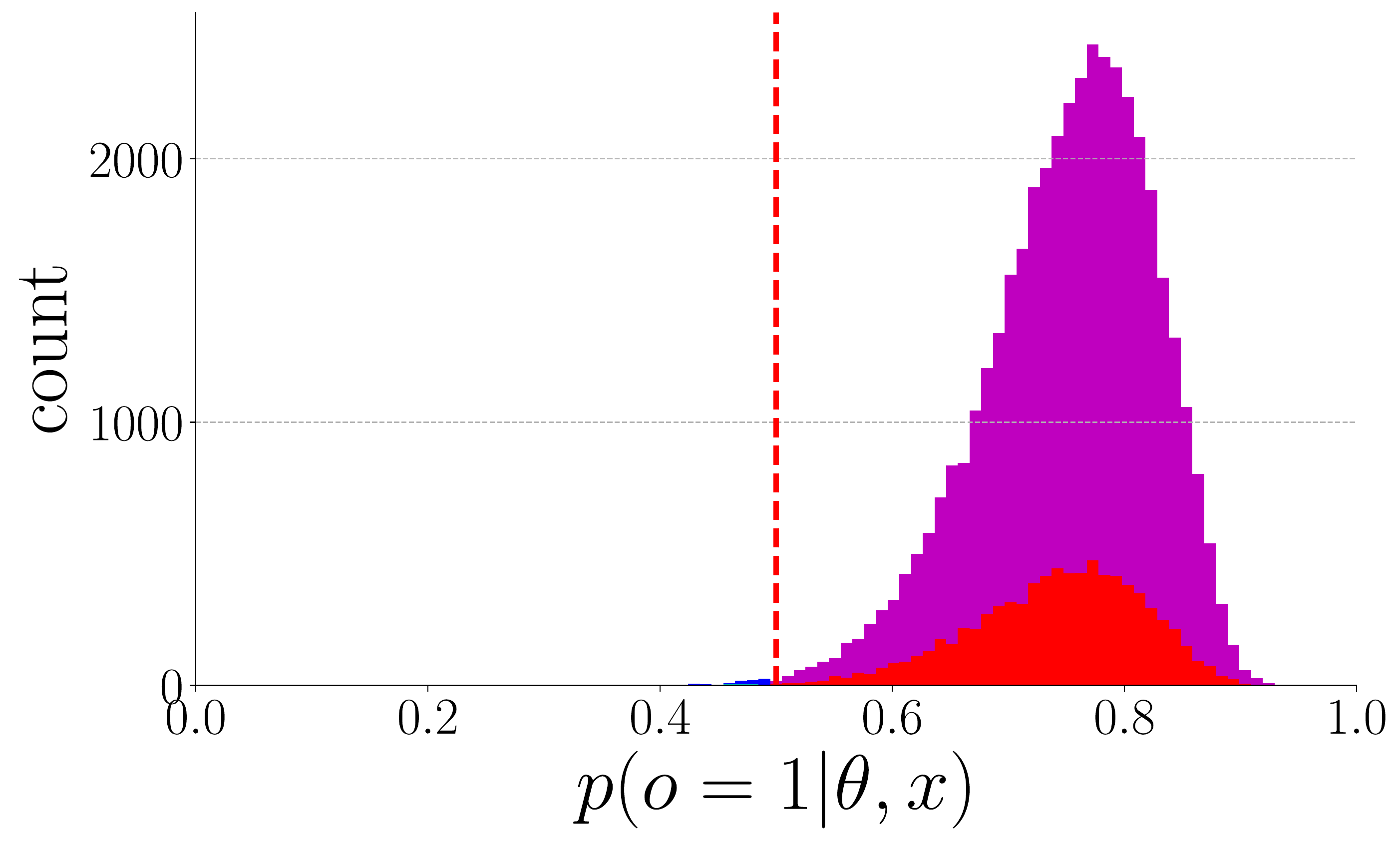}    } \hfill
	\subfloat[TCP on stylized val]{\includegraphics[width=0.24\textwidth]{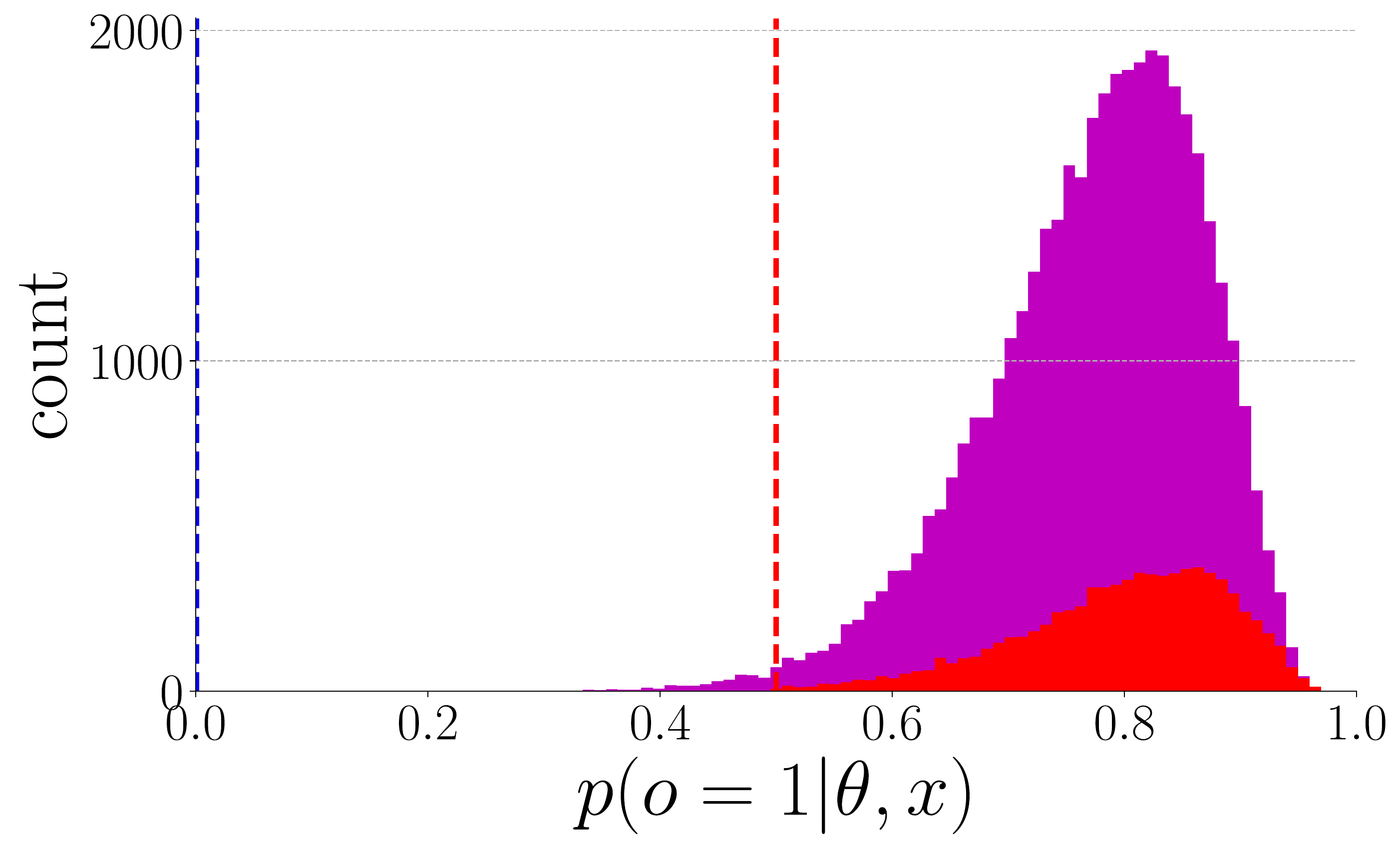}    } \hfill
	\subfloat[SS on stylized val]{\includegraphics[width=0.24\textwidth]{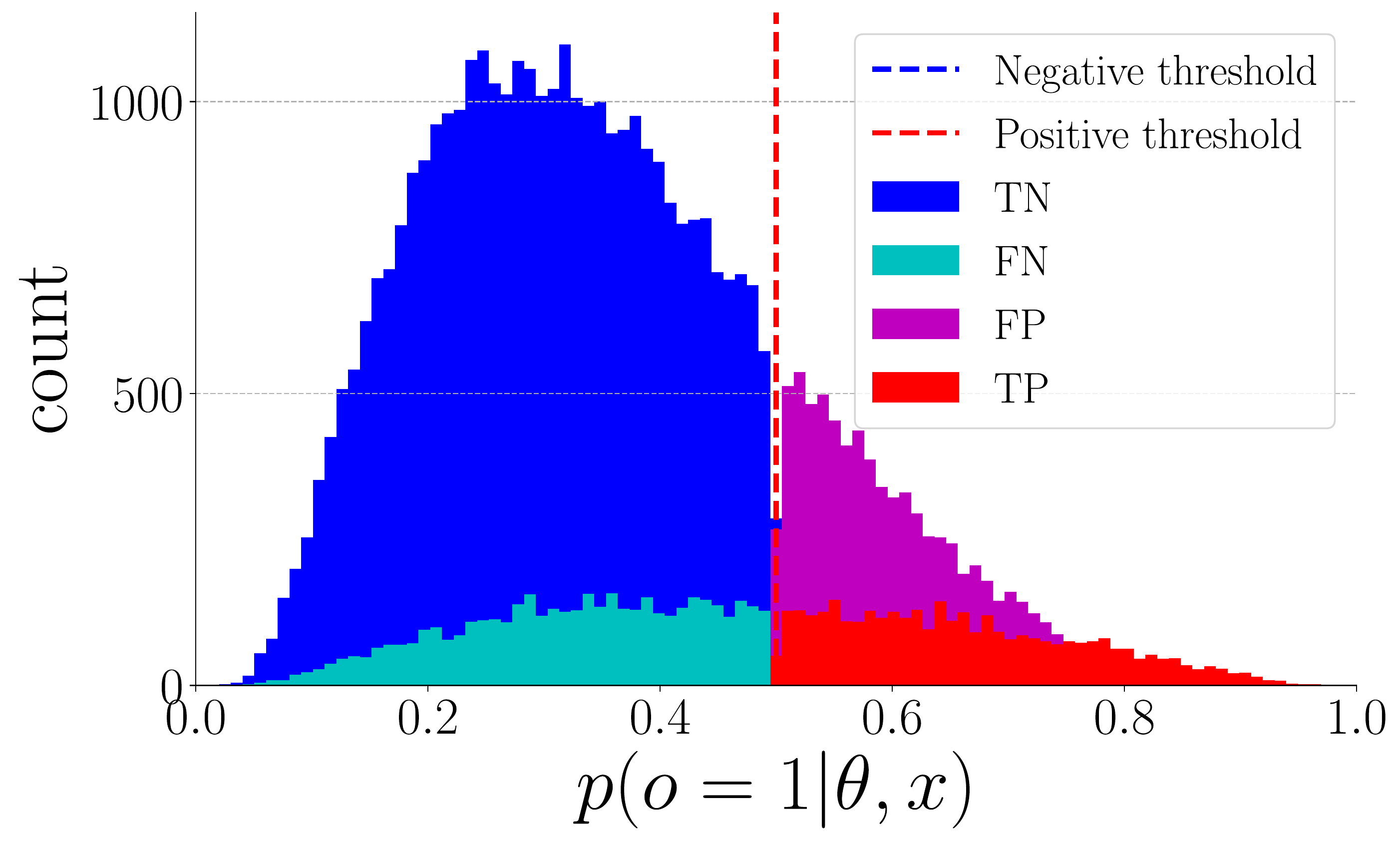}    } \\
	\subfloat[CE on adversarial val]{\includegraphics[width=0.24\textwidth]{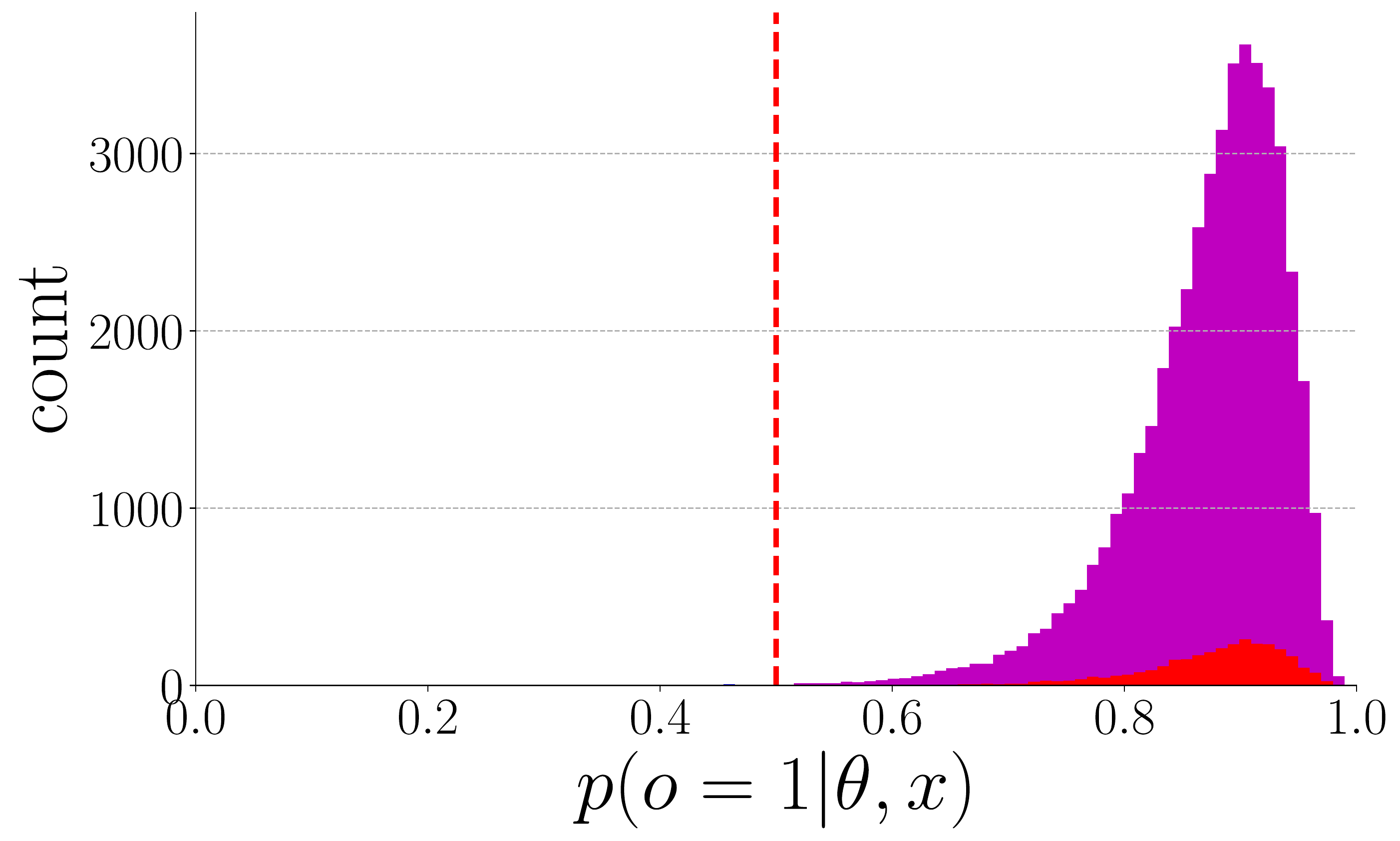}    } \hfill
	\subfloat[Focal on adversarial val]{\includegraphics[width=0.24\textwidth]{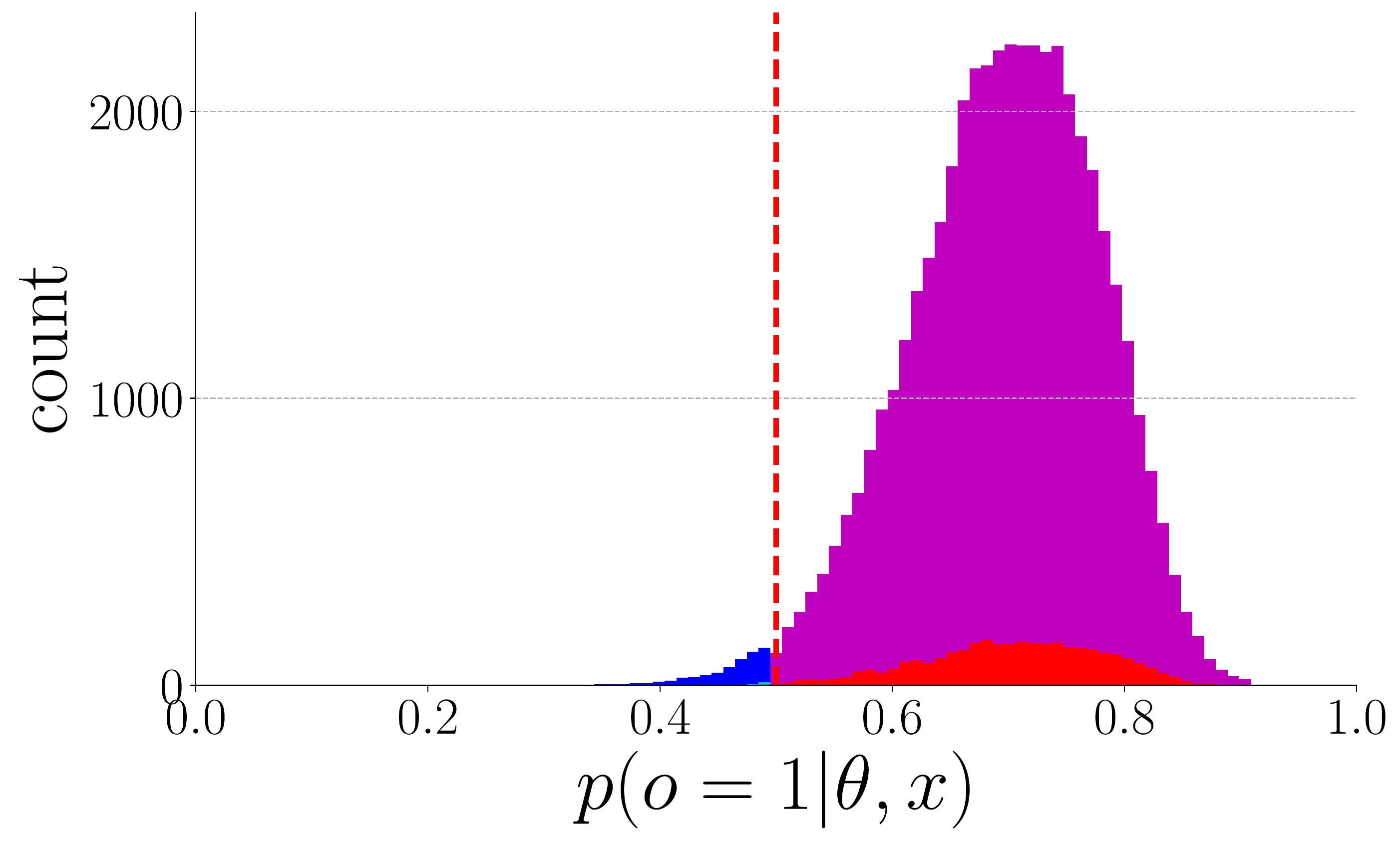}    } \hfill
	\subfloat[TCP on adversarial val]{\includegraphics[width=0.24\textwidth]{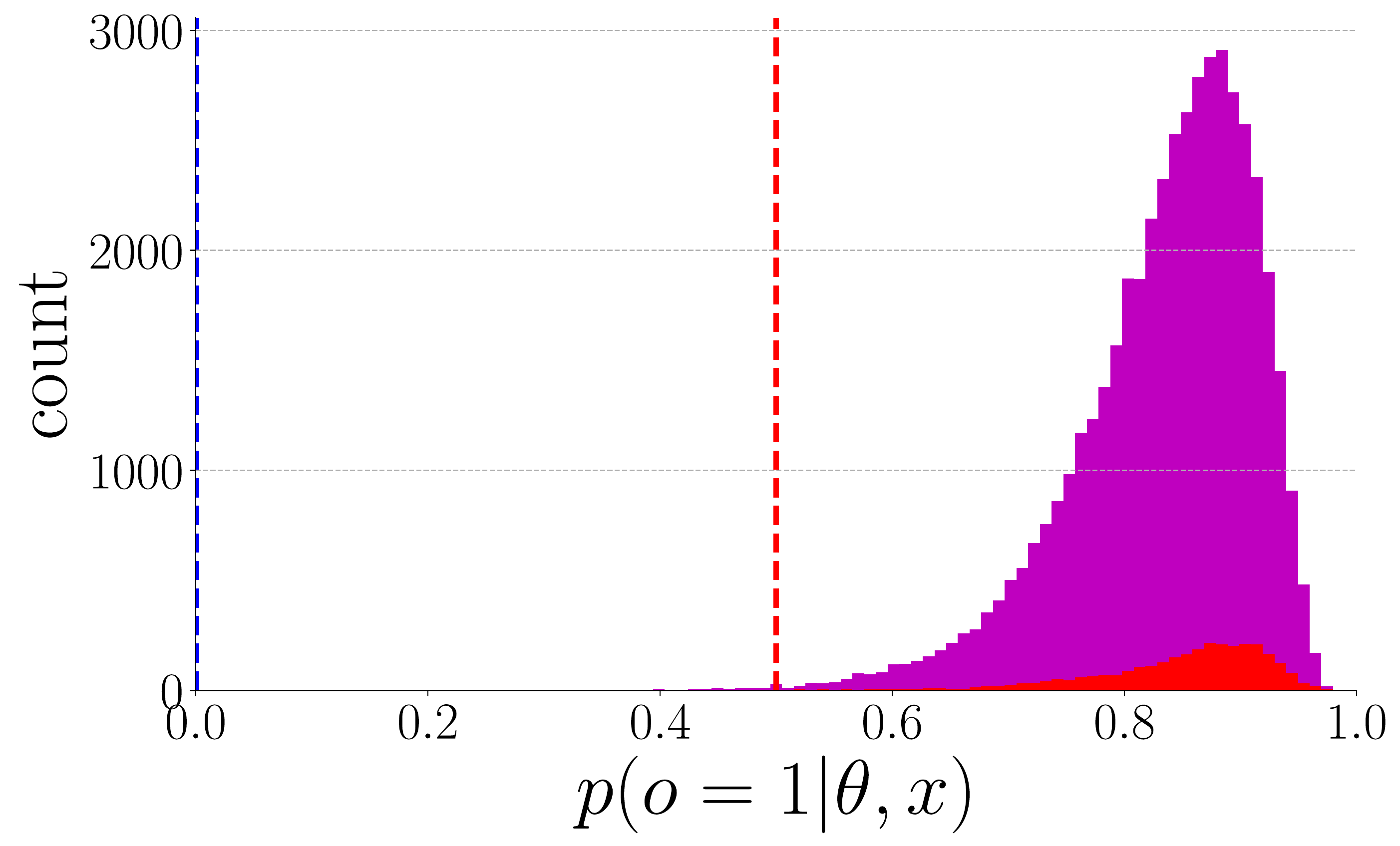}    } \hfill
	\subfloat[SS on adversarial val]{\includegraphics[width=0.24\textwidth]{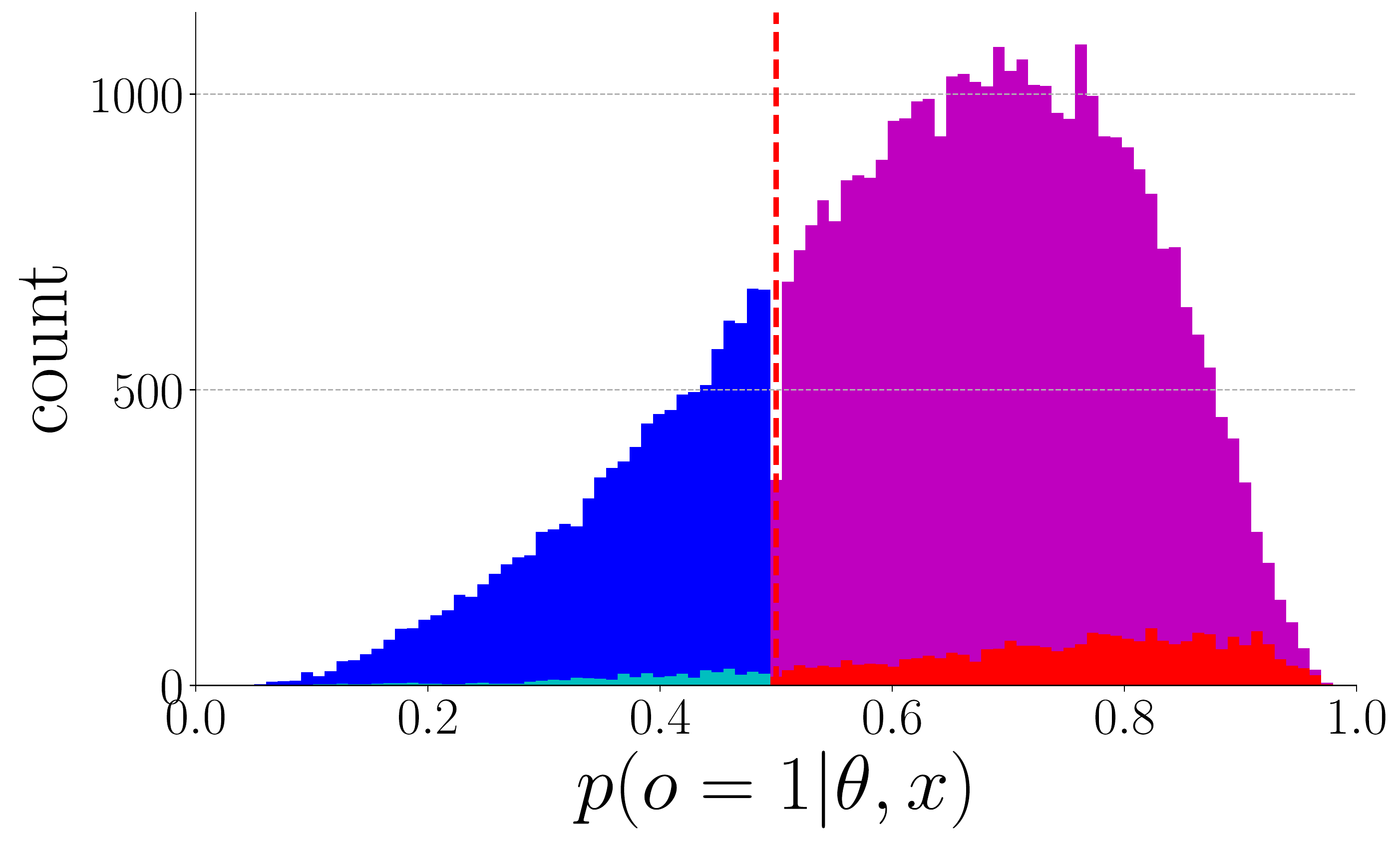}    } \\
	\caption{\label{fig:distribution_unseen}
    	Histograms of trustworthiness confidences w.r.t. all the loss functions on the stylized ImageNet validation set (stylized val) and the adversarial ImageNet validation set (adversarial val). \textlangle ViT, ViT\textrangle is used in the experiment and the domains of the two validation sets are different from the one of the training set that is used for training the oracle.
    	The histograms correspond to the oracles used in \tabref{tbl:perf_vit_vit}.
    	}
\end{figure}

\REVISION{To comprehensively evaluate the proposed loss function, we conduct the experiments on various out-of-distribution (OOD) datasets, including ImageNet-C (corrupted ImageNet) \cite{Hendrycks_ICLR_2018}. Specifically, we evaluate the trustworthiness predictor trained on ImageNet on the sets of defocus blur, glass blur, motion blur, and zoom blur at the highest level of severity (\ie the most challenging setting). The results with setting \textlangle ViT, ViT\textrangle are reported in \tab \ref{tbl:perf_imagenetc}. The results are consistent with the ones on the stylized ImageNet and the adversarial ImageNet.}

\begin{table}[!t]
	\centering
	\vspace{-1ex}
	\caption{\label{tbl:perf_imagenetc}
	    Performance on ImageNet-C validation set at the highest level of severity \cite{Hendrycks_ICLR_2018}.
	    \textlangle ViT, ViT\textrangle~ is used in the experiment and the domains of the two validation sets are different from the one of the training set that is used for training the oracle. 
	}
	\adjustbox{width=1\columnwidth}{
	\begin{tabular}{C{15ex} L{10ex} C{8ex} C{10ex} C{8ex} C{8ex} C{8ex} C{8ex} C{8ex}}
		\toprule
		\textbf{Set} & \textbf{Loss} & \textbf{Acc$\uparrow$} & \textbf{FPR-95\%-TPR$\downarrow$} & \textbf{AUPR-Error$\uparrow$} & \textbf{AUPR-Success$\uparrow$} & \textbf{AUC$\uparrow$} & \textbf{TPR$\uparrow$} & \textbf{TNR$\uparrow$} \\
		\cmidrule(lr){1-1} \cmidrule(lr){2-2} \cmidrule(lr){3-3} \cmidrule(lr){4-4} \cmidrule(lr){5-5} \cmidrule(lr){6-6} \cmidrule(lr){7-7} \cmidrule(lr){8-8} \cmidrule(lr){9-9}
		\multirow{4}{*}{Defocus blur} & CE & 31.83 & 94.46 & 68.56 & 31.47 & 50.13 & 99.15 & 1.07 \\
		& Focal \cite{Lin_ICCV_2017} & 31.83 & 94.98  & 66.87 & 33.24 & 51.28 & 96.70 & 3.26 \\
		& TCP \cite{Corbiere_NIPS_2019} & 31.83 & 93.50 & 64.67 & 36.05 & 54.27 & 96.71 & 4.35 \\
		& SS & 31.83 & 90.18 & 57.95 & 48.80 & 64.34 & 77.79 & 37.29 \\
        \midrule
        \multirow{4}{*}{Glass blur} & CE & 28.76 & 94.68 & 71.63 & 28.42 & 50.31 & 99.45 & 0.52 \\
		& Focal \cite{Lin_ICCV_2017} & 28.76 & 92.00 & 67.25 & 33.34 & 56.57 & 96.42 & 6.00 \\
		& TCP \cite{Corbiere_NIPS_2019} & 28.76 & 91.24 & 65.64 & 35.72 & 58.78 & 96.95 & 5.50 \\
		& SS & 28.76 & 85.03 & 58.58 & 52.55 & 70.18 & 62.68 & 66.25 \\
		\midrule
        \multirow{4}{*}{Motion blur} & CE & 43.24 & 94.53 & 56.32 & 43.69 & 50.76 & 99.75 & 0.31 \\
		& Focal \cite{Lin_ICCV_2017} & 43.24 & 93.67 & 53.83 & 46.47 & 54.02 & 98.88 & 1.55 \\
		& TCP \cite{Corbiere_NIPS_2019} & 43.24 & 92.29 & 50.40 & 51.20 & 58.36 & 99.82 & 0.31 \\
		& SS & 43.24 & 86.47 & 43.91 & 65.17 & 69.52 & 63.86 & 64.49 \\
		\midrule
        \multirow{4}{*}{Zoom blur} & CE & 39.68 & 93.28 & 56.91 & 43.45 & 54.90 & 99.71 & 0.42 \\
		& Focal \cite{Lin_ICCV_2017} & 39.68 & 92.54 & 54.93 & 46.18 & 57.19 & 98.85 & 1.63 \\
		& TCP \cite{Corbiere_NIPS_2019} & 39.68 & 89.47 & 51.79 & 51.06 & 62.47 & 99.64 & 0.93 \\
		& SS & 39.68 & 86.53 & 46.74 & 63.84 & 70.94 & 71.58 & 57.38 \\
		\bottomrule	
	\end{tabular}}
\end{table}

\REVISION{
Note that trust score \cite{Jiang_NIPS_2018} may not be feasible to apply to real-world large-scale datasets like ImageNet. The trust score method needs to hold a tensor of size num\_sample $\times$ dim\_feature to initialize KD trees. The tensor would be small as the trust score method is evaluated on small-scale datasets, \eg 50000 $\times$ 512 on CIFAR. When evaluating on ImageNet, the size of the tensor would be 1.2 million $\times$ 768 (2048) using ViT (ResNet).
}

\section{Selective Risk Analysis}
\label{sec:risk}
\begin{figure}[!t]
	\centering
	\subfloat[\textlangle ViT, ViT\textrangle]{\includegraphics[width=0.24\textwidth]{fig/risk/risk_vit_vit}} \hfill
	\subfloat[\textlangle ViT, RSN\textrangle]{\includegraphics[width=0.24\textwidth]{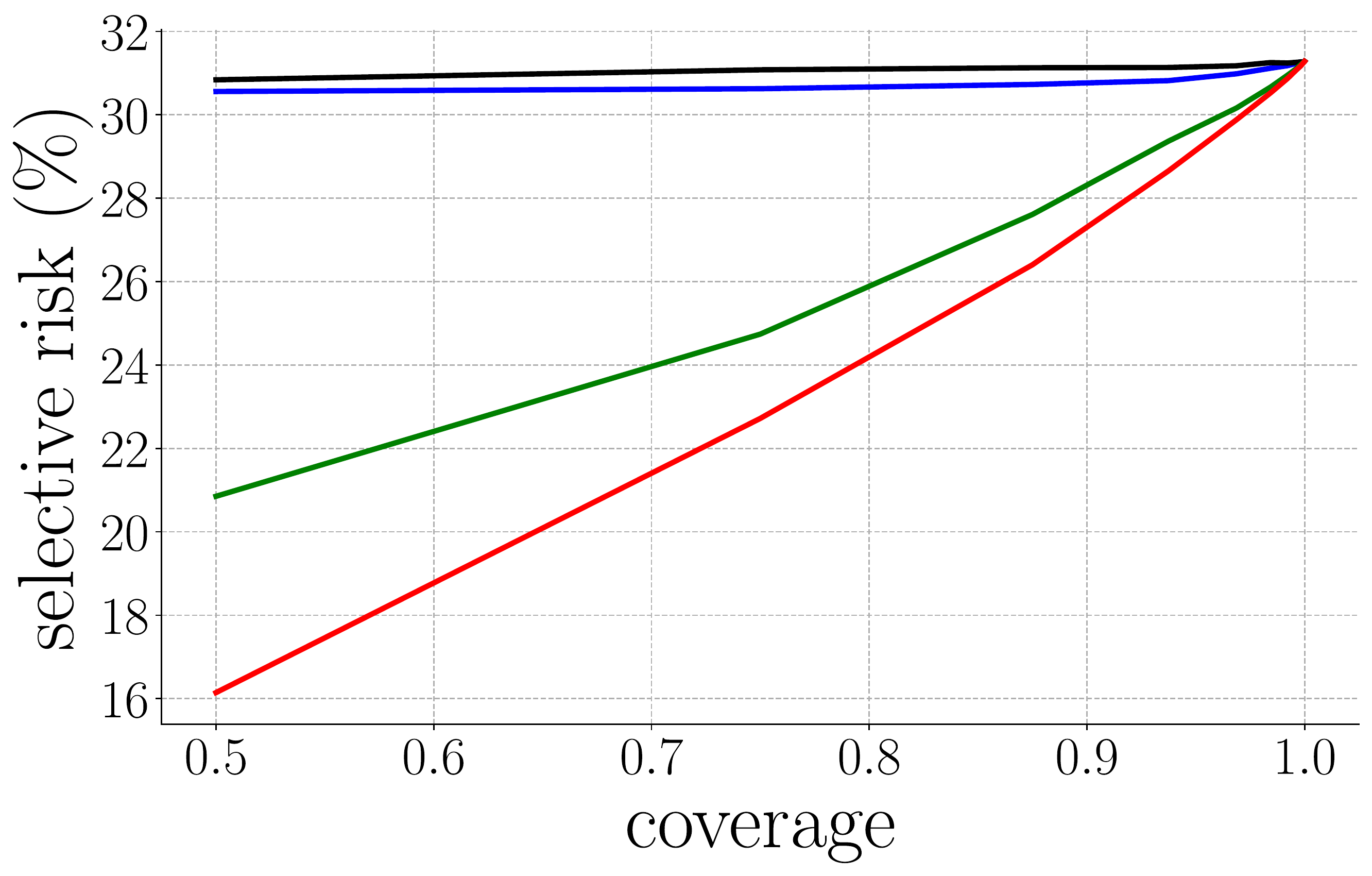}} \hfill
	\subfloat[\textlangle RSN, ViT\textrangle]{\includegraphics[width=0.24\textwidth]{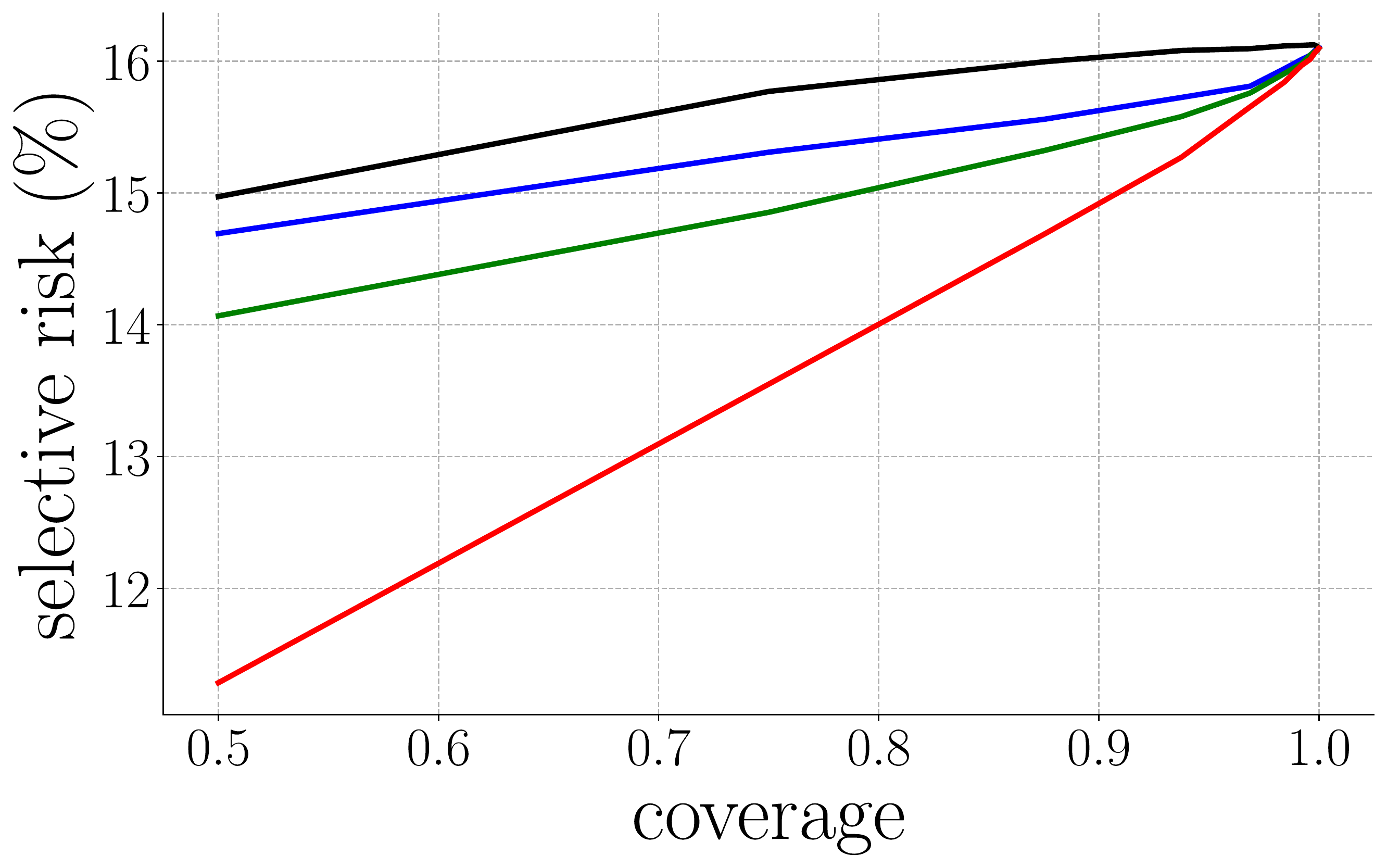}}
	\hfill
	\subfloat[\textlangle RSN, RSN\textrangle]{\includegraphics[width=0.24\textwidth]{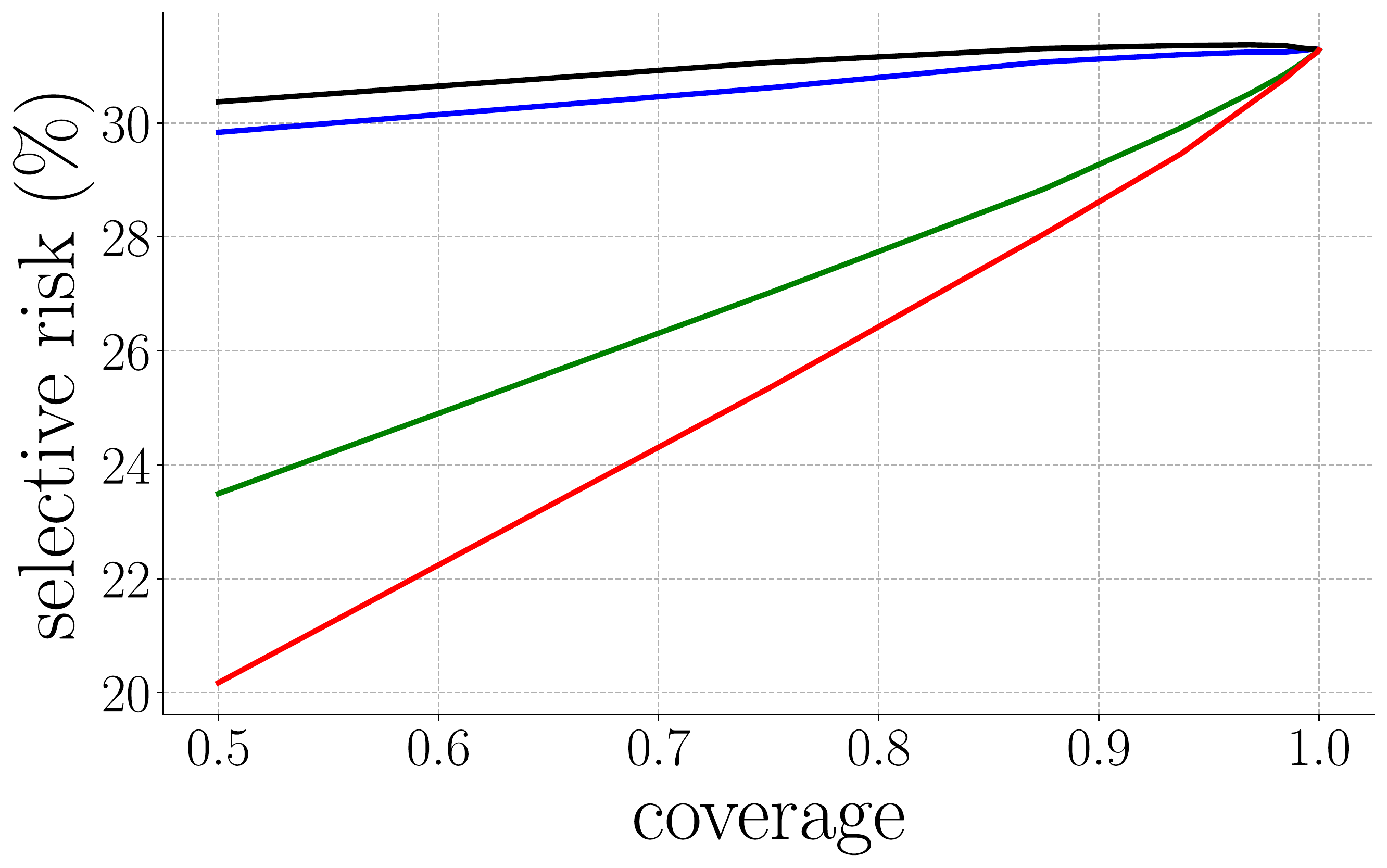}}
    \\
	\caption{\label{fig:anal_risk}
    	Curves of risk vs. coverage. Selective risk represents the percentage of errors in the remaining validation set for a given coverage.
    	The curves correspond to the oracles used in \tabref{tbl:all_perf_w_std}.
    	}
\end{figure}
Following \cite{Geifman_NIPS_2017,Corbiere_NIPS_2019}, we present the risk-coverage curves that are generated with different combinations in \figref{fig:anal_risk}. 
As can be seen in the figure, the proposed loss can work with different oracles or classifiers to significantly reduce the error.

\section{Linear Function vs. Signed Distance}
\label{sec:appd_z}
Although the signed distance $z$, \ie $z = \frac{\bm{w}^{\top} \bm{x}^{out}+b}{\|\bm{w}\|}$, leads to a geometric interpretation as shown in \figref{fig:workflow_a}, the main-stream models \cite{He_CVPR_2016,Tan_ICML_2019,Dosovitskiy_ICLR_2021} use $z=\bm{w}^{\top}\bm{x}^{out}+b$. 
Therefore, we provide the corresponding comparative results in \tabref{tbl:comp_linear}, which are generated by the proposed loss taking the output of the linear function as input.
In this analysis, the combination \textlangle ViT, ViT \textrangle is used and $\alpha^{+}=1,\alpha^{-}=3$.

As shown in \tabref{tbl:comp_linear}, both $z = \frac{\bm{w}^{\top} \bm{x}^{out}+b}{\|\bm{w}\|}$ and $z=\bm{w}^{\top}\bm{x}^{out}+b$ yield similar performance on metrics FPR-95\%-TPR, AUPR-Error, AUPR-Success, and AUC.
On the other hand, TPR and TNR are moderately different between $z = \frac{\bm{w}^{\top} \bm{x}^{out}+b}{\|\bm{w}\|}$ and $z=\bm{w}^{\top}\bm{x}^{out}+b$, when $\alpha^{+}$ and $\alpha^{-}$ are fixed.
This implies that TPR and TNR are sensitive to $\|\bm{w}\|$.

\begin{table}[!t]
	\centering
	\caption{\label{tbl:comp_linear}
	    Performance comparison between $z=\frac{\bm{w}^{\top} \bm{x}^{out}+b}{\|\bm{w}\|}$ and $z=\bm{w}^{\top} \bm{x}^{out}+b$.
	}
	\adjustbox{width=1\columnwidth}{
	\begin{tabular}{C{15ex} C{10ex} C{8ex} C{10ex} C{8ex} C{8ex} C{8ex} C{8ex} C{8ex}}
		\toprule
		\textbf{\textlangle O, C\textrangle + Loss} & {\large $z$} & \textbf{Acc$\uparrow$} & \textbf{FPR-95\%-TPR$\downarrow$} & \textbf{AUPR-Error$\uparrow$} & \textbf{AUPR-Success$\uparrow$} & \textbf{AUC$\uparrow$} & \textbf{TPR$\uparrow$} & \textbf{TNR$\uparrow$} \\
		\cmidrule(lr){1-1} \cmidrule(lr){2-2} \cmidrule(lr){3-3} \cmidrule(lr){4-4} \cmidrule(lr){5-5} \cmidrule(lr){6-6} \cmidrule(lr){7-7} \cmidrule(lr){8-8} \cmidrule(lr){9-9} 
		\multirow{2}{*}{\textlangle ViT, ViT \textrangle + SS} & $\frac{\bm{w}^{\top} \bm{x}^{out}+b}{\|\bm{w}\|}$ & 83.90 & 80.89 & 10.31 & 92.90 & 73.31 & 88.44 & 35.64 \\
		& {\small $\bm{w}^{\top} \bm{x}^{out}+b$} & 83.90 & 80.42 & 10.36 & 92.78 & 72.93 & 92.40 & 27.01 \\
		\midrule
		\multirow{2}{*}{\textlangle ViT, RSN\textrangle + SS} & $\frac{\bm{w}^{\top} \bm{x}^{out}+b}{\|\bm{w}\|}$ & 68.72 & 78.09 & 20.94 & 85.30 & 74.20 & 67.96 & 67.80 \\
		& {\small $\bm{w}^{\top} \bm{x}^{out}+b$} & 68.72 & 76.65 & 20.75 & 85.82 & 75.00 & 73.54 & 62.66 \\
        \midrule
		\multirow{2}{*}{\textlangle RSN, ViT\textrangle + SS} & $\frac{\bm{w}^{\top} \bm{x}^{out}+b}{\|\bm{w}\|}$ & 83.90 & 88.63 & 11.75 & 89.87 & 64.11 & 95.41 & 10.48 \\
		& {\small $\bm{w}^{\top} \bm{x}^{out}+b$} & 83.90 & 88.39 & 11.65 & 90.06 & 64.56 & 97.95 & 5.45 \\
        \midrule
        \multirow{2}{*}{\textlangle RSN, RSN\textrangle + SS} & $\frac{\bm{w}^{\top} \bm{x}^{out}+b}{\|\bm{w}\|}$ & 68.72 & 85.60 & 22.50 & 81.92 & 68.08 & 80.44 & 40.85 \\
        & {\small $\bm{w}^{\top} \bm{x}^{out}+b$} & 68.72 & 85.59 & 22.42 & 82.02 & 68.33 & 80.03 & 41.82 \\
		\bottomrule	
	\end{tabular}}
\end{table}

\section{Separability between Distributions of Correct Predictions and Incorrect Predictions}
\label{sec:separability}
We assess the separability between the distributions of correct predictions and incorrect predictions from a probabilistic perspective.
There are two common tools to achieve the goal, \ie Kullback–Leibler (KL) divergence \cite{Kullback_AMS_1951} and Bhattacharyya distance \cite{Bhattacharyya_JSTOR_1946}. KL divergence is used to measure the difference between two distributions \cite{Cantu_Springer_2004,Luo_TNNLS_2020}, while Bhattacharyya distance is used to measure the similarity of two probability distributions. Given the distribution of correct predictions $\mathcal{N}_{1}(\mu_{1}, \sigma^{2}_{1})$ and the distribution of correct predictions $\mathcal{N}_{2}(\mu_{2}, \sigma^{2}_{2})$, we use the averaged KL divergence, \ie $\bar{d}_{KL}(\mathcal{N}_{1}, \mathcal{N}_{2}) = (d_{KL}(\mathcal{N}_{1}, \mathcal{N}_{2}) + d_{KL}(\mathcal{N}_{2}, \mathcal{N}_{1}))/2$, where $d_{KL}(\mathcal{N}_{1}, \mathcal{N}_{2})=\log\frac{\sigma_{2}}{\sigma_{1}}+\frac{\sigma_{1}^{2}+(\mu_{1}-\mu_{2})^{2}}{2\sigma_{2}^{2}}-\frac{1}{2}$ is not symmetrical. On the other hand, Bhattacharyya distance is defined as $d_{B}(\mathcal{N}_{1}, \mathcal{N}_{2})=\frac{1}{4}\ln \left( \frac{1}{4} \left( \frac{\sigma^{2}_{1}}{\sigma^{2}_{2}}+\frac{\sigma^{2}_{2}}{\sigma^{2}_{1}}+2 \right) \right) + \frac{1}{4} \left( \frac{(\mu_{1}-\mu_{2})^{2}}{\sigma^{2}_{1}+\sigma^{2}_{2}} \right)$. A larger $\bar{d}_{KL}$ or $d_{B}$ indicates that the two distributions are further away from each other.

\begin{table}[!t]
	\centering
	\caption{\label{tbl:separability}
	    Separability of distributions w.r.t. correct predictions and incorrect predictions.
	}
	\adjustbox{width=.5\columnwidth}{
	\begin{tabular}{C{10ex} L{10ex} C{8ex} C{8ex}}
		\toprule
		\textbf{\textlangle O, C\textrangle} & \textbf{Loss} & $\bar{d}_{KL}\uparrow$ & $d_{B}\uparrow$ \\
		\cmidrule(lr){1-1} \cmidrule(lr){2-2} \cmidrule(lr){3-3} \cmidrule(lr){4-4}
		\multirow{4}{*}{\textlangle ViT, ViT \textrangle} & CE & 0.5139 & 0.0017 \\
		 & Focal \cite{Lin_ICCV_2017} & 0.5212 & 0.0026  \\
		 & TCP \cite{Corbiere_NIPS_2019} & 0.7473 & 0.0297 \\
		 & SS & 1.2684 & 0.0947 \\
		\midrule
		\multirow{4}{*}{\textlangle ViT, RSN\textrangle} & CE & 0.5131 & 0.0016 \\
		 & Focal \cite{Lin_ICCV_2017} & 0.5017 & 0.0002 \\
		 & TCP \cite{Corbiere_NIPS_2019} & 0.9602 & 0.0559 \\
		 & SS & 1.3525 & 0.1065 \\
        \midrule
		\multirow{4}{*}{\textlangle RSN, ViT\textrangle} & CE & 0.5267 & 0.0033 \\
		 & Focal \cite{Lin_ICCV_2017} & 0.5121 & 0.0015 \\
		 & TCP \cite{Corbiere_NIPS_2019} & 0.5528 & 0.0066 \\
         & SS & 0.7706 & 0.0337 \\
        \midrule
        \multirow{4}{*}{\textlangle RSN, RSN\textrangle} & CE & 0.5077 & 0.0010 \\
		 & Focal \cite{Lin_ICCV_2017} & 0.5046 & 0.0006 \\
		 & TCP \cite{Corbiere_NIPS_2019} & 0.7132 & 0.0265 \\
         & SS & 0.9539 & 0.0565 \\
		\bottomrule	
	\end{tabular}}
\end{table}

The separabilities are reported in \tabref{tbl:separability}. We can see that the proposed loss leads to larger separability than the other three loss functions. This implies that the proposed loss is more effective to differentiate incorrect predictions from correct predictions, or vice versa.

\section{Connection to Class-Balanced Loss}
\label{sec:cbloss}

The class-balanced loss \cite{Cui_CVPR_2019} is actually to apply the re-weighting strategy to a conventional loss function, \eg the cross entropy loss and the focal loss.
To re-weight the losses, it presumes to know the number of sample w.r.t. each class before training, \ie $n^{+}$ (the number of correct predictions) and $n^{-}$ (the number of incorrect predictions).
Following \cite{Cui_CVPR_2019}, we use the hyperparameters $\beta=0.999$ and $\gamma=0.5$ for the class-balanced loss.
The class-balanced loss can be also applied to the proposed loss.

We report the performances of the class-balanced cross entropy loss, the class-balanced focal loss, the proposed loss and its class-balanced variant in \tabref{tbl:anal_cb}.
The class-balanced cross entropy loss and focal loss achieve better performance on most of metrics than the cross entropy loss and focal loss.
Consistently, the proposed loss and its class-balanced variant outperform the class-balanced cross entropy loss and focal loss on metrics AUPR-Success, AUC, and TNR.
On the other hand, the class-balanced steep slope loss does not comprehensively outperform the proposed steep slope loss.
The improvement gain from the re-weighting strategy used in the imbalanced classification task is limited. This may result from the difference between the imbalanced classification and trustworthiness prediction. In other words, the imbalanced classification is aware of the visual concepts, whereas the trustworthiness is invariant to visual concepts.

\begin{table}[!t]
	\centering
	\caption{\label{tbl:anal_cb}
	    Effects of the re-weighting strategy with various loss functions. \textit{CB} stands for class-balanced.
	}
	\adjustbox{width=1\columnwidth}{
	\begin{tabular}{C{10ex} L{13ex} C{8ex} C{10ex} C{8ex} C{8ex} C{8ex} C{8ex} C{8ex}}
		\toprule
		\textbf{\textlangle O, C\textrangle} & \textbf{Loss} & \textbf{Acc$\uparrow$} & \textbf{FPR-95\%-TPR$\downarrow$} & \textbf{AUPR-Error$\uparrow$} & \textbf{AUPR-Success$\uparrow$} & \textbf{AUC$\uparrow$} & \textbf{TPR$\uparrow$} & \textbf{TNR$\uparrow$} \\
		\cmidrule(lr){1-1} \cmidrule(lr){2-2} \cmidrule(lr){3-3} \cmidrule(lr){4-4} \cmidrule(lr){5-5} \cmidrule(lr){6-6} \cmidrule(lr){7-7} \cmidrule(lr){8-8} \cmidrule(lr){9-9} 
		\multirow{5}{*}{\textlangle ViT, ViT \textrangle} & CB CE \cite{Cui_CVPR_2019} & 83.90 & 85.03 & 11.77 & 89.48 & 65.56 & 99.81 & 0.91 \\
		 & CB Focal \cite{Cui_CVPR_2019} & 83.90 & 82.34 & 10.91 & 91.25 & 69.52 & 99.36 & 2.80 \\
		 & SS & 83.90 & 80.89 & 10.31 & 92.90 & 73.31 & 88.44 & 35.64 \\
		 & CB SS & 83.90 & 80.98 & 10.36 & 92.68 & 73.07 & 97.30 & 11.83 \\
		\midrule
		\multirow{5}{*}{\textlangle ViT, RSN\textrangle} & CB CE \cite{Cui_CVPR_2019} & 68.72 & 79.30 & 21.99 & 82.25 & 70.74 & 96.32 & 17.10 \\
		 & CB Focal \cite{Cui_CVPR_2019} & 68.72 & 76.89 & 21.10 & 84.59 & 73.75 & 95.71 & 21.15 \\
		 & SS & 68.72 & 78.09 & 20.94 & 85.30 & 74.20 & 67.96 & 67.80 \\
		 & CB SS & 68.72 & 75.49 & 20.58 & 86.28 & 75.78 & 89.30 & 39.74 \\
        \midrule
        \multirow{5}{*}{\textlangle RSN, ViT\textrangle} & CB CE \cite{Cui_CVPR_2019} & 83.90 & 90.71 & 12.94 & 87.76 & 59.31 & 100.00 & 0.00 \\
		 & CB Focal \cite{Cui_CVPR_2019} & 83.90 & 89.25 & 12.38 & 88.71 & 61.35 & 100.00 & 0.00 \\
         & SS & 83.90 & 88.63 & 11.75 & 89.87 & 64.11 & 95.41 & 10.48 \\
         & CB SS & 68.72 & 87.28 & 11.51 & 90.34 & 65.34 & 98.28 & 5.23 \\
        \midrule
        \multirow{5}{*}{\textlangle RSN, RSN\textrangle} & CB CE \cite{Cui_CVPR_2019} & 68.72 & 86.48 & 23.35 & 79.97 & 65.63 & 99.87 & 0.58 \\
		 & CB Focal \cite{Cui_CVPR_2019} & 68.72 & 85.63 & 22.75 & 81.16 & 67.48 & 99.47 & 2.01 \\
         & SS & 68.72 & 85.60 & 22.50 & 81.92 & 68.08 & 80.44 & 40.85 \\
         & CB SS & 68.72 & 85.55 & 22.28 & 82.39 & 68.84 & 81.93 & 40.16 \\
		\bottomrule	
	\end{tabular}}
\end{table}

\begin{table}[!t]
	\centering
	\caption{\label{tbl:naive_balanced}
	    Effects of up-weighting the losses. $w^{+}$ and $w^{-}$ are denoted as the weights for the cross entropy losses w.r.t. true samples and negative samples, respectively.
	}
	\adjustbox{width=1\columnwidth}{
	\begin{tabular}{C{8ex} L{10ex} C{15ex} C{10ex} C{8ex} C{8ex} C{8ex} C{8ex}}
		\toprule
		\textbf{Loss} & ($w^{+}$, $w^{-}$) & \textbf{FPR-95\%-TPR$\downarrow$} & \textbf{AUPR-Error$\uparrow$} & \textbf{AUPR-Success$\uparrow$} & \textbf{AUC$\uparrow$} & \textbf{TPR$\uparrow$} & \textbf{TNR$\uparrow$} \\
		\cmidrule(lr){1-1} \cmidrule(lr){2-2} \cmidrule(lr){3-3} \cmidrule(lr){4-4} \cmidrule(lr){5-5} \cmidrule(lr){6-6} \cmidrule(lr){7-7} \cmidrule(lr){8-8}
		CE & (1, 1) & 93.01 & 15.80 & 84.25 & 51.62 & 99.99 & 0.02 \\
        CE & (1, 5) & 93.54 & 15.57 & 84.47 & 52.09 & 97.86 & 2.98 \\
        CE & (1, 10) & 93.49 & 15.66 & 84.36 & 51.48 & 46.04 & 55.52 \\
        CE & (1, 15) & 94.80 & 16.09 & 83.93 & 50.77 & 5.06 & 94.52 \\
        CE & (1, 20) & 93.90 & 15.98 & 84.08 & 51.53 & 0.26 & 99.52 \\
        SS & (1, 1) & 80.48 & 10.26 & 93.01 & 73.68 & 87.52 & 38.27 \\
		\bottomrule	
	\end{tabular}}
\end{table}

\REVISION{Up-weighting (down-weighting) the losses w.r.t. negative (positive) samples is a naive strategy in the imbalanced classification. It is interesting to see if this simple strategy is able to address the problem of predicting trustworthiness.
Let $w^{+}$ and $w^{-}$ be the weights for the cross entropy losses w.r.t. positive samples and negative samples, respectively.
The experimental results are in \tabref{tbl:naive_balanced}.
As we can see, up-weighting $w^{-}$ with various values does not achieve desired performance. In contrast, applying the class-balanced strategy for re-weighting yields much better results than the naive up-weighting strategy. Moreover, the proposed loss achieves better performance than the naive up-weighting strategy. 
}

\section{Analysis of Gradients}

\REVISION{It is interesting to know whether there are vanishing gradient issues or numerical stability issues due to the exponent in the proposed loss function. We compute the averaged $\|\frac{\partial \ell}{\partial x^{out}}\|$, where $x^{out} \in \mathbf{R}^{k}$ is the dimension of the output feature of the oracle backbone. For the case of using ViT as backbone, $k=768$. The results with setting \textlangle ViT, ViT\textrangle are reported in \tab \ref{tbl:numerical_stability}. 
Although the proposed loss function uses the exponential function, it only involves a small range (e.g., $[\exp(-\alpha^{+}), \exp(\alpha^{+})]$ or $[\exp(-\alpha^{-}), \exp(\alpha^{-})]$) in the exponential function. In this range, the gradients are less likely to change dramatically.
}

\begin{table}[!t]
	\centering
	\caption{\label{tbl:numerical_stability}
	    Numerical stability of gradients. \textlangle ViT, ViT\textrangle is used for the analysis.
	}
	\adjustbox{width=.5\columnwidth}{
	\begin{tabular}{L{7ex} C{8ex} C{8ex} C{8ex} C{8ex}}
		\toprule
		Magnitude & CE & Focal & TCP & SS  \\
		\cmidrule(lr){1-1} \cmidrule(lr){2-2} \cmidrule(lr){3-3} \cmidrule(lr){4-4} \cmidrule(lr){5-5}
		$\|\frac{\partial \ell}{\partial x^{out}}\|$ & 0.0068 & 0.0041 & 0.0039 & 0.0149 \\
		\bottomrule	
	\end{tabular}}
\end{table}




\section{Inference}

\REVISION{Once the training process for the trustworthiness predictor is done, the inference by the trustworthiness predictor is efficient. Specifically, the inference time used for classification is 1.64 milliseconds per image, while the inference time used for predicting trustworthiness is 1.53 milliseconds per image. Predicting trustworthiness is slightly faster than predicting labels. This is because the classification is a 1000-way prediction while predicting trustworthiness is a 1-way prediction (I.e., the output is a scalar). As the trustworthiness predictor and classifier are separate, it is possible to predict trustworthiness and labels in parallel to further improve efficiency in practice.}

\end{document}


\maketitle

\appendix

\setcounter{figure}{0}
\renewcommand{\thefigure}{A\arabic{figure}}
\setcounter{table}{0}
\renewcommand{\thetable}{A\arabic{table}}


\section{Generalization Bound}
\label{sec:gb}
\begin{theorem*}
	Denote $\text{maximum}\{\exp(\alpha^{+})-\exp(-\alpha^{+}), \exp(\alpha^{-})-\exp(-\alpha^{-})\}$ as $\ell_{SS}^{max}$. $\ell_{SS}\in [0, \ell_{SS}^{max}]$. Assume $\mathcal{F}$ is a finite hypothesis set, for any $\delta>0$, with probability at least $1-\delta$, the following inequality holds for all $f\in \mathcal{F}$:
	\begin{align*}
	|\mathcal{R}(f) - \hat{\mathcal{R}}_{D}(f) | \le  \ell^{max}_{SS}\sqrt{\frac{\log|\mathcal{F}|+\log\frac{2}{\delta}}{2|D|}}
	\end{align*}
\end{theorem*}
\begin{proof}[Proof]
	The proof sketch is similar to the generalization bound provided in \cite{Mohri_MIT_2018}.
	By the union bound, given an error $\xi$, we have
	\begin{align*}
	p[\sup_{f\in \mathcal{F}}|R(f)-\hat{R}(f)| > \xi] \le \sum_{f\in \mathcal{F}}^{} p[|R(f)-\hat{R}(f)|> \xi].
	\end{align*}
	By Hoeffding's bound, we have
	\begin{align*}
	\sum_{f\in \mathcal{F}}^{} p[|\mathcal{R}(f)-\hat{\mathcal{R}}(f)|> \xi] \le 2|\mathcal{F}|\exp \left(-\frac{2|D|\xi^2}{(\ell_{SS}^{max})^{2}} \right).
	\end{align*}
	Due to the probability definition, $2|\mathcal{F}|\exp (-\frac{2|D|\xi^2}{(\ell_{SS}^{max})^{2}}) = \delta$. Considering $\xi$ is a function of other variables, we can rearrange it as 
	$\xi=\ell_{SS}^{max}\sqrt{\frac{\log|\mathcal{F}|+\log\frac{2}{\delta}}{2|D|}}$.
	Since we know $p[|\mathcal{R}(f)-\hat{\mathcal{R}}(f)| > \xi]$ is with probability at most $\delta$, it can be inferred that $p[|\mathcal{R}(f)-\hat{\mathcal{R}}(f)| <= \xi]$ is at least $1-\delta$.
\end{proof}

\section{Experimental Set-Up}
\label{sec:implementation}
The ViT (\ie ViT Base/16) used in this work is implemented in the ASYML project\footnote{\url{https://github.com/asyml/vision-transformer-pytorch}}, which is based on PyTorch.
The pre-trained weights are the same as the original pre-trained weights\footnote{\url{https://github.com/google-research/vision_transformer}}.
On the other hand, the pre-trained ResNet (\ie ResNet-50) is provided in PyTorch\footnote{\url{https://pytorch.org/vision/stable/models.html}}.
For the analyses, we use the official implementation\footnote{\url{https://github.com/valeoai/ConfidNet}} of the TCP confidence loss \cite{Corbiere_NIPS_2019}
and the PyTorch implementation\footnote{\url{https://github.com/vandit15/Class-balanced-loss-pytorch}} of the class-balanced loss \cite{Cui_CVPR_2019}.

We use the training scheme implemented by ASYML and tune the hyperparameters such that the oracles are trained with the cross entropy loss and focal loss to produce the best performance among multiple trials. Then, we fix the set of hyperparameters for the TCP confidence loss and the proposed loss.
Each combination of classifiers and oracles undergoes the same training scheme.
Specifically, the stochastic gradient descent (SGD) optimization method is used with initial learning rate 1e-5, weight decay 0, and momentum 0.05 for optimizing the learning problem.
The 1-cycle learning rate policy applies at each learning step.
The batch size is fixed to 40 as ViT would make the full use of four 12 GB GPUs with 40 images.
To stimulate a challenging and practically useful environment, we train the oracle in only one epoch, rather than multiple epochs.
All the three baseline loss functions and the proposed loss use the same hyperparameters and undergo the same experimental protocol for training the oracle.
The code is implemented in Python 3.8.5 with PyTorch 1.7.1 \cite{Paszke_NIPS_2019} and is tested under Ubuntu 18.04 with four NVIDIA GTX 1080 Ti graphics cards in a standalone machine.

We run the experiments three times with random seeds and report the means and the standard deviations of scores in \tabref{tbl:all_perf_w_std}. For the other experiments or analyses, we run one time.

\subsection{Implementation Details for Small-Scale Datasets}
\label{sec:mnist}
The resulting results are reported in \tabref{tbl:perf_mnist}. The experiment is based on the official implementation\footnote{\url{https://github.com/valeoai/ConfidNet}} of \cite{Corbiere_NIPS_2019}. The implementation provides the pre-trained models on MNIST and CIFAR-10.
We fine-tune the pre-trained models with the proposed steep slope loss. For comparison purposes, we also fine-tune the pre-trained with the TCP confidence loss (\ie \textit{TCP$\dagger$}), where the experimental settings of the fine-tuning process are the same as the ones of the fine-tuning process with the proposed steep slope loss. The proposed loss use $\alpha^{+}=10$ and $\alpha^{-}=6$ on MNIST, and $\alpha^{+}=1$ and $\alpha^{-}=1$ on CIFAR-10.

\section{License of Assets}
\label{sec:license}

MNIST \cite{Lecun_IEEE_1998} is made available under the terms of the Creative Commons Attribution-Share Alike 3.0 license, while ImageNet \cite{Deng_CVPR_2009} is licensed under the BSD 3-Clause ``New'' or ``Revised'' License.

PyTorch \cite{Paszke_NIPS_2019} is available under a BSD-style license.
The official ViT \cite{Dosovitskiy_ICLR_2021} implementation is licensed under the Apache-2.0 License, while the implementation of ViT is licensed under the Apache-2.0 License.
The code of TCP \cite{Corbiere_NIPS_2019} is licensed under the Apache License.
The PyTorch version of class balanced loss \cite{Cui_CVPR_2019} is licensed under the MIT License.

We make our code and pre-trained oracles publicly available via \url{https://github.com/luoyan407/predict_trustworthiness} with the MIT License.


\section{Experimental Result}
\label{sec:histogram}

As \REVISION{shown in \tabref{tbl:all_perf_w_std} and} discussed in the experiment section, the proposed loss consistently improves the performance on metrics FPR-95\%-TPR, AUPR-Success, AUC, and TNR while the corresponding variances are comparable to the other loss functions.

We plot all the histograms in \figref{fig:histogram} and \figref{fig:distribution_unseen} that correspond to \tabref{tbl:all_perf_w_std} and \tabref{tbl:perf_vit_vit}, respectively.
Ideally, we hope that all the confidences w.r.t. the positive class are on the right-hand side of the positive threshold while the ones w.r.t. the negative class are on the left-hand side of the negative threshold.
From \figref{fig:histogram} and \figref{fig:distribution_unseen}, we can see that the proposed loss works in this direction, \ie the attempt pushing all the confidences w.r.t. the positive (negative) class to the right-hand (left-hand) side of the positive (negative) threshold.

\begin{figure}[!t]
	\centering
	\subfloat[\textlangle ViT, ViT\textrangle + CE]{\includegraphics[width=0.24\textwidth]{fig/hist/ce_vit_vit_val}    } \hfill
	\subfloat[\textlangle ViT, ViT\textrangle + Focal]{\includegraphics[width=0.24\textwidth]{fig/hist/focal_vit_vit_val}    } \hfill
	\subfloat[\textlangle ViT, ViT\textrangle + TCP]{\includegraphics[width=0.24\textwidth]{fig/hist/tcp_vit_vit_val}    } \hfill
	\subfloat[\textlangle ViT, ViT\textrangle +  SS]{\includegraphics[width=0.24\textwidth]{fig/hist/ss_vit_vit_val}    } \\
	\subfloat[\textlangle ViT, RSN\textrangle + CE]{\includegraphics[width=0.24\textwidth]{fig/hist/ce_vit_rsn_val}    } \hfill
	\subfloat[\textlangle ViT, RSN\textrangle + Focal]{\includegraphics[width=0.24\textwidth]{fig/hist/focal_vit_rsn_val}    } \hfill
	\subfloat[\textlangle ViT, RSN\textrangle + TCP]{\includegraphics[width=0.24\textwidth]{fig/hist/tcp_vit_rsn_val}    } \hfill
	\subfloat[\textlangle ViT, RSN\textrangle + SS]{\includegraphics[width=0.24\textwidth]{fig/hist/ss_vit_rsn_val}    } \\
	\subfloat[\textlangle RSN, ViT\textrangle + CE]{\includegraphics[width=0.24\textwidth]{fig/hist/ce_rsn_vit_val}    } \hfill
	\subfloat[\textlangle RSN, ViT\textrangle + Focal]{\includegraphics[width=0.24\textwidth]{fig/hist/focal_rsn_vit_val}    } \hfill
	\subfloat[\textlangle RSN, ViT\textrangle + TCP]{\includegraphics[width=0.24\textwidth]{fig/hist/tcp_rsn_vit_val}    } \hfill
	\subfloat[\textlangle RSN, ViT\textrangle + SS]{\includegraphics[width=0.24\textwidth]{fig/hist/ss_rsn_vit_val}    } \\
	\subfloat[\textlangle RSN, RSN\textrangle + CE]{\includegraphics[width=0.24\textwidth]{fig/hist/ce_rsn_rsn_val}    } \hfill
	\subfloat[\textlangle RSN, RSN\textrangle + Focal]{\includegraphics[width=0.24\textwidth]{fig/hist/focal_rsn_rsn_val}    } \hfill
	\subfloat[\textlangle RSN, RSN\textrangle + TCP]{\includegraphics[width=0.24\textwidth]{fig/hist/tcp_rsn_rsn_val}    } \hfill
	\subfloat[\textlangle RSN, RSN\textrangle + SS]{\includegraphics[width=0.24\textwidth]{fig/hist/ss_rsn_rsn_val}    } \\
	\caption{\label{fig:histogram}
    	Histograms of trustworthiness confidences w.r.t. all the loss functions on the ImageNet validation set.
    	The oracles that are used to generate the confidences are the ones used in \tabref{tbl:all_perf_w_std}.
    	}
\end{figure}
\begin{figure}[!t]
	\centering
	\subfloat[CE on stylized val]{\includegraphics[width=0.24\textwidth]{fig/hist/ce_vit_vit_styval}    } \hfill
	\subfloat[Focal on stylized val]{\includegraphics[width=0.24\textwidth]{fig/hist/focal_vit_vit_styval}    } \hfill
	\subfloat[TCP on stylized val]{\includegraphics[width=0.24\textwidth]{fig/hist/tcp_vit_vit_styval}    } \hfill
	\subfloat[SS on stylized val]{\includegraphics[width=0.24\textwidth]{fig/hist/ss_vit_vit_styval}    } \\
	\subfloat[CE on adversarial val]{\includegraphics[width=0.24\textwidth]{fig/hist/ce_vit_vit_advval}    } \hfill
	\subfloat[Focal on adversarial val]{\includegraphics[width=0.24\textwidth]{fig/hist/focal_vit_vit_advval}    } \hfill
	\subfloat[TCP on adversarial val]{\includegraphics[width=0.24\textwidth]{fig/hist/tcp_vit_vit_advval}    } \hfill
	\subfloat[SS on adversarial val]{\includegraphics[width=0.24\textwidth]{fig/hist/ss_vit_vit_advval}    } \\
	\caption{\label{fig:distribution_unseen}
    	Histograms of trustworthiness confidences w.r.t. all the loss functions on the stylized ImageNet validation set (stylized val) and the adversarial ImageNet validation set (adversarial val). \textlangle ViT, ViT\textrangle is used in the experiment and the domains of the two validation sets are different from the one of the training set that is used for training the oracle.
    	The histograms correspond to the oracles used in \tabref{tbl:perf_vit_vit}.
    	}
\end{figure}

\REVISION{To comprehensively evaluate the proposed loss function, we conduct the experiments on various out-of-distribution (OOD) datasets, including ImageNet-C (corrupted ImageNet) \cite{Hendrycks_ICLR_2018}. Specifically, we evaluate the trustworthiness predictor trained on ImageNet on the sets of defocus blur, glass blur, motion blur, and zoom blur at the highest level of severity (\ie the most challenging setting). The results with setting \textlangle ViT, ViT\textrangle are reported in \tab \ref{tbl:perf_imagenetc}. The results are consistent with the ones on the stylized ImageNet and the adversarial ImageNet.}

\begin{table}[!t]
	\centering
	\vspace{-1ex}
	\caption{\label{tbl:perf_imagenetc}
	    Performance on ImageNet-C validation set at the highest level of severity \cite{Hendrycks_ICLR_2018}.
	    \textlangle ViT, ViT\textrangle~ is used in the experiment and the domains of the two validation sets are different from the one of the training set that is used for training the oracle. 
	}
	\adjustbox{width=1\columnwidth}{
	\begin{tabular}{C{15ex} L{10ex} C{8ex} C{10ex} C{8ex} C{8ex} C{8ex} C{8ex} C{8ex}}
		\toprule
		\textbf{Set} & \textbf{Loss} & \textbf{Acc$\uparrow$} & \textbf{FPR-95\%-TPR$\downarrow$} & \textbf{AUPR-Error$\uparrow$} & \textbf{AUPR-Success$\uparrow$} & \textbf{AUC$\uparrow$} & \textbf{TPR$\uparrow$} & \textbf{TNR$\uparrow$} \\
		\cmidrule(lr){1-1} \cmidrule(lr){2-2} \cmidrule(lr){3-3} \cmidrule(lr){4-4} \cmidrule(lr){5-5} \cmidrule(lr){6-6} \cmidrule(lr){7-7} \cmidrule(lr){8-8} \cmidrule(lr){9-9}
		\multirow{4}{*}{Defocus blur} & CE & 31.83 & 94.46 & 68.56 & 31.47 & 50.13 & 99.15 & 1.07 \\
		& Focal \cite{Lin_ICCV_2017} & 31.83 & 94.98  & 66.87 & 33.24 & 51.28 & 96.70 & 3.26 \\
		& TCP \cite{Corbiere_NIPS_2019} & 31.83 & 93.50 & 64.67 & 36.05 & 54.27 & 96.71 & 4.35 \\
		& SS & 31.83 & 90.18 & 57.95 & 48.80 & 64.34 & 77.79 & 37.29 \\
        \midrule
        \multirow{4}{*}{Glass blur} & CE & 28.76 & 94.68 & 71.63 & 28.42 & 50.31 & 99.45 & 0.52 \\
		& Focal \cite{Lin_ICCV_2017} & 28.76 & 92.00 & 67.25 & 33.34 & 56.57 & 96.42 & 6.00 \\
		& TCP \cite{Corbiere_NIPS_2019} & 28.76 & 91.24 & 65.64 & 35.72 & 58.78 & 96.95 & 5.50 \\
		& SS & 28.76 & 85.03 & 58.58 & 52.55 & 70.18 & 62.68 & 66.25 \\
		\midrule
        \multirow{4}{*}{Motion blur} & CE & 43.24 & 94.53 & 56.32 & 43.69 & 50.76 & 99.75 & 0.31 \\
		& Focal \cite{Lin_ICCV_2017} & 43.24 & 93.67 & 53.83 & 46.47 & 54.02 & 98.88 & 1.55 \\
		& TCP \cite{Corbiere_NIPS_2019} & 43.24 & 92.29 & 50.40 & 51.20 & 58.36 & 99.82 & 0.31 \\
		& SS & 43.24 & 86.47 & 43.91 & 65.17 & 69.52 & 63.86 & 64.49 \\
		\midrule
        \multirow{4}{*}{Zoom blur} & CE & 39.68 & 93.28 & 56.91 & 43.45 & 54.90 & 99.71 & 0.42 \\
		& Focal \cite{Lin_ICCV_2017} & 39.68 & 92.54 & 54.93 & 46.18 & 57.19 & 98.85 & 1.63 \\
		& TCP \cite{Corbiere_NIPS_2019} & 39.68 & 89.47 & 51.79 & 51.06 & 62.47 & 99.64 & 0.93 \\
		& SS & 39.68 & 86.53 & 46.74 & 63.84 & 70.94 & 71.58 & 57.38 \\
		\bottomrule	
	\end{tabular}}
\end{table}

\REVISION{
Note that trust score \cite{Jiang_NIPS_2018} may not be feasible to apply to real-world large-scale datasets like ImageNet. The trust score method needs to hold a tensor of size num\_sample $\times$ dim\_feature to initialize KD trees. The tensor would be small as the trust score method is evaluated on small-scale datasets, \eg 50000 $\times$ 512 on CIFAR. When evaluating on ImageNet, the size of the tensor would be 1.2 million $\times$ 768 (2048) using ViT (ResNet).
}

\section{Selective Risk Analysis}
\label{sec:risk}
\begin{figure}[!t]
	\centering
	\subfloat[\textlangle ViT, ViT\textrangle]{\includegraphics[width=0.24\textwidth]{fig/risk/risk_vit_vit}} \hfill
	\subfloat[\textlangle ViT, RSN\textrangle]{\includegraphics[width=0.24\textwidth]{fig/risk/risk_vit_rsn}} \hfill
	\subfloat[\textlangle RSN, ViT\textrangle]{\includegraphics[width=0.24\textwidth]{fig/risk/risk_rsn_vit}}
	\hfill
	\subfloat[\textlangle RSN, RSN\textrangle]{\includegraphics[width=0.24\textwidth]{fig/risk/risk_rsn_rsn}}
    \\
	\caption{\label{fig:anal_risk}
    	Curves of risk vs. coverage. Selective risk represents the percentage of errors in the remaining validation set for a given coverage.
    	The curves correspond to the oracles used in \tabref{tbl:all_perf_w_std}.
    	}
\end{figure}
Following \cite{Geifman_NIPS_2017,Corbiere_NIPS_2019}, we present the risk-coverage curves that are generated with different combinations in \figref{fig:anal_risk}. 
As can be seen in the figure, the proposed loss can work with different oracles or classifiers to significantly reduce the error.


\section{Linear Function vs. Signed Distance}
\label{sec:appd_z}
Although the signed distance $z$, \ie $z = \frac{\bm{w}^{\top} \bm{x}^{out}+b}{\|\bm{w}\|}$, leads to a geometric interpretation as shown in \figref{fig:workflow_a}, the main-stream models \cite{He_CVPR_2016,Tan_ICML_2019,Dosovitskiy_ICLR_2021} use $z=\bm{w}^{\top}\bm{x}^{out}+b$. 
Therefore, we provide the corresponding comparative results in \tabref{tbl:comp_linear}, which are generated by the proposed loss taking the output of the linear function as input.
In this analysis, the combination \textlangle ViT, ViT \textrangle is used and $\alpha^{+}=1,\alpha^{-}=3$.

As shown in \tabref{tbl:comp_linear}, both $z = \frac{\bm{w}^{\top} \bm{x}^{out}+b}{\|\bm{w}\|}$ and $z=\bm{w}^{\top}\bm{x}^{out}+b$ yield similar performance on metrics FPR-95\%-TPR, AUPR-Error, AUPR-Success, and AUC.
On the other hand, TPR and TNR are moderately different between $z = \frac{\bm{w}^{\top} \bm{x}^{out}+b}{\|\bm{w}\|}$ and $z=\bm{w}^{\top}\bm{x}^{out}+b$, when $\alpha^{+}$ and $\alpha^{-}$ are fixed.
This implies that TPR and TNR are sensitive to $\|\bm{w}\|$.

\begin{table}[!t]
	\centering
	\caption{\label{tbl:comp_linear}
	    Performance comparison between $z=\frac{\bm{w}^{\top} \bm{x}^{out}+b}{\|\bm{w}\|}$ and $z=\bm{w}^{\top} \bm{x}^{out}+b$.
	}
	\adjustbox{width=1\columnwidth}{
	\begin{tabular}{C{15ex} C{10ex} C{8ex} C{10ex} C{8ex} C{8ex} C{8ex} C{8ex} C{8ex}}
		\toprule
		\textbf{\textlangle O, C\textrangle + Loss} & {\large $z$} & \textbf{Acc$\uparrow$} & \textbf{FPR-95\%-TPR$\downarrow$} & \textbf{AUPR-Error$\uparrow$} & \textbf{AUPR-Success$\uparrow$} & \textbf{AUC$\uparrow$} & \textbf{TPR$\uparrow$} & \textbf{TNR$\uparrow$} \\
		\cmidrule(lr){1-1} \cmidrule(lr){2-2} \cmidrule(lr){3-3} \cmidrule(lr){4-4} \cmidrule(lr){5-5} \cmidrule(lr){6-6} \cmidrule(lr){7-7} \cmidrule(lr){8-8} \cmidrule(lr){9-9} 
		\multirow{2}{*}{\textlangle ViT, ViT \textrangle + SS} & $\frac{\bm{w}^{\top} \bm{x}^{out}+b}{\|\bm{w}\|}$ & 83.90 & 80.89 & 10.31 & 92.90 & 73.31 & 88.44 & 35.64 \\
		& {\small $\bm{w}^{\top} \bm{x}^{out}+b$} & 83.90 & 80.42 & 10.36 & 92.78 & 72.93 & 92.40 & 27.01 \\
		\midrule
		\multirow{2}{*}{\textlangle ViT, RSN\textrangle + SS} & $\frac{\bm{w}^{\top} \bm{x}^{out}+b}{\|\bm{w}\|}$ & 68.72 & 78.09 & 20.94 & 85.30 & 74.20 & 67.96 & 67.80 \\
		& {\small $\bm{w}^{\top} \bm{x}^{out}+b$} & 68.72 & 76.65 & 20.75 & 85.82 & 75.00 & 73.54 & 62.66 \\
        \midrule
		\multirow{2}{*}{\textlangle RSN, ViT\textrangle + SS} & $\frac{\bm{w}^{\top} \bm{x}^{out}+b}{\|\bm{w}\|}$ & 83.90 & 88.63 & 11.75 & 89.87 & 64.11 & 95.41 & 10.48 \\
		& {\small $\bm{w}^{\top} \bm{x}^{out}+b$} & 83.90 & 88.39 & 11.65 & 90.06 & 64.56 & 97.95 & 5.45 \\
        \midrule
        \multirow{2}{*}{\textlangle RSN, RSN\textrangle + SS} & $\frac{\bm{w}^{\top} \bm{x}^{out}+b}{\|\bm{w}\|}$ & 68.72 & 85.60 & 22.50 & 81.92 & 68.08 & 80.44 & 40.85 \\
        & {\small $\bm{w}^{\top} \bm{x}^{out}+b$} & 68.72 & 85.59 & 22.42 & 82.02 & 68.33 & 80.03 & 41.82 \\
		\bottomrule	
	\end{tabular}}
\end{table}

\section{Separability between Distributions of Correct Predictions and Incorrect Predictions}
\label{sec:separability}
We assess the separability between the distributions of correct predictions and incorrect predictions from a probabilistic perspective.
There are two common tools to achieve the goal, \ie Kullback–Leibler (KL) divergence \cite{Kullback_AMS_1951} and Bhattacharyya distance \cite{Bhattacharyya_JSTOR_1946}. KL divergence is used to measure the difference between two distributions \cite{Cantu_Springer_2004,Luo_TNNLS_2020}, while Bhattacharyya distance is used to measure the similarity of two probability distributions. Given the distribution of correct predictions $\mathcal{N}_{1}(\mu_{1}, \sigma^{2}_{1})$ and the distribution of correct predictions $\mathcal{N}_{2}(\mu_{2}, \sigma^{2}_{2})$, we use the averaged KL divergence, \ie $\bar{d}_{KL}(\mathcal{N}_{1}, \mathcal{N}_{2}) = (d_{KL}(\mathcal{N}_{1}, \mathcal{N}_{2}) + d_{KL}(\mathcal{N}_{2}, \mathcal{N}_{1}))/2$, where $d_{KL}(\mathcal{N}_{1}, \mathcal{N}_{2})=\log\frac{\sigma_{2}}{\sigma_{1}}+\frac{\sigma_{1}^{2}+(\mu_{1}-\mu_{2})^{2}}{2\sigma_{2}^{2}}-\frac{1}{2}$ is not symmetrical. On the other hand, Bhattacharyya distance is defined as $d_{B}(\mathcal{N}_{1}, \mathcal{N}_{2})=\frac{1}{4}\ln \left( \frac{1}{4} \left( \frac{\sigma^{2}_{1}}{\sigma^{2}_{2}}+\frac{\sigma^{2}_{2}}{\sigma^{2}_{1}}+2 \right) \right) + \frac{1}{4} \left( \frac{(\mu_{1}-\mu_{2})^{2}}{\sigma^{2}_{1}+\sigma^{2}_{2}} \right)$. A larger $\bar{d}_{KL}$ or $d_{B}$ indicates that the two distributions are further away from each other.

\begin{table}[!t]
	\centering
	\caption{\label{tbl:separability}
	    Separability of distributions w.r.t. correct predictions and incorrect predictions.
	}
	\adjustbox{width=.5\columnwidth}{
	\begin{tabular}{C{10ex} L{10ex} C{8ex} C{8ex}}
		\toprule
		\textbf{\textlangle O, C\textrangle} & \textbf{Loss} & $\bar{d}_{KL}\uparrow$ & $d_{B}\uparrow$ \\
		\cmidrule(lr){1-1} \cmidrule(lr){2-2} \cmidrule(lr){3-3} \cmidrule(lr){4-4}
		\multirow{4}{*}{\textlangle ViT, ViT \textrangle} & CE & 0.5139 & 0.0017 \\
		 & Focal \cite{Lin_ICCV_2017} & 0.5212 & 0.0026  \\
		 & TCP \cite{Corbiere_NIPS_2019} & 0.7473 & 0.0297 \\
		 & SS & 1.2684 & 0.0947 \\
		\midrule
		\multirow{4}{*}{\textlangle ViT, RSN\textrangle} & CE & 0.5131 & 0.0016 \\
		 & Focal \cite{Lin_ICCV_2017} & 0.5017 & 0.0002 \\
		 & TCP \cite{Corbiere_NIPS_2019} & 0.9602 & 0.0559 \\
		 & SS & 1.3525 & 0.1065 \\
        \midrule
		\multirow{4}{*}{\textlangle RSN, ViT\textrangle} & CE & 0.5267 & 0.0033 \\
		 & Focal \cite{Lin_ICCV_2017} & 0.5121 & 0.0015 \\
		 & TCP \cite{Corbiere_NIPS_2019} & 0.5528 & 0.0066 \\
         & SS & 0.7706 & 0.0337 \\
        \midrule
        \multirow{4}{*}{\textlangle RSN, RSN\textrangle} & CE & 0.5077 & 0.0010 \\
		 & Focal \cite{Lin_ICCV_2017} & 0.5046 & 0.0006 \\
		 & TCP \cite{Corbiere_NIPS_2019} & 0.7132 & 0.0265 \\
         & SS & 0.9539 & 0.0565 \\
		\bottomrule	
	\end{tabular}}
\end{table}


The separabilities are reported in \tabref{tbl:separability}. We can see that the proposed loss leads to larger separability than the other three loss functions. This implies that the proposed loss is more effective to differentiate incorrect predictions from correct predictions, or vice versa.

\section{Connection to Class-Balanced Loss}
\label{sec:cbloss}

The class-balanced loss \cite{Cui_CVPR_2019} is actually to apply the re-weighting strategy to a conventional loss function, \eg the cross entropy loss and the focal loss.
To re-weight the losses, it presumes to know the number of sample w.r.t. each class before training, \ie $n^{+}$ (the number of correct predictions) and $n^{-}$ (the number of incorrect predictions).
Following \cite{Cui_CVPR_2019}, we use the hyperparameters $\beta=0.999$ and $\gamma=0.5$ for the class-balanced loss.
The class-balanced loss can be also applied to the proposed loss.

We report the performances of the class-balanced cross entropy loss, the class-balanced focal loss, the proposed loss and its class-balanced variant in \tabref{tbl:anal_cb}.
The class-balanced cross entropy loss and focal loss achieve better performance on most of metrics than the cross entropy loss and focal loss.
Consistently, the proposed loss and its class-balanced variant outperform the class-balanced cross entropy loss and focal loss on metrics AUPR-Success, AUC, and TNR.
On the other hand, the class-balanced steep slope loss does not comprehensively outperform the proposed steep slope loss.
The improvement gain from the re-weighting strategy used in the imbalanced classification task is limited. This may result from the difference between the imbalanced classification and trustworthiness prediction. In other words, the imbalanced classification is aware of the visual concepts, whereas the trustworthiness is invariant to visual concepts.

\begin{table}[!t]
	\centering
	\caption{\label{tbl:anal_cb}
	    Effects of the re-weighting strategy with various loss functions. \textit{CB} stands for class-balanced.
	}
	\adjustbox{width=1\columnwidth}{
	\begin{tabular}{C{10ex} L{13ex} C{8ex} C{10ex} C{8ex} C{8ex} C{8ex} C{8ex} C{8ex}}
		\toprule
		\textbf{\textlangle O, C\textrangle} & \textbf{Loss} & \textbf{Acc$\uparrow$} & \textbf{FPR-95\%-TPR$\downarrow$} & \textbf{AUPR-Error$\uparrow$} & \textbf{AUPR-Success$\uparrow$} & \textbf{AUC$\uparrow$} & \textbf{TPR$\uparrow$} & \textbf{TNR$\uparrow$} \\
		\cmidrule(lr){1-1} \cmidrule(lr){2-2} \cmidrule(lr){3-3} \cmidrule(lr){4-4} \cmidrule(lr){5-5} \cmidrule(lr){6-6} \cmidrule(lr){7-7} \cmidrule(lr){8-8} \cmidrule(lr){9-9} 
		\multirow{5}{*}{\textlangle ViT, ViT \textrangle} & CB CE \cite{Cui_CVPR_2019} & 83.90 & 85.03 & 11.77 & 89.48 & 65.56 & 99.81 & 0.91 \\
		 & CB Focal \cite{Cui_CVPR_2019} & 83.90 & 82.34 & 10.91 & 91.25 & 69.52 & 99.36 & 2.80 \\
		 & SS & 83.90 & 80.89 & 10.31 & 92.90 & 73.31 & 88.44 & 35.64 \\
		 & CB SS & 83.90 & 80.98 & 10.36 & 92.68 & 73.07 & 97.30 & 11.83 \\
		\midrule
		\multirow{5}{*}{\textlangle ViT, RSN\textrangle} & CB CE \cite{Cui_CVPR_2019} & 68.72 & 79.30 & 21.99 & 82.25 & 70.74 & 96.32 & 17.10 \\
		 & CB Focal \cite{Cui_CVPR_2019} & 68.72 & 76.89 & 21.10 & 84.59 & 73.75 & 95.71 & 21.15 \\
		 & SS & 68.72 & 78.09 & 20.94 & 85.30 & 74.20 & 67.96 & 67.80 \\
		 & CB SS & 68.72 & 75.49 & 20.58 & 86.28 & 75.78 & 89.30 & 39.74 \\
        \midrule
        \multirow{5}{*}{\textlangle RSN, ViT\textrangle} & CB CE \cite{Cui_CVPR_2019} & 83.90 & 90.71 & 12.94 & 87.76 & 59.31 & 100.00 & 0.00 \\
		 & CB Focal \cite{Cui_CVPR_2019} & 83.90 & 89.25 & 12.38 & 88.71 & 61.35 & 100.00 & 0.00 \\
         & SS & 83.90 & 88.63 & 11.75 & 89.87 & 64.11 & 95.41 & 10.48 \\
         & CB SS & 68.72 & 87.28 & 11.51 & 90.34 & 65.34 & 98.28 & 5.23 \\
        \midrule
        \multirow{5}{*}{\textlangle RSN, RSN\textrangle} & CB CE \cite{Cui_CVPR_2019} & 68.72 & 86.48 & 23.35 & 79.97 & 65.63 & 99.87 & 0.58 \\
		 & CB Focal \cite{Cui_CVPR_2019} & 68.72 & 85.63 & 22.75 & 81.16 & 67.48 & 99.47 & 2.01 \\
         & SS & 68.72 & 85.60 & 22.50 & 81.92 & 68.08 & 80.44 & 40.85 \\
         & CB SS & 68.72 & 85.55 & 22.28 & 82.39 & 68.84 & 81.93 & 40.16 \\
		\bottomrule	
	\end{tabular}}
\end{table}

\begin{table}[!t]
	\centering
	\caption{\label{tbl:naive_balanced}
	    Effects of up-weighting the losses. $w^{+}$ and $w^{-}$ are denoted as the weights for the cross entropy losses w.r.t. true samples and negative samples, respectively.
	}
	\adjustbox{width=1\columnwidth}{
	\begin{tabular}{C{8ex} L{10ex} C{15ex} C{10ex} C{8ex} C{8ex} C{8ex} C{8ex}}
		\toprule
		\textbf{Loss} & ($w^{+}$, $w^{-}$) & \textbf{FPR-95\%-TPR$\downarrow$} & \textbf{AUPR-Error$\uparrow$} & \textbf{AUPR-Success$\uparrow$} & \textbf{AUC$\uparrow$} & \textbf{TPR$\uparrow$} & \textbf{TNR$\uparrow$} \\
		\cmidrule(lr){1-1} \cmidrule(lr){2-2} \cmidrule(lr){3-3} \cmidrule(lr){4-4} \cmidrule(lr){5-5} \cmidrule(lr){6-6} \cmidrule(lr){7-7} \cmidrule(lr){8-8}
		CE & (1, 1) & 93.01 & 15.80 & 84.25 & 51.62 & 99.99 & 0.02 \\
        CE & (1, 5) & 93.54 & 15.57 & 84.47 & 52.09 & 97.86 & 2.98 \\
        CE & (1, 10) & 93.49 & 15.66 & 84.36 & 51.48 & 46.04 & 55.52 \\
        CE & (1, 15) & 94.80 & 16.09 & 83.93 & 50.77 & 5.06 & 94.52 \\
        CE & (1, 20) & 93.90 & 15.98 & 84.08 & 51.53 & 0.26 & 99.52 \\
        SS & (1, 1) & 80.48 & 10.26 & 93.01 & 73.68 & 87.52 & 38.27 \\
		\bottomrule	
	\end{tabular}}
\end{table}

\REVISION{Up-weighting (down-weighting) the losses w.r.t. negative (positive) samples is a naive strategy in the imbalanced classification. It is interesting to see if this simple strategy is able to address the problem of predicting trustworthiness.
Let $w^{+}$ and $w^{-}$ be the weights for the cross entropy losses w.r.t. positive samples and negative samples, respectively.
The experimental results are in \tabref{tbl:naive_balanced}.
As we can see, up-weighting $w^{-}$ with various values does not achieve desired performance. In contrast, applying the class-balanced strategy for re-weighting yields much better results than the naive up-weighting strategy. Moreover, the proposed loss achieves better performance than the naive up-weighting strategy. 
}

\section{Analysis of Gradients}

\REVISION{It is interesting to know whether there are vanishing gradient issues or numerical stability issues due to the exponent in the proposed loss function. We compute the averaged $\|\frac{\partial \ell}{\partial x^{out}}\|$, where $x^{out} \in \mathbf{R}^{k}$ is the dimension of the output feature of the oracle backbone. For the case of using ViT as backbone, $k=768$. The results with setting \textlangle ViT, ViT\textrangle are reported in \tab \ref{tbl:numerical_stability}. 
Although the proposed loss function uses the exponential function, it only involves a small range (e.g., $[\exp(-\alpha^{+}), \exp(\alpha^{+})]$ or $[\exp(-\alpha^{-}), \exp(\alpha^{-})]$) in the exponential function. In this range, the gradients are less likely to change dramatically.
}

\begin{table}[!t]
	\centering
	\caption{\label{tbl:numerical_stability}
	    Numerical stability of gradients. \textlangle ViT, ViT\textrangle is used for the analysis.
	}
	\adjustbox{width=.5\columnwidth}{
	\begin{tabular}{L{7ex} C{8ex} C{8ex} C{8ex} C{8ex}}
		\toprule
		Magnitude & CE & Focal & TCP & SS  \\
		\cmidrule(lr){1-1} \cmidrule(lr){2-2} \cmidrule(lr){3-3} \cmidrule(lr){4-4} \cmidrule(lr){5-5}
		$\|\frac{\partial \ell}{\partial x^{out}}\|$ & 0.0068 & 0.0041 & 0.0039 & 0.0149 \\
		\bottomrule	
	\end{tabular}}
\end{table}




\section{Inference}

\REVISION{Once the training process for the trustworthiness predictor is done, the inference by the trustworthiness predictor is efficient. Specifically, the inference time used for classification is 1.64 milliseconds per image, while the inference time used for predicting trustworthiness is 1.53 milliseconds per image. Predicting trustworthiness is slightly faster than predicting labels. This is because the classification is a 1000-way prediction while predicting trustworthiness is a 1-way prediction (I.e., the output is a scalar). As the trustworthiness predictor and classifier are separate, it is possible to predict trustworthiness and labels in parallel to further improve efficiency in practice.}

\bibliographystyle{unsrt}
\bibliography{sec/references}